\theoremstyle{definition}  
\newtheorem{lemma}{Lemma}
\newtheorem{assumption}{Assumption}
\newtheorem{proposition}{Proposition}
\theoremstyle{plain}
\newtheorem{theorem}{Theorem}
\newtheorem{definition}{Definition}
\xpatchcmd{\proof}{\itshape}{\normalfont\proofnameformat}{}{}
\newcommand{\proofnameformat}{\bfseries}
\newcommand{\pref}[1]{\prettyref{#1}}
\newcommand{\pfref}[1]{Proof of \prettyref{#1}}
\newcommand{\savehyperref}[2]{\texorpdfstring{\hyperref[#1]{#2}}{#2}}
\DeclarePairedDelimiter{\abs}{\lvert}{\rvert} %
\DeclarePairedDelimiter{\brk}{[}{]}
\DeclarePairedDelimiter{\crl}{\{}{\}}
\DeclarePairedDelimiter{\prn}{(}{)}
\DeclarePairedDelimiter{\nrm}{\|}{\|}
\DeclarePairedDelimiter{\tri}{\langle}{\rangle}
\DeclareMathOperator{\En}{\mathbb{E}}
\DeclareMathOperator*{\argmin}{arg\,min} 
\newcommand{\wh}[1]{\widehat{#1}}
\def\ddefloop#1{\ifx\ddefloop#1\else\ddef{#1}\expandafter\ddefloop\fi}
\def\ddef#1{\expandafter\def\csname bb#1\endcsname{\ensuremath{\mathbb{#1}}}}
\def\ddefloop#1{\ifx\ddefloop#1\else\ddef{#1}\expandafter\ddefloop\fi}
\def\ddef#1{\expandafter\def\csname b#1\endcsname{\ensuremath{\mathbf{#1}}}}
\def\ddef#1{\expandafter\def\csname c#1\endcsname{\ensuremath{\mathcal{#1}}}}
\def\ddef#1{\expandafter\def\csname h#1\endcsname{\ensuremath{\widehat{#1}}}}
\def\ddef#1{\expandafter\def\csname hc#1\endcsname{\ensuremath{\widehat{\mathcal{#1}}}}}
\def\ddef#1{\expandafter\def\csname t#1\endcsname{\ensuremath{\widetilde{#1}}}}
\def\ddef#1{\expandafter\def\csname tc#1\endcsname{\ensuremath{\widetilde{\mathcal{#1}}}}}
\newcommand{\ls}{\ell}
\newcommand{\ind}{\mathbbm{1}}    
\newcommand{\eps}{\epsilon}
\newcommand{\veps}{\varepsilon}
\newcommand{\ldef}{\vcentcolon=}
\newcommand{\rdef}{=\vcentcolon}
\newcommand{\mb}[1]{\boldsymbol{#1}}
\newcommand{\trn}{\intercal}
\renewcommand{\trn}{\dagger}
\renewcommand{\paragraph}[1]{\par\noindent\textbf{#1}\hspace{5pt}}
\newcommand{\algcomment}[1]{\textcolor{blue!70!black}{\footnotesize{\texttt{\textbf{//
					#1}}}}}
\newcommand{\Thetastar}{\Theta^{\star}}
\newcommand{\fstar}{f^{\star}}
\newcommand{\fhat}{\wh{f}}
\newcommand{\Radoff}{\mathfrak{R}^{\mathsf{o}}_{n}}
\newcommand{\ges}{g.e.s.\xspace}
\newcommand{\radius}{\rho}
\newcommand{\relu}{\textsf{relu}}
\newcommand{\zeroes}{\mb{0}}
\newcommand{\Rk}{R_{K,\rho}}
\newcommand{\glmtron}{GLMtron\xspace}
\newcommand{\normgen}{\square}
\newcommand{\opgen}[1][\square]{\op(#1)}
\renewcommand{\ind}[1]{^{(#1)}}
\newcommand{\bigO}{\cO}
\newcommand{\bigOtilde}{\tilde{\cO}}
\renewcommand{\trn}{\top}
\newcommand{\psdleq}{\preceq}
\newcommand{\psdgeq}{\succeq}
\newcommand{\psdgt}{\succ}
\newcommand{\approxleq}{\lesssim}
\newcommand{\op}{\mathsf{op}}
\newcommand{\subg}{\mathsf{subG}}
\newcommand{\trace}{\mathsf{tr}}
\newcommand{\Mc}{\mathcal{M}}
\newcommand{\Fc}{\mathcal{F}}
\newcommand{\Nc}{\mathcal{N}}
\newcommand{\la}{\langle}
\newcommand{\ra}{\rangle}
\newcommand{\Thetat}{\Theta^{(t)}}
\newcommand{\Thetatt}{\Theta^{(t+1)}}
\newcommand{\Ex}{\mathbb{E}}
\newcommand{\Rb}{\mathbb{R}}
\newcommand{\Thetas}{\Theta^{\star}}
\newcommand{\xx}{\sum_{i=1}^n x_i x_i^{\trn}}
\newcommand{\err}{\mu}
\newcommand{\lnorm}{\left\|}
\newcommand{\rnorm}{\right\|}
\newcommand{\lbr}{\left[}
\newcommand{\rbr}{\right]}
\newcommand{\lr}{\eta_t}
\title{\LARGE Learning nonlinear dynamical systems from a single
  trajectory}
\author{{\large Dylan J. Foster\quad\quad{}Alexander Rakhlin\quad\quad{}Tuhin Sarkar}\vspace{.25em}\\
{\normalsize Massachusetts Institute of Technology}\vspace{.25em}\\
\texttt{\normalsize\{dylanf,\,rakhlin,\,tsarkar\}@mit.edu}}
\date{}
\begin{document}
	\maketitle
	\begin{abstract}

          We introduce algorithms for learning nonlinear dynamical systems of the
		form $x_{t+1}=\sigma(\Thetastar{}x_t)+\veps_t$, where $\Thetastar$ is a weight
		matrix, $\sigma$ is a nonlinear link function, and
		$\veps_t$ is a mean-zero noise process. We
		give an algorithm that recovers the weight matrix $\Thetastar$ from a single trajectory
		with optimal sample complexity and linear running time. The algorithm
		succeeds under weaker statistical assumptions than in previous work, and in
		particular i) does not require a bound on the spectral norm of the weight
		matrix $\Thetastar$ (rather, it depends on a generalization of the
		spectral radius) and ii) enjoys guarantees for
                non-strictly-increasing link
                functions such as the ReLU. Our analysis has two key
		components: i) we give a general recipe whereby global stability for
		nonlinear dynamical systems can be used to certify that the state-vector covariance is well-conditioned, and
		ii) using these tools, we extend well-known algorithms for efficiently learning generalized
		linear models to the dependent setting.
              \end{abstract}

	\section{Introduction}
	\label{sec:intro}

We consider nonlinear dynamical systems of the form
\begin{equation}
x_{i+1} = f^{\star}(x_i) + \veps_i,\label{eq:system}
\end{equation}
where $\fstar:\bbR^{d}\to\bbR^{d}$ is an unknown function, $\crl*{\veps_{i}}_{i=0}^{n}$ is an independent, mean-zero noise
process in $\bbR^{d}$ and $x_0=\zeroes$. Dynamical systems are ubiquitous in applied mathematics, engineering,
and computer science, with applications including control systems,
time series analysis, econometrics, and natural language
processing. The recent success of
deep reinforcement learning
\citep{mnih2015human,silver2017mastering,lillicrap2015continuous} has
led to renewed interest in developing efficient algorithms for
learning complex nonlinear systems such as \pref{eq:system} from data.

In this paper, we focus on the task of estimating the dynamics
$\fstar$ given a single trajectory $\crl*{x_i}_{i=1}^{n+1}$, where
$\fstar$ belongs to a known function class $\cF$. We focus on the
following questions:
\begin{itemize}
	\item What is the sample complexity of recovering the dynamics $\fstar$? How is it determined by $\cF$?
	\item What algorithmic principles enable computationally efficient recovery of
	the dynamics?
\end{itemize}
For linear dynamical systems where $f^{\star}(x)=\Thetastar{}x$, subroutines for efficiently estimating
dynamics from data form a core building block of
\emph{certainty-equivalent} control, which enjoys optimal sample complexity guarantees for this simple setting
\citep{mania2019certainty,simchowitz2020naive}. While linear dynamical
systems have been the subject of intense recent interest
\citep{dean2017sample,hazan2017learning,tu2017least,hazan2018spectral,simchowitz2018learning,sarkar2018fast,simchowitz2019learning,mania2019certainty,sarkar2019fast},
nonlinear dynamical systems are comparatively poorly understood.

\subsection{On the performance of least squares}
On the algorithmic side, a natural starting point for learning the system \pref{eq:system} is the least squares estimator
\begin{equation}
\label{eq:least_squares}
\wh{f}_n = \argmin_{f\in\cF}\frac{1}{n}\sum_{i=1}^{n}\nrm*{f(x_i)-x_{i+1}}^{2},
\end{equation}
where $\nrm*{\cdot}$ denotes the entrywise $\ls_2$ norm. A basic
observation is that the in-sample prediction error (or, denoising error) of this
estimator is bounded by the so-called \emph{offset Rademacher
	complexity} introduced by \cite{rakhlin2014online,liang2015learning}:
\begin{proposition}
	\label{prop:offset_rad}
	The least-squares estimator \pref{eq:least_squares} guarantees
	\begin{equation}
	\En_{\veps}\brk*{\frac{1}{n}\sum_{i=1}^{n}\nrm*{\fhat_n(x_i)-\fstar(x_i)}^{2}} \leq{}
	\En_{\veps}\sup_{g\in\cG}\brk*{\frac{1}{n}\sum_{i=1}^{n} 4\tri*{\veps_i,g(x_i)}-\nrm*{g(x_i)}^{2}}
	\rdef \Radoff(\cG)
	,\label{eq:offset_rad}
	\end{equation}
	where $\cG=\cF-\fstar$ and $\En_{\veps}$ denotes expectation with
	respect to $\crl*{\veps_i}_{i=1}^{n}$.\footnote{Note that $x_i$ is measurable with respect to the $\sigma$-algebra $\sigma(\veps_1,\ldots,\veps_{i-1})$.}
\end{proposition}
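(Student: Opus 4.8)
The plan is to run the classical ``basic inequality'' argument for least squares, with the one twist that allows the offset (the $-\nrm*{g(x_i)}^2$ term) to absorb the usual factor-of-two loss. First I would set $\hat{g}_n \ldef \fhat_n - \fstar$, which lies in $\cG = \cF - \fstar$, and record that $x_{i+1} - f(x_i) = \veps_i - (f - \fstar)(x_i)$ for any $f \in \cF$. Feeding $f = \fhat_n$ and $f = \fstar$ into the objective and using that $\fhat_n$ is a minimizer yields
\begin{equation*}
\frac{1}{n}\sum_{i=1}^n \nrm*{\hat{g}_n(x_i) - \veps_i}^2 \;\leq\; \frac{1}{n}\sum_{i=1}^n \nrm*{\veps_i}^2.
\end{equation*}
Expanding the left-hand square and cancelling the $\nrm*{\veps_i}^2$ terms gives the intermediate bound $\frac{1}{n}\sum_i \nrm*{\hat{g}_n(x_i)}^2 \leq \frac{2}{n}\sum_i \tri*{\veps_i, \hat{g}_n(x_i)}$.

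The next step is the elementary rearrangement that produces the offset complexity. Writing $A \ldef \frac1n\sum_i \nrm*{\hat g_n(x_i)}^2$ and $B \ldef \frac1n\sum_i \tri*{\veps_i,\hat g_n(x_i)}$, the inequality just derived says $A \le 2B$, equivalently $2A \le 4B$, equivalently $A \le 4B - A$. Thus
\begin{equation*}
\frac{1}{n}\sum_{i=1}^n \nrm*{\hat{g}_n(x_i)}^2 \;\leq\; \frac{1}{n}\sum_{i=1}^n\brk*{4\tri*{\veps_i,\hat{g}_n(x_i)} - \nrm*{\hat{g}_n(x_i)}^2} \;\leq\; \sup_{g\in\cG}\frac{1}{n}\sum_{i=1}^n\brk*{4\tri*{\veps_i,g(x_i)} - \nrm*{g(x_i)}^2}.
\end{equation*}
Taking $\En_\veps$ of both sides and recalling that $\nrm*{\hat g_n(x_i)}^2 = \nrm*{\fhat_n(x_i) - \fstar(x_i)}^2$ gives exactly \pref{eq:offset_rad}.

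The only point that requires care — and the reason the supremum over $\cG$ is taken \emph{before} passing to the expectation — is that $\fhat_n$, hence $\hat g_n$, is a measurable function of the entire trajectory $\crl*{x_i}$ and therefore of all the noise variables $\crl*{\veps_i}$; in particular $\hat g_n(x_i)$ and $\veps_i$ are not independent, so one cannot simply argue $\En_\veps\tri*{\veps_i,\hat g_n(x_i)} = 0$. Bounding the (random, data-dependent) function $\hat g_n$ by the worst case over the fixed class $\cG$ removes this dependence and leaves a quantity that no longer involves the estimator. I do not anticipate any genuine obstacle here: every step is an identity or an application of the defining optimality of $\fhat_n$, and the inequality $A \le 4B - A$ is the standard device by which offset Rademacher complexity improves on the ordinary ($2$-lossy) comparison.
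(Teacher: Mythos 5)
Your proposal is correct and is essentially the paper's own argument: the basic inequality from optimality of $\fhat_n$, the rearrangement $A\le 2B \Rightarrow A \le 4B - A$, passing to the supremum over $\cG$ pointwise, and then taking expectation. The remark about why one takes the supremum before the expectation (dependence of $\fhat_n$ on all the noise) is a sound clarification but does not change the argument.
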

The proof is a simple consequence of the basic inequality for least squares \citep{Sara00}. The offset Rademacher process captures the notion of
localization/self-normalization \citep{bartlett2005local,koltchinskii2006local,pena2008self}: The negative quadratic term penalizes
fluctuations from the term involving the random variables
$\crl*{\veps_{i}}_{i=1}^{n}$, leading to fast rates for prediction
error. In particular, if $\veps_i$ has subgaussian parameter
$\tau$\footnote{See \pref{sec:notation}.} and $f^{\star}(x)=\Thetastar{}x$ is a familiar linear dynamical sytem, we have $\Radoff(\cG^{\mathrm{linear}}) \lesssim \tau^{2}\cdot{}\frac{d^{2}}{n}$. The utility of this approach, however, lies in the fact that it easily extends beyond the linear setting. For example, if $\cG$ consists of a class of \emph{generalized linear} dynamical systems of the form
\begin{equation}
x_{i+1}=\sigma(\Thetastar{}x_i) + \veps_{i},\label{eq:glm}
\end{equation}
where $\sigma:\bbR^{d}\to\bbR^{d}$ is a $1$-Lipschitz link function,
we enjoy a similar guarantee: $\Radoff(\cG^{\mathrm{glm}}) \lesssim{}
\tau^{2}\cdot{}\frac{d^{2}}{n}$. More generally, even though \pref{eq:offset_rad} has a complex
dependent structure (the variables $\veps_1,\ldots,\veps_n$
determine the evolution of $x_{1},\ldots,x_{n}$ via \pref{eq:system}), it is possible to bound
the value for general function classes $\cF$ such as neural networks,
kernels, decision trees using sequential covering numbers and chaining
techniques introduced in \cite{RakSriTew14jmlr,rakhlin2014online}. However, there are number of important questions that
remain if one wishes to use this type of learning guarantee for real-world control applications.
\begin{itemize}
	\item \emph{Efficient algorithms.} Even for simple nonlinear
          systems such as the generalized linear model \pref{eq:glm},
          computing the least-squares estimator
          \pref{eq:least_squares} may be computationally intractable
          in the worst case. For what classes of interest can we obtain algorithms that are both computationally efficient \emph{and} sample-efficient?
	\item \emph{Out-of-sample performance.} The prediction error guarantee \pref{eq:offset_rad} only concerns performance on the realized sequence $\crl*{x_{t}}_{t=1}^{n}$. For control applications such as certainty-equivalent control, it is essential to bound the performance of the estimator $\fhat_n$ on counterfactual sequences in which the data generating process is $x_{t+1}=\fhat_n(x_{t})+\veps_{t}$ (i.e., error in simulation). For linear and generalized linear systems, a sufficient condition for such a guarantee is to recover the weight matrix $\Thetastar$ in parameter norm. Under what conditions on the data generating process can we obtain such guarantees?
\end{itemize}

\subsection{Contributions.}
We provide a new efficient algorithm for recovery of \emph{generalized
  linear systems} \pref{eq:glm}. Our algorithm runs in nearly-linear
time and obtains optimal $O\prn*{\sqrt{d^2/n}}$ sample complexity for
recovery in Frobenius norm. Conceptually, our key technical observations are as follows:
\begin{itemize}
	\item We provide a general recipe based on Lyapunov functions for proving that data remains well-conditioned/nearly isometric for stable dynamical systems, without assuming linearity.
	\item We show that efficient algorithms for learning generalized linear models in the i.i.d. setting \citep{kalai2009isotron,kakade2011efficient} cleanly port to the dependent setting. Here the key insight is that the empirical counterparts of simple non-convex
	losses arising from generalized linear models remain well-behaved even under dependent data.
      \end{itemize}
      Our algorithm improves prior work on two fronts:
      First, we do not require a bound on the spectral norm of
      $\Thetastar$, and instead require a bound on a parameter that
      generalizes the notion of the spectral radius to the nonlinear
      setting. Second, we can recover $\Thetastar$ even when the link
      function $\sigma$ is the ReLU, eschewing invertibility
      assumptions from previous results.


	\subsection{Related work}
	\label{sec:related_work}
	Learning guarantees for autoregressive processes have a long history in statistics, though early results for nonlinear systems mainly concern prediction error as in \pref{eq:offset_rad}, and do not consider algorithmic issues \citep{baraud2001adaptive,van2002hoeffding}.

Generalized linear systems \pref{eq:glm} subsume linear dynamical systems, which are fundamental topic in control theory. System identification for the linear setting has been studied since the early days of control \citep{aastrom1971system,ljung1998system,campi2002finite,vidyasagar2006learning}, and is closely related to LQR control. We build on a recent line of work providing non-asymptotic/finite-sample guarantees for the LQR, both for system identification \citep{dean2017sample, tu2017least, simchowitz2018learning,sarkar2018fast,simchowitz2019learning,sarkar2019fast} and (offline and online) control \citep{abbasi2011regret,dean2017sample,dean2018regret,mania2019certainty}. These approaches leverage the rich structure available in the linear setting (in particular, the Riccati equations). While we cannot take advantage of such structure, our Lyapunov approach to establishing well-conditioned empirical designs may be thought of as a natural extension of these structural results to the generalized linear setting \pref{eq:glm}.

Our results are closely related to recent work of \citep{oymak2018stochastic,bahmani2019convex,sattar2020non}. In particular, the concurrent work of \cite{sattar2020non} considers a more general setting and provides guarantees very similar to our own using complementary techniques; we provide a detailed comparison at the end of \pref{sec:alg_overview}. The results of \cite{oymak2018stochastic} and \cite{bahmani2019convex} consider a slightly different form of generalized linear dynamical system inspired by recurrent neural networks, which takes the form $x_{t+1} = \sigma(A^{\star}x_{t} + B^{\star}u_t)$, where $u_{t}$ is an observed noise process (representing a control signal) and the link function $\sigma$ is known and invertible. The key difference between this setup and our own is that because $\sigma$ is invertible, and because both $\crl*{x_t}$ and $\crl*{u_t}$ are observed, the problem reduces to noiseless linear regression: $\sigma^{-1}(x_{t+1}) = A^{\star}x_{t} + B^{\star}u_t$. In particular, since the regression problem is noiseless, their sample complexity guarantees allow for exact recovery once the number of samples reaches a critical threshold; in contrast, for our noisy setting, only approximate recovery is possible given finite samples.

Lastly, we mention that our setting is related to the work of
\cite{hall2018learning} for learning sparse generalized linear autoregressive processes. These results rely on fairly strong assumptions on the data generating process ($x_{t+1}$ is conditionally Poisson or Bernoulli given $\Thetastar{}x_t$), whereas our results, owing to the Lyapunov approach, work for fairly general classes of systems and noise processes.
\subsection{Notation.}\label{sec:notation} Throughout this paper we use $c>0$, $c'>0$, and
$c''>0$ to denote absolute numerical constants whose value may vary
depending on context. We use non-asymptotic big-oh notation, writing
$f=\bigO(g)$ if there exists a numerical constant such that
$f(x)\leq{}c\cdot{}g(x)$ and $f=\bigOtilde(g)$ if
$f\leq{}c\cdot{}g\max\crl*{\log{}g,1}$. We say a random vector $v\in\bbR^{d}$ is subgaussian with variance proxy $\tau^{2}$ if $
\sup_{\nrm{\theta}=1}\sup_{p\geq 1}\left\{p^{-1/2}\left(\Ex[\left|\la v, \theta \ra\right|^{p}]\right)^{1/p}\right\} = \tau$ and $\Ex[v] = \zeroes$, and we denote this by $v\sim\subg(\tau^{2})$. We
let $\nrm*{\cdot}_{\op}$ denote the spectral norm and
$\nrm*{\cdot}_{F}$ denote the Frobenius norm. For a convex set $X$, we let $\mathrm{Proj}_{X}(\cdot)$ denote euclidean projection onto the set. Unless otherwise stated, all dynamical systems considered in this
paper are assumed to start from $x_0=\zeroes$.


	\section{Stability, Lower Isometry, and Recovery}
	\label{sec:stability}
        \label{sec:known_link}

\newcommand{\Constf}{C_{f}}
\newcommand{\empcov}{\wh{\Sigma}_n}

Well-conditioned data plays a fundamental role in statistical
estimation. For linear regression, it is well-known that the minimax
rates for parameter recovery are governed by the spectrum of empirical
design matrix $X_n\in\bbR^{n\times{}d}$ formed by stacking
$x_1,\ldots,x_n$ as rows \citep{hastie2015statistical,wainwright2019high}). In particular, letting
$\empcov=X_n^{\trn}X_n$ denote the empirical covariance, a sufficient
condition for recovery is the \emph{lower isometry} property
\begin{equation}
 \empcov \psdgeq{}\tfrac{1}{2}I;\label{eq:lower_isometry}
\end{equation}
See \cite{lecue2018regularization} for a
contemporary discussion. In this section, we develop tools for proving
lower isometry guarantees for nonlinear dynamical systems such as
\pref{eq:system}. To begin, we make a mild assumption on the noise process.
\begin{assumption}
	\label{ass:noise}
	The noise variables $\crl*{\veps_{t}}_{t=1}^{n}$ are
	independent. Each increment is isotropic (zero-mean, with
	$\En\brk*{\veps_i\veps_i^{\trn}}=I$) and satisfies
	$\veps_{t}\sim\subg(\tau^{2})$.\footnote{The assumption that the noise process $\veps_{t}$ has
		identity covariance serves only to keep notation compact; our results
		transparently extend to general covariance $\Sigma$ under the
		standard assumption that $\Sigma^{-1}\veps_t$ is
		subgaussian. Likewise, our results extend the dependent setting as long as each
		increment is still conditionally mean-zero and subgaussian.}
\end{assumption}
While \pref{ass:noise} ensures that each increment $\veps_i$ is
well-behaved, it is
not clear a-priori whether the empirical design matrix should enjoy
favorable conditioning---indeed, the observations
$\crl*{x_i}_{i=1}^{n+1}$ evolve from the noise process in a complex
dependent fashion. In general, the behavior of the empirical
design matrix will heavily depend on the system $\fstar$. Here we show
that classical results in control theory on \emph{exponential stability} of
the system $\fstar$ provide sufficient conditions for both
upper and lower control of the spectrum of the empirical design
matrix. While our guarantees apply to the noisy system
\pref{eq:system}, our assumptions depend on the behavior of the system
in absence of noise:
\begin{equation}
  \label{eq:noiseless}
  x_{t+1}=f(x_t),
\end{equation}
where $x_0=\zeroes$.
\begin{definition}[Global exponential stability]
	\label{def:ges}
	A noiseless system \pref{eq:noiseless} given by map $f : \Rb^d \rightarrow \Rb^d$
	is globally exponentially stable (\ges) with respect to a norm
	$\nrm*{\cdot}_{\normgen}$ if there exist constants $C_f > 0$
	and  $\rho_f < 1$ depending only on $f$ such that for all $k\geq{}1$,
	\begin{equation}
	\label{eq:ges}
	\|f^{k}\|_{\opgen} \leq C_f\rho_f^{k},
	\end{equation}
	where $\|f\|_{\opgen} \ldef \sup_{x\in\bbR^{d}}
	\frac{\nrm*{f(x)}_{\normgen}}{\nrm*{x}_{\normgen}}$.\footnote{When $f:\bbR^{d}\to\bbR^{d}$, we let $f^{k}$ denote the $k$-times composition of $f$, i.e. $f^{k}=\underbrace{f\circ\cdots{}\circ{}f}_{\text{$k$ times}}$.}
\end{definition}
In this paper, we focus on systems where $\fstar$ satisfies the \ges
property, and where this is certified by a quadratic Lyapunov function.
\begin{definition}
	\label{def:krho_ges}
	A map $f:\bbR^{d}\to\bbR^{d}$ is 
	$(K,\rho)$-\ges if there exists a matrix $K\psdgt{}0$ and constant
	$0\leq{}\rho<1$ such that for all $x\in\bbR^{d}$,
	\begin{equation}
	\label{eq:krho}
	\nrm*{f(x)}_{K}^{2} \leq{} \rho\cdot\nrm*{x}_{K}^{2},
	\end{equation}
	where $\nrm*{x}_{K}\ldef\sqrt{\tri*{x,Kx}}$.
      \end{definition}
Any $(K,\rho)$-\ges map satisfies \pref{eq:ges} with
$\nrm*{\cdot}_{\normgen}=\nrm*{\cdot}_{K}$, $C_f=1$, and
$\rho_f=\rho^{1/2}$. The equation
\pref{eq:krho} is homogeneous under rescaling, and consequently we will assume without loss of
generality that $K\psdgeq{}I$ for the remainder of the paper.

In general, finding certificates of stability for nonlinear dynamical
systems is a difficult problem. Providing necessary and sufficient
conditions for stability for rich classes of nonlinear dynamical
systems remains an active area of research, with most development
proceeding on a fairly case-by-case basis. We develop a general
reduction from lower and upper isometry to $(K,\rho)$-stability, which allows us to leverage
developments in control in a black-box fashion as opposed to having to
prove concentration results case-by-case. Our main result here is
\pref{thm:ges_stable}, which shows that any $(K,\rho)$-\ges system
enjoys both upper and lower isometry.
\begin{theorem}
  \label{thm:ges_stable}
  Consider the noisy system \pref{eq:system}, and let noise process
  satisfy \pref{ass:noise}. Suppose the map $\fstar$
satisfies the $(K,\rho)$-\ges property \pref{def:krho_ges} in the absence of noise. Then for any $\delta>0$,
	once $n \geq{}
	cd\cdot\tfrac{\tau^{4}}{(1-\rho)^{2}}\log(R_{K,\rho}/\delta+1)$,
	with probability at least $1-\delta$ the iterates
        $\crl*{x_i}_{i=1}^{n}$ of the noisy
        system satisfy
	\begin{equation}
	\label{eq:isometry}
	\frac{1}{4}\cdot{}I\psdleq{}\frac{1}{n}\sum_{i=1}^{n}x_ix_i^{\trn}
	\psdleq 4R_{K,\rho}\cdot{}I,
	\end{equation}
	where $R_{K,\rho}\ldef\tfrac{\trace(K)}{1-\rho}$ is the effective
	radius of the system and $c>0$ is an absolute constant.
\end{theorem}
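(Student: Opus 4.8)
The plan is to prove the two bounds in \pref{eq:isometry} separately, using the $(K,\rho)$-\ges property to control the growth of the iterates $\crl*{x_i}$. The key intermediate quantity is the "deterministic drift" term: writing $x_{i+1}=\fstar(x_i)+\veps_i$, the Lyapunov contraction $\nrm*{\fstar(x)}_K^2\leq\rho\nrm*{x}_K^2$ gives, after unrolling, that $\En\brk*{\nrm*{x_i}_K^2}$ is bounded by a geometric series in $\rho$ times the per-step noise injection $\En\brk*{\nrm*{\veps_j}_K^2}=\trace(K)$, hence $\En\brk*{\nrm*{x_i}_K^2}\leq\trace(K)/(1-\rho)$ for all $i$. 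Since $K\psdgeq I$, this already controls $\En\brk*{\nrm*{x_i}^2}$, and (by the $(K,\rho)$ bound applied to partial compositions, plus subgaussianity of the $\veps_j$) one gets that each $x_i$ is itself subgaussian with variance proxy $\bigO(\tau^2\cdot\trace(K)/(1-\rho))=\bigO(\tau^2 R_{K,\rho})$ in an appropriate sense. This is what yields the upper bound: $\frac1n\sum_i x_ix_i^\trn\psdleq 4R_{K,\rho}I$ should follow from a matrix concentration / covering argument over directions $\theta$ with $\nrm*{\theta}=1$, controlling $\frac1n\sum_i \tri*{x_i,\theta}^2$ around its expectation, with the stated sample size $n\gtrsim d\tau^4(1-\rho)^{-2}\log(R_{K,\rho}/\delta+1)$ buying the union bound over a net.

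For the lower bound $\frac14 I\psdleq\frac1n\sum_i x_ix_i^\trn$, I would fix a unit vector $\theta$ and lower-bound $\frac1n\sum_i \tri*{x_i,\theta}^2$. The natural decomposition is $\tri*{x_{i+1},\theta}=\tri*{\veps_i,\theta}+\tri*{\fstar(x_i),\theta}$, where $\tri*{\fstar(x_i),\theta}$ is measurable with respect to $\sigma(\veps_1,\ldots,\veps_{i-1})$ while $\veps_i$ is a fresh isotropic increment; so conditionally, $\En\brk*{\tri*{x_{i+1},\theta}^2\mid\cF_{i-1}}=1+\tri*{\fstar(x_i),\theta}^2\geq 1$. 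Summing and applying a martingale concentration inequality (e.g. a Freedman-type bound, using that $\tri*{\veps_i,\theta}$ is subgaussian and that the cross terms $\tri*{\veps_i,\theta}\tri*{\fstar(x_i),\theta}$ form a martingale difference sequence whose conditional variance is controlled by the already-established bound $\En\brk*{\nrm*{x_i}^2}\leq R_{K,\rho}$), one gets $\frac1n\sum_i\tri*{x_i,\theta}^2\geq\frac12-(\text{deviation})\geq\frac14$ with high probability, provided $n$ is large enough to kill the deviation; again a union bound over an $\epsilon$-net of the sphere (paying the $\log$ factor) upgrades this to the operator-norm statement.

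The main obstacle is the lower bound, and specifically handling the cross term $\frac2n\sum_i\tri*{\veps_i,\theta}\tri*{\fstar(x_i),\theta}$ and the self-normalization: $\tri*{\fstar(x_i),\theta}$ can in principle be as large as $\nrm*{\fstar(x_i)}$, so a crude bound on the martingale's conditional variance would scale with $R_{K,\rho}$ and force $n$ to depend badly on $R_{K,\rho}$ rather than only logarithmically. The fix is to use the geometric-series structure more carefully — the contribution of $\veps_j$ to $x_i$ decays like $\rho^{(i-j)/2}$ in $K$-norm — so that the total conditional variance of the cross-term martingale, summed over $i$, telescopes to something like $n\cdot\trace(K)/(1-\rho)$ with only a $(1-\rho)^{-1}$ (not $R_{K,\rho}$) penalty; combined with the subgaussian tail this gives a deviation of order $\sqrt{d\tau^4(1-\rho)^{-2}\log(\cdot)/n}\cdot\sqrt{R_{K,\rho}}$... and one checks the stated $n$ makes this smaller than a constant. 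I would also need a truncation / good-event step: restrict to the event that $\max_i\nrm*{x_i}^2\lesssim R_{K,\rho}\log(n/\delta)$ (which holds whp by the subgaussianity of the $x_i$ and a union bound), so that all the concentration steps run on bounded quantities; this is where the $R_{K,\rho}/\delta$ inside the logarithm enters. The upper bound is comparatively routine once the per-iterate subgaussianity of $x_i$ with proxy $\bigO(\tau^2 R_{K,\rho})$ is in hand.
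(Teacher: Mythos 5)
The main gap is in your lower-bound argument, and it is exactly the point the theorem's sample complexity hinges on. You control the cross term $\frac{2}{n}\sum_i\tri{\veps_i,\theta}\tri{\fstar(x_i),\theta}$ by an additive Freedman-type deviation, and you estimate its total conditional variance as roughly $n\tau^2\trace(K)/(1-\rho)$ --- but $\trace(K)/(1-\rho)$ \emph{is} $R_{K,\rho}$, so there is no ``$(1-\rho)^{-1}$ but not $R_{K,\rho}$'' saving. The resulting deviation of the average is of order $\tau\sqrt{R_{K,\rho}\log(1/\delta)/n}$, and forcing it below a constant requires $n\gtrsim\tau^2R_{K,\rho}\log(1/\delta)$, i.e.\ polynomial rather than logarithmic dependence on $R_{K,\rho}$, which does not prove the theorem as stated. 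Your proposed fix via ``the contribution of $\veps_j$ to $x_i$ decays like $\rho^{(i-j)/2}$'' implicitly assumes a linear superposition of past noise; for a general $(K,\rho)$-\ges map the contraction only controls $\nrm{\fstar(x)}_K$, there is no decomposition of $\fstar(x_i)$ into decaying contributions of the $\veps_j$, and even granting one the variance still carries the $\trace(K)$ factor. The paper closes this gap with self-normalization (\pref{lem:psd_result_2} plus \pref{lem:selfnorm_ineq}): the cross term is bounded by $\gamma\sqrt{\kappa^2+3n/4}$, where $\kappa^2=\sum_i\tri{\fstar(x_i),v}^2$ is the very quantity appearing positively in the expansion of $\sum_i\tri{x_{i+1},v}^2$ and $\gamma^2\lesssim\tau^2d\log(R_{K,\rho}/\delta+1)$; minimizing the quadratic in $\kappa$ yields the $n/4$ bound once $n\gtrsim\gamma^2$, with $R_{K,\rho}$ entering only through the log-determinant (via the weak Markov bound $\sum_i\nrm{\fstar(x_i)}^2\leq nR_{K,\rho}/\delta$ of \pref{lem:weak_ub}). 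Note also that no $\eps$-net over the sphere is needed: the self-normalized bound is an operator-norm bound, so the pointwise inequality holds for all unit vectors simultaneously.

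The upper bound in your plan also has issues. The claim that each $x_i$ is subgaussian with variance proxy $\bigO(\tau^2R_{K,\rho})$ is not justified for nonlinear $\fstar$: the recursion only gives $\nrm{x_i}_K\leq\sum_j\rho^{(i-1-j)/2}\nrm{\veps_j}_K$, a tail bound on the norm, not directional subgaussianity of a nonlinear function of the noise. And even with such tail bounds, a max-over-$i$ or net-based concentration argument either pays Markov's $1/\delta$ or an extra $\log(n/\delta)$ factor on top of $R_{K,\rho}$, overshooting the stated $4R_{K,\rho}$. The paper instead bounds the scalar $\trace\prn{\frac1n\sum_i x_ix_i^{\trn}}$, which dominates the operator norm and requires no uniformity over directions, and removes the $1/\delta$ of the weak bound by the same self-normalization device (\pref{lem:strong_ub}). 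In short, both sides of \pref{eq:isometry} at the stated sample size rely on the self-normalized inequality, which is the missing ingredient in your proposal.
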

The key feature of \pref{thm:ges_stable} is that we only need to
assume the $(K,\rho)$-\ges property on the map $\fstar$ in the absence
of noise, yet the theorem gives a guarantee on the
trajectory generated by the noisy system \pref{eq:system} as long as
\pref{ass:noise} is satisfied.

The proof
has three parts, each of which relies on the machinery of self-normalization. We first use the structure of the
dynamics \pref{eq:system} to show that the lower isometry in
\pref{eq:isometry} holds as soon as we have a weak \emph{upper} bound
on the covariance of the form $\frac{1}{n}\xx \preceq
\frac{B}{\delta}\cdot{}I$, where $\delta$ is the failure
probability. We then show that in
$(K,\rho)$-\ges systems, this condition is satisfied with
$B=R_{K,\rho}$. Finally, the strong upper bound in
\pref{eq:isometry} is attained by using a self-normalized inequality
to boost the weak upper bound and remove the $1/\delta$
factor.

\subsection{Lower isometry for generalized linear systems}

We now provide sufficient conditions under which the generalized
linear systems that are the focus of our main learning results satisfy the \ges
property. We make the following mild regularity assumption on the link function.
\begin{assumption}
  \label{ass:lipschitz_link}
The link function $\sigma:\bbR^{d}\to\bbR^{d}$ has the form $\sigma(x)
\ldef  (\sigma_1(x_1), \hdots, \sigma_d(x_d))$, where each coordinate function $\sigma_i:\bbR\to\bbR$ is non-decreasing,
	$1$-Lipschitz, and satisfies $\sigma_i(0)=0$. 
      \end{assumption}
With this assumption, the following constrained Lyapunov equation
provides a sufficient condition under which the generalized linear system
satisfies the \ges property.
\begin{proposition}
	\label{prop:stability_theta}
	Suppose there exists a \emph{diagonal} matrix $K \psdgt 0$ and scalar
	$\rho<1$ such that
	\begin{equation}
	\label{eq:lyapunov}
	\Theta^{\trn}K\Theta \psdleq{} \rho\cdot{}K.
	\end{equation}
	Then the map $f = \sigma \circ
	\Theta$ is $(K,\rho)$-\ges whenever $\sigma$ satisfies \pref{ass:lipschitz_link}.
      \end{proposition}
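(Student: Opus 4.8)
The plan is to verify the defining inequality \pref{eq:krho} of $(K,\rho)$-\ges directly, i.e.\ to show that $\nrm*{\sigma(\Theta x)}_{K}^{2} \leq \rho\cdot\nrm*{x}_{K}^{2}$ for every $x\in\bbR^{d}$. (Once \pref{eq:krho} holds, the reduction to the norm-based \ges condition \pref{eq:ges}, with $C_f=1$, $\rho_f=\rho^{1/2}$, and $\nrm*{\cdot}_{\normgen}=\nrm*{\cdot}_{K}$, is exactly the remark made right after \pref{def:krho_ges}.) The argument factors into two independent steps: a coordinatewise contraction showing that applying $\sigma$ cannot increase the $K$-norm, and the Lyapunov inequality \pref{eq:lyapunov}, which controls the effect of multiplying by $\Theta$.

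First I would set $y=\Theta x$ and use that $K=\mathrm{diag}(k_1,\dots,k_d)$ with each $k_i>0$, so the squared $K$-norm splits over coordinates:
\[
\nrm*{\sigma(y)}_{K}^{2} \;=\; \sum_{i=1}^{d} k_i\,\sigma_i(y_i)^{2}.
\]
By \pref{ass:lipschitz_link}, $\sigma_i(0)=0$ and $\sigma_i$ is $1$-Lipschitz, hence $\abs*{\sigma_i(y_i)}=\abs*{\sigma_i(y_i)-\sigma_i(0)}\leq\abs*{y_i}$, so $\sigma_i(y_i)^2\leq y_i^2$ for each $i$. Multiplying by $k_i\geq 0$ and summing gives $\nrm*{\sigma(y)}_{K}^{2}\leq\sum_i k_i y_i^2=\nrm*{y}_{K}^{2}$.

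Second, I would bound $\nrm*{y}_{K}^{2}=\nrm*{\Theta x}_{K}^{2}=x^{\trn}\Theta^{\trn}K\Theta x$. The constrained Lyapunov equation \pref{eq:lyapunov} states $\Theta^{\trn}K\Theta\psdleq\rho K$, so $x^{\trn}\Theta^{\trn}K\Theta x\leq\rho\,x^{\trn}Kx=\rho\nrm*{x}_{K}^{2}$. Chaining the two bounds yields
\[
\nrm*{f(x)}_{K}^{2}=\nrm*{\sigma(\Theta x)}_{K}^{2}\leq\nrm*{\Theta x}_{K}^{2}\leq\rho\,\nrm*{x}_{K}^{2},
\]
which is precisely \pref{eq:krho}, completing the proof.

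The only step where care is needed is the coordinatewise split: it relies crucially on $K$ being diagonal (equivalently, on $\sigma$ acting in the eigenbasis of $K$), so that the quadratic form $\nrm*{\cdot}_{K}^{2}$ becomes a nonnegatively-weighted sum of squared coordinates to which the scalar Lipschitz bound applies termwise. For a general $K\psdgt 0$ this decomposition fails, which is exactly why the proposition imposes diagonality. I would also note in passing that monotonicity of the $\sigma_i$ plays no role in this particular argument — only $1$-Lipschitzness and $\sigma_i(0)=0$ are used — even though non-decreasingness is invoked elsewhere in the paper.
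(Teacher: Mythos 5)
Your proof is correct and follows essentially the same two-step argument as the paper: the diagonal structure of $K$ together with $1$-Lipschitzness and $\sigma_i(0)=0$ gives $\nrm*{\sigma(\Theta x)}_K^2\leq\nrm*{\Theta x}_K^2$, and the Lyapunov inequality \pref{eq:lyapunov} gives $\nrm*{\Theta x}_K^2\leq\rho\nrm*{x}_K^2$. Your side remark that monotonicity of the $\sigma_i$ is not used in this particular argument is also accurate.
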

      \begin{proof}
	Observe that for any $x\in\bbR^{d}$ we have
	\[
	\nrm*{f(x)}_{K}^{2}
	= \nrm*{\sigma\prn*{\Theta{}x}}_{K}^{2}
	\overset{(i)}{\leq{}}
	\nrm*{\Theta{}x}_{K}^{2}
	\overset{(ii)}{\leq{}} \rho\cdot\nrm*{x}^{2}_{K},
	\]
	where (i) uses that $K$ is diagonal and positive definite and
	that each coordinate-wise link $\sigma_i:\bbR\to\bbR$ is
	$1$-Lipschitz with $\sigma_i(0)=0$, and (ii) uses the Lyapunov equation
	\pref{eq:lyapunov}.
\end{proof}
\pref{prop:stability_theta}
can be used to invoke \pref{thm:ges_stable} for any generalized linear
system of the form $\fstar =
\sigma \circ \Theta$. Thus, we can ensure lower
and upper isometry hold for generalized linear systems whenever
their stability is certified the Lyapunov condition
\pref{eq:lyapunov}.

The equation \pref{eq:lyapunov} strengthens the usual Lyapunov
condition for linear systems by adding the additional constraint that
$K$ is diagonal. This condition is stronger than the
classical spectral radius condition that $\rho(\Theta)<1$,
but it can easily be seen that some type of strengthening is necessary, as
the classical condition is not sufficient for nonlinear systems. For example, the matrix
$\Theta = \prn*{
	\begin{smallmatrix}
	1&1\\
	-1&-1
	\end{smallmatrix}
}$ has $\radius(\Theta)=0$, but the map
$x\mapsto{}\relu(\Theta{}x)$ is not \ges---indeed, we have $\relu(\Theta{}e_1) = e_1$, where $e_1$
is the first standard basis vector. A sufficient condition for
\pref{prop:stability_theta} is that $\Theta$ has spectral
norm bounded by unity, but the condition
\pref{eq:lyapunov} is a strictly weaker than this assumption. Further
sufficient conditions include: 1) $\rho(\Theta)<1$ and $\Theta$ has
non-negative entries \citep[Proposition 2]{rantzer2011distributed},
and 2) $\rho\prn*{\abs*{\Theta}}<1$, where $\abs*{\Theta}$ denotes
element-wise absolute value operator.

To close this section, we remark that for any fixed link
function, the Lyapunov condition \pref{eq:lyapunov} may be overly
pessimistic as a condition for exponential stability, but a
classical line of research in nonlinear control establishes that
this condition is actually \emph{necessary} for a somewhat more
general class of nonlinearities
\citep{megretski1993power2,shamma1994robust,poola1995robust}. Nonetheless,
finding tighter conditions for specific link functions of interest such as the
ReLU remains an interesting direction for future work, as does
leveraging more general (e.g., piecewise) Lyapunov functions.


	\section{Algorithms for generalized linear dynamical systems}
	\label{sec:learning}

We now leverage the isometry results of \pref{sec:stability} to
develop efficient algorithms with parameter recovery guarantees for
generalized linear systems. Following \pref{sec:stability}, we make the
following assumption on the generalized linear system.
\begin{assumption}
	\label{ass:dynamics}
	The system \pref{eq:system} is generalized
	linear ($\fstar=\sigma\circ\Thetastar$) and is $(K,\rho)$-\ges in
	the sense of \pref{prop:stability_theta}. Furthermore,
	$\nrm*{\Thetastar}_{F}\leq{}W$, where $W$ is known to the learner.\footnote{There is no restriction on
		the range of the parameter $W$, but some of our sample
		complexity guarantees depend on it polynomially.}
\end{assumption}

\paragraph{Background: Learning generalized linear models}
Our algorithm for learning generalized linear dynamical systems builds
on developments for learning generalized linear models in statistical
learning. Consider the simpler setting where we receive
$\crl*{(x_i,y_i)}_{i=1}^{n}$ i.i.d., where
  $y=\sigma(\tri*{\theta^{\star},x})+\veps$ and
  $\En\brk*{\veps\mid{}x}=0$. For this setting the population loss
  $L(\theta)\ldef{}\En_{x,y}\brk*{\prn*{\sigma(\tri{\theta,x})-y}^{2}}$
  is not convex. However, if the link function $\sigma$ is strictly
  increasing the and the population covariance $\En\brk*{xx^{\trn}}$
  is well-conditioned, the loss satisfies a gradient-dominance type
  property, and gradient descent on the empirical loss will converge
  to $\theta^{\star}$ given sufficiently many samples
  \citep{mei2018landscape}. To provide guarantees even when $\sigma$
  is not strictly increasing, we opt to use a variant of the \glmtron
  algorithm introduced by \cite{kakade2011efficient}. The \glmtron algorithm
  performs gradient descent using a ``pseudogradient'' for the
  empirical loss in which the derivative of $\sigma$ is simply dropped:
  \[
  \theta^{(t+1)} = \theta^{(t)} - \frac{1}{n} \sum_{i=1}^n \prn*{\sigma\prn{\la
      \theta^{(t)}, x_i \ra } - y_i}x_i
\]
Following the pseudogradient allows the algorithm to efficiently
provide prediction error guarantees even when $\sigma$ is not strictly
increasing, and this is the starting point for our approach.

\subsection{Algorithm and guarantees}
\label{sec:alg_overview}
\pref{alg:parameter_estimates} is a natural extension of \glmtron
to handle the vector-valued target variables and
matrix-valued parameters that arise in our dynamical system setting.
\begin{algorithm}[h]
	\caption{Parameter estimation for generalized linear systems}
	\label{alg:parameter_estimates}
	\begin{algorithmic}[1]
		\State\textbf{input}: Single trajectory: $X_{n+1}= \{x_i\}_{i=1}^{n+1}$, Learning rate schedule: $\eta_{t}$.
		\State\textbf{initialize}: $\wh{\Theta}^{(1)} =
		\textbf{0}_{d \times d}$.
		\State Define $\cM=\crl*{\Theta\in\bbR^{d\times{}d}\mid\nrm*{\Theta}_{F}\leq{}W}$.
		\For{$t = 1, \ldots, m$} 
		\State $\wh{\Theta}^{(t+1)} =
                \text{Proj}_{\Mc}(\wh{\Theta}^{(t)} - \eta_t
                \hG(\wh{\Theta}^{(t)},
                X_{n+1}))$.~~~~~~\algcomment{$\hG$ is the
                  pseudogradient; see \pref{eq:pseudogradient}.}
		\EndFor
		\State\label{line:last_line} \textbf{return}:
                $\wh{\Theta}=\wh{\Theta}\ind{m}$ (Option I), or
                $\wh{\Theta}=\wh{\Theta}\ind{t}$ with $t\in\brk{m}$
                uniform (Option II).
	\end{algorithmic}
\end{algorithm}

\pref{alg:parameter_estimates} is closely related to projected gradient descent
on the empirical square loss
$\wh{L}(\Theta,X_{n+1})\ldef{}\frac{1}{n}\sum_{t=1}^{n}\nrm*{\sigma(\Theta{}x_t)-x_{t+1}}^{2}$, but rather than following the gradient, the
algorithm follows the pseudogradient
\begin{equation}
\hG(\Theta^{(t)}, X_{n+1}) \ldef \frac{1}{n} \sum_{i=1}^n
(\sigma(\Theta^{(t)} x_{i}) - x_{i+1})x_i^{\top},\label{eq:pseudogradient}
\end{equation}
attained by dropping the link derivative $\sigma'$ from the gradient. This
modification allows for prediction guarantees without assuming a lower
bound on the link function derivative, and allows for weaker
dependence on the derivative lower bound for parameter recovery
guarantees. In particular, we show that the algorithm obtains the best of both
worlds in a certain sense. First, with only the assumption that the link function is
Lipschitz, the algorithm ensures that iterates have low prediction
error on average. Consequently, if we select an iterate uniformly at
random (Option II in \pref{alg:parameter_estimates}), the iterate will
have low prediction error  (a ``slow rate'' of type $1/\sqrt{n}$) in expectation. On the other hand, suppose the
following assumption holds.
\begin{assumption}
	\label{ass:known_link}
	There exists a constant $\zeta>0$ such that for all $i$,
	$\abs*{\sigma_i(x)-\sigma_i(y)}\geq{}\zeta\abs*{x-y}$ for all $x,y\in\bbR$.
      \end{assumption}
      In this case, the algorithm enjoys linear convergence, and
      taking the last iterate (Option I in
      \pref{alg:parameter_estimates}) leads to a ``fast'' $1/n$-type rate
      for prediction error, as well as a parameter recovery guarantee.

      To state the performance guarantee, we let $\cE(\Theta) \ldef \frac{1}{n}\sum_{i=1}^n
\|  (\sigma(\Theta x_i) - \sigma(\Thetas x_i)) \|^2$ denote the
in-sample prediction error, and let $\En_{\cA}$ denote expectation with respect to the algorithm's
internal randomness (uniform selection of the iterate returned
in \pref{line:last_line} under Option II).
\begin{theorem}
	\label{thm:known_link_main}
	Let $\delta>0$ be fixed and let
        \savehyperref{ass:noise}{Assumptions \ref*{ass:noise}-}\ref{ass:dynamics} hold. Whenever $n \geq
	\frac{c\tau^4d}{1-\rho}\log{\left(\Rk/\delta+1\right)}$,
	\pref{alg:parameter_estimates} enjoys the following
        guarantees:
	\begin{enumerate}
		\item \textbf{Slow rate.} If
                  $\eta_t=\frac{1}{16R_{K,\rho}}$ and
                  $m\geq{}C_0 \cdot{}\sqrt{n}$, then with probability
		at least $1-\delta$, \pref{alg:parameter_estimates}
                  with Option II has
		\begin{equation}
		\label{eq:thm2_nondecreasing}
		\En_{\cA}\brk*{\cE(\wh{\Theta})} \leq{} C_{1}\cdot{}\sqrt{\frac{d^2}{n}\log{\prn*{4\Rk/\delta + 1}}},
		\end{equation}
		where $C_0 \leq c \cdot{}WB(\tau^2 d \Rk \log{\prn*{4\Rk/\delta+ 1}})^{-1/2}$ and $C_1 \leq c \tau W \sqrt{\frac{\sigma_{\max}(K)}{1-\rho}}$.
		\item\textbf{Fast rate.} Suppose that
                  \pref{ass:known_link} holds in addition to
                  \savehyperref{ass:noise}{Assumptions
                    \ref*{ass:noise}-}\ref{ass:dynamics}. If
                  $\eta_t=\frac{\zeta^{2}}{(16R_{K,\rho})^2}$ and
                  $m\geq{}C_2 \cdot{}\log{\prn*{1+\frac{n W^2B^2}{\tau^2 \Rk}}}$, then with probability
                  at least $1-\delta$, \pref{alg:parameter_estimates}
                  with Option I has
		\[
                  {\cE(\wh{\Theta})} \leq{}
		C_{3}\cdot{}{\frac{d^2}{n}\log{\Big(4\Rk/\delta + 1\Big)}}, \quad \text{and} \quad
		{\|\wh{\Theta} - \Thetas\|_F^2} \leq{} C_4
		\cdot{}{\frac{d^2}{n}\log{\Big(4\Rk/\delta+ 1\Big)}},
		\]
		where
		$C_2 \leq{} c B^2 \zeta^{-4}$, $C_3 \leq{} c \tau^2 B^2 \zeta^{-6}\cdot
		\frac{\sigma_{\max}(K)}{(1-\rho)}$ and $C_4\leq{}c \tau^2\zeta^{-4}\cdot
		\frac{\sigma_{\max}(K)}{(1-\rho)}$. 
              \end{enumerate}

      \end{theorem}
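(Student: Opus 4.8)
The plan is to run the \glmtron argument of \cite{kakade2011efficient}, adapted to the matrix-valued, dependent-data setting, with the isometry guarantee of \pref{thm:ges_stable} (available via \pref{prop:stability_theta} under \pref{ass:dynamics}) providing the only nontrivial probabilistic fact about the trajectory beyond the noise itself. The backbone is a one-step potential inequality for $\Phi_t \ldef \nrm*{\wh{\Theta}\ind{t}-\Thetas}_F^2$. Because $\cM$ is convex and, by \pref{ass:dynamics}, contains $\Thetas$, the projection in \pref{alg:parameter_estimates} is non-expansive, so
\begin{equation*}
\Phi_{t+1} \le \Phi_t - 2\eta_t\tri*{\hG(\wh{\Theta}\ind{t},X_{n+1}),\,\wh{\Theta}\ind{t}-\Thetas} + \eta_t^2\nrm*{\hG(\wh{\Theta}\ind{t},X_{n+1})}_F^2 .
\end{equation*}
Substituting $x_{i+1}=\sigma(\Thetas x_i)+\veps_i$ splits \pref{eq:pseudogradient} into a signal part $\frac1n\sum_i(\sigma(\wh{\Theta}\ind{t}x_i)-\sigma(\Thetas x_i))x_i^\trn$ and a noise part $-\xi$, where $\xi\ldef\frac1n\sum_i\veps_ix_i^\trn$. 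The two relevant properties of the signal part are \emph{deterministic}, holding for the realized $x_i$ regardless of their dependent law---this is precisely what makes the i.i.d.\ analysis port over: (i) coordinatewise monotonicity and $1$-Lipschitzness of $\sigma$ (\pref{ass:lipschitz_link}) yield $\frac1n\sum_i\tri*{\sigma(\wh{\Theta}\ind{t}x_i)-\sigma(\Thetas x_i),\,(\wh{\Theta}\ind{t}-\Thetas)x_i}\ge\cE(\wh{\Theta}\ind{t})$, and under \pref{ass:known_link} additionally $\ge\zeta^2\cdot\frac1n\sum_i\nrm*{(\wh{\Theta}\ind{t}-\Thetas)x_i}^2\ge\frac{\zeta^2}{4}\Phi_t$ by the lower isometry \pref{eq:isometry}; (ii) $1$-Lipschitzness together with the upper isometry gives the self-bounding estimate $\nrm*{\frac1n\sum_i(\sigma(\wh{\Theta}\ind{t}x_i)-\sigma(\Thetas x_i))x_i^\trn}_F^2\le 4\Rk\,\cE(\wh{\Theta}\ind{t})$. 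With the prescribed step sizes ($\eta_t\asymp1/\Rk$ in the slow regime, $\eta_t\asymp\zeta^2/\Rk^2$ in the fast regime) the quadratic term is absorbed, leaving a recursion of the form
\begin{equation*}
\Phi_{t+1}\le\Phi_t - c\,\eta_t\,\cE(\wh{\Theta}\ind{t}) + 2\eta_t\tri*{\xi,\,\wh{\Theta}\ind{t}-\Thetas} + 2\eta_t^2\nrm*{\xi}_F^2 .
\end{equation*}

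The heart of the argument, and the main obstacle, is to control the two noise contributions on the isometry event. Since $x_i$ is $\sigma(\veps_1,\dots,\veps_{i-1})$-measurable while $\veps_i$ is conditionally mean-zero and $\subg(\tau^2)$, the matrix $n\xi$ is a martingale, and a self-normalized martingale tail bound in its Frobenius form---with the ridge level taken of order $n$, which is permissible because the lower isometry gives $\Sigma_n\psdgeq\frac14 I$ for $\Sigma_n\ldef\frac1n\sum_i x_ix_i^\trn$---yields, with probability at least $1-\delta$, $\nrm*{\xi\,\Sigma_n^{-1/2}}_F^2\lesssim\tau^2\frac{d^2}{n}\log(\Rk/\delta+1)$; it is essential that the $\log\det$ of the design be controlled via the upper isometry, as this is what replaces a spurious $\log n$ by $\log(\Rk/\delta+1)$. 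The cross term is then handled by Cauchy--Schwarz in the empirical metric, $\tri*{\xi,\wh{\Theta}\ind{t}-\Thetas}\le\nrm*{\xi\,\Sigma_n^{-1/2}}_F\cdot\nrm*{(\wh{\Theta}\ind{t}-\Thetas)\Sigma_n^{1/2}}_F$, and the second factor is bounded using $\nrm*{\wh{\Theta}\ind{t}-\Thetas}_F\le 2W$ together with the spectral (suitably $K$-weighted) form of the upper covariance bound; this step is what produces the $\sigma_{\max}(K)/(1-\rho)$ dependence in the constants, and it is why one cannot afford the cruder $\tri*{\xi,\wh{\Theta}\ind{t}-\Thetas}\le\nrm*{\xi}_F\cdot2W$, which would lose a factor of $d$ in the final rate. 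Running this self-normalized control against the \emph{data-dependent} iterates $\wh{\Theta}\ind{t}$---rather than against a single fixed comparator---is the delicate bookkeeping.

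For the slow rate (Option II), I would telescope the potential inequality over $t=1,\dots,m$ at constant step size, use $\Phi_1=\nrm*{\Thetas}_F^2\le W^2$, absorb the accumulated cross and quadratic noise terms via the bound above, and divide through by a constant multiple of $\eta_t m$, obtaining
\begin{equation*}
\frac1m\sum_{t=1}^m\cE(\wh{\Theta}\ind{t})\ \lesssim\ \frac{\Rk W^2}{m}\ +\ \tau W\sqrt{\frac{\sigma_{\max}(K)}{1-\rho}}\cdot\sqrt{\frac{d^2}{n}\log\!\Big(\tfrac{4\Rk}{\delta}+1\Big)} .
\end{equation*}
Taking $m\asymp C_0\sqrt n$ balances the optimization term against the statistical floor, so that both are of the order claimed in \pref{eq:thm2_nondecreasing}; since Option II returns a uniformly random iterate, $\En_{\cA}[\cE(\wh{\Theta})]=\frac1m\sum_{t=1}^m\cE(\wh{\Theta}\ind{t})$, which gives the bound.

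For the fast rate (Option I), \pref{ass:known_link} upgrades property (i) to the strong-convexity-type bound $\tri*{\text{signal},\,\wh{\Theta}\ind{t}-\Thetas}\gtrsim\zeta^2\Phi_t$ via the lower isometry, turning the recursion into a contraction $\Phi_{t+1}\le(1-q)\Phi_t+(\text{noise per step})$ with $q\asymp\zeta^4/\Rk^2$. Iterating from $\Phi_1\le W^2$, after $m\asymp q^{-1}\log(\cdot)=C_2\log(\cdot)$ steps the optimization error falls below the noise floor, so the last iterate obeys $\nrm*{\wh{\Theta}-\Thetas}_F^2=\Phi_m\lesssim C_4\frac{d^2}{n}\log(\tfrac{4\Rk}{\delta}+1)$; feeding this through the self-bounding estimate (ii)---equivalently through $\cE(\wh{\Theta})\le4\Rk\nrm*{\wh{\Theta}-\Thetas}_F^2$ (upper isometry)---gives the prediction-error bound with constant $C_3$. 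Throughout, the sample-size hypothesis $n\gtrsim\frac{\tau^4 d}{1-\rho}\log(\Rk/\delta+1)$ is exactly what makes the isometry event of \pref{thm:ges_stable} available, and all high-probability claims are made on the intersection of that event with the self-normalized-noise event, which holds with probability at least $1-\delta$ after adjusting absolute constants.
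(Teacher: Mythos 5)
Your skeleton matches the paper's: the non-expansive-projection potential inequality for $\nrm*{\Thetat-\Thetas}_F^2$, the monotonicity lower bound giving $\cE(\Thetat)$, the self-bounding estimate $\nrm*{\frac1n\sum_i(\sigma(\Thetat x_i)-\sigma(\Thetas x_i))x_i^\trn}_F^2\le B\,\cE(\Thetat)$ with $B=4\Rk$, telescoping plus uniform iterate selection for the slow rate, and a $(1-q)$-contraction with $q\asymp\zeta^4/B^2$ for the fast rate, all on the isometry event of \pref{thm:ges_stable}. The genuine gap is in the one step you flag as the heart of the argument: the noise cross term. You reject the ``crude'' bound $\tri*{\xi,\Thetat-\Thetas}\le\nrm*{\xi}_F\cdot 2W$ as losing a factor of $d$, but this is exactly what the paper uses, and it does \emph{not} lose a factor of $d$, because \pref{lem:strong_ub} supplies the sharp unnormalized bound $\nrm*{\frac1n\sum_i \veps_i x_i^\trn}_F\lesssim \tau\sqrt{\tfrac{d\,\trace(K)}{n(1-\rho)}\log\prn*{\tfrac{4\trace(K)}{\delta(1-\rho)}+1}}$; the $K$-weighted self-normalized argument there (the $\En_v$ averaging) makes the Frobenius norm scale with $\trace(K)$ rather than with $d\cdot\nrm*{\Sigma_n}_{\op}$. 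Multiplying by $\nrm*{\Thetat-\Thetas}_F\le 2W$ then gives a per-step noise term of order $\tau W\sqrt{d\,\trace(K)/(n(1-\rho))}\le \tau W\sqrt{\sigma_{\max}(K)/(1-\rho)}\cdot\sqrt{d^2/n}$, which is precisely $C_1$.

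Your replacement---$\tri*{\xi,\Thetat-\Thetas}\le\nrm*{\xi\Sigma_n^{-1/2}}_F\cdot\nrm*{(\Thetat-\Thetas)\Sigma_n^{1/2}}_F$ with $\nrm*{\xi\Sigma_n^{-1/2}}_F^2\lesssim\tau^2\frac{d^2}{n}\log(\cdot)$---cannot recover the stated constants with the bounds actually available: the only spectral control on the design is $\Sigma_n\psdleq 4\Rk I$ (there is no bound of the form $\Sigma_n\psdleq c\,\sigma_{\max}(K)/(1-\rho)\cdot I$ in the paper, and none is proved by your argument), so your cross term is of order $\tau W\sqrt{\Rk}\cdot\sqrt{d^2/n}=\tau W\sqrt{d^2\,\trace(K)/(n(1-\rho))}$. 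Since $K\psdgeq I$ forces $\trace(K)\ge d$, this is worse than the claimed rate by $\sqrt{d}$ already in the benchmark case $K=I$ (you get $\sqrt{d^3/n}$ instead of $\sqrt{d^2/n}$), and the Frobenius norm of the normalized martingale genuinely is $\sqrt{d}$ times its operator norm, so the loss is intrinsic to this route, not slack in your estimates. The same deficiency propagates to $C_3$ and $C_4$ in the fast-rate part, where the per-step noise floor must be $\mu^2$ with $\mu^2\asymp\tau^2\frac{d\,\trace(K)}{n(1-\rho)}\log(\cdot)$. So the missing ingredient is the trace-weighted Frobenius self-normalized inequality of \pref{lem:strong_ub}; once you have it, the plain Cauchy--Schwarz you dismissed (and, in the fast-rate regime, its AM-GM variant with a free parameter $\gamma$ so that only $\mu^2$ and a small multiple of $\nrm*{\Thetat-\Thetas}_F^2$ remain) completes the proof, and the ``delicate bookkeeping against data-dependent iterates'' you worry about disappears, since the noise bound concerns the single fixed matrix $\frac1n\sum_i\veps_ix_i^\trn$.
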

\pref{ass:known_link} is satisfied for the so-called ``leaky ReLU''
$\relu_{\beta}(x)\ldef\max\crl*{x,\beta{}x}$ with $\beta>0$, but not
for the ReLU. Our next theorem shows that under stronger assumptions
on the noise process, the algorithm succeeds at parameter recovery for
the ReLU as well. We make the following assumption.
\begin{assumption}
	\label{ass:relu}
	The link function $\sigma$ is the ReLU ($\sigma_i(x_i)=\relu(x_i)\ldef{}\max\crl*{x_i,0}$)
	and the noise process is Gaussian, with
        $\veps_i\sim\cN(0,I)$.
      \end{assumption}
The gaussian assumption ensures for any pair of parameters,
sufficiently large probability mass lies in the region where the ReLU
is active. In particular, we use (\pref{lem:gaussian_relu}) that for
any pair $u,v\in\bbR^{d}$, $\En_{\veps\sim{}\cN(0,I)}[(\relu(\la u, \veps\ra) - \relu(\la v,
\veps\ra))^2] \propto\|u-v\|^2$. Similar guarantees can be established
for log-concave distributions using arguments in
\cite{balcan2013active}, but we consider only the gaussian case for simplicity.
\begin{theorem}[Parameter recovery for the ReLU]
  \label{thm:relu_case}
  Suppose assumptions \savehyperref{ass:noise}{Assumptions
                    \ref*{ass:noise}-}\ref{ass:dynamics} and \pref{ass:relu} hold.
                  Let $\delta>0$ be fixed and suppose $n\geq
                  c\frac{\tau^4d^3}{1-\rho}\log{\left(\Rk/\delta+1\right)}$. Then
                  when $\eta_t=(16\Rk)^{-2} e^{-4 \rho \Rk }$
                  and $m\geq{}C_0 \cdot{} \log{}n$,
                  \pref{alg:parameter_estimates} with Option I
                  guarantees that with probability at least $1-\delta$, 
	\[
	{\|\wh{\Theta} - \Thetas\|_F^2} \leq{} C_{1}\cdot{}\frac{d^2}{n} \cdot{}\Rk^{2}\log^{2}\prn*{2\Rk{}n/\delta+1},
	\]
	where $C_0 \leq cB^2e^{8\rho \Rk}$ and $C_1 \leq \frac{c \tau W^2}{(1-\rho)^2} \cdot{}e^{8\rho \Rk }$.
\end{theorem}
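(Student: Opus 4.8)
The plan is to reduce to the ``fast rate'' contraction argument behind \pref{thm:known_link_main}, with the pointwise lower-Lipschitz constant $\zeta$ (which is $0$ for the ReLU) replaced by a \emph{data-dependent effective} constant $\zeta_{\mathrm{eff}}^{2}$, exponentially small in $\rho\Rk$, supplied by Gaussian anti-concentration. I would work on a ``master event'' of probability at least $1-\delta$ formed by intersecting: (E1) the isometry $\tfrac14 I\psdleq \tfrac1n\sum_{i}x_ix_i^{\top}\psdleq 4\Rk I$ of \pref{thm:ges_stable}, valid because \pref{prop:stability_theta} certifies $(K,\rho)$-\ges under \pref{ass:dynamics}; (E2) a self-normalized martingale bound $\nrm*{\wh{V}}_{\op}^{2}\lesssim \tfrac{\tau^{2}d\Rk}{n}\log(2\Rk n/\delta+1)$ on the noise cross term $\wh{V}\ldef \tfrac1n\sum_i\veps_ix_i^{\top}$ (the same kind of inequality used to prove \pref{thm:ges_stable}); and (E3) the state-norm controls $\max_{i\le n}\nrm*{x_i}^{2}\lesssim \tfrac{\tau^{2}\Rk}{1-\rho}\log(n/\delta)$ and $\tfrac1n\sum_i\nrm*{x_i}_K^2 \le 2\Rk$, together with a uniform-over-$\cM$ concentration bound for $A(\Theta)$ around its conditional mean (described below). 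Since $x_{i+1}=\sigma(\Thetastar x_i)+\veps_i$, the pseudogradient splits as $\hG(\Theta)=\tfrac1n\sum_i(\sigma(\Theta x_i)-\sigma(\Thetastar x_i))x_i^{\top}-\wh{V}$, and the quantity driving the analysis is $A(\Theta)\ldef \tfrac1n\sum_{i=1}^{n}\tri*{\sigma(\Theta x_i)-\sigma(\Thetastar x_i),\,(\Theta-\Thetastar)x_i}$. Because each coordinate ReLU is $1$-Lipschitz and nondecreasing, $(\relu(s)-\relu(t))(s-t)\ge(\relu(s)-\relu(t))^{2}$, so $A(\Theta)\ge\cE(\Theta)\ge 0$ with no lower-Lipschitz assumption.

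The heart of the proof is to show, on the master event, that $A(\Theta)\ge \zeta_{\mathrm{eff}}^{2}\,\nrm*{\Theta-\Thetastar}_{F}^{2}$ for every $\Theta\in\cM$ whose Frobenius distance to $\Thetastar$ exceeds the target error, where $\zeta_{\mathrm{eff}}^{2}\ldef c\,e^{-c'\rho\Rk}$. Conditioning on $\cF_{i-1}\ldef\sigma(\veps_1,\dots,\veps_{i-2})$ gives $x_i\sim\cN(\mu_i,I)$ with $\mu_i=\sigma(\Thetastar x_{i-1})$ measurable, and coordinatewise $(\relu(s)-\relu(t))(s-t)\ge (s-t)^{2}\,\mathbbm 1\{s\ge 0,\, t\ge 0\}$. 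Taking $\En[\,\cdot\mid\cF_{i-1}]$ and extending \pref{lem:gaussian_relu} to allow a nonzero Gaussian mean, I would prove $\En\big[\tri*{\sigma(\Theta x_i)-\sigma(\Thetastar x_i),(\Theta-\Thetastar)x_i}\,\big|\,\cF_{i-1}\big]\ge c\,e^{-c''\nrm*{\mu_i}_{K}^{2}}\,\nrm*{\Theta-\Thetastar}_{F}^{2}$: the exponential lower bounds the probability that a pair of ReLU coordinates is \emph{simultaneously} active when their pre-activations have means at most $\nrm*{\mu_i}_{K}$ standard deviations below zero (Gaussian lower-tail anti-concentration plus a two-halfspace correlation estimate), while $\nrm*{\Theta-\Thetastar}_{F}^{2}$ factors out because conditioning a $\cN(0,I)$ vector on such an event retains a constant fraction of $\En\nrm*{(\Theta-\Thetastar)\veps}^{2}=\nrm*{\Theta-\Thetastar}_{F}^{2}$. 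Averaging over $i$ and applying Jensen to the convex map $x\mapsto e^{-c''x}$ gives the conditional lower bound $c\,e^{-c''\cdot\frac1n\sum_i\nrm*{\mu_i}_{K}^{2}}\nrm*{\Theta-\Thetastar}_{F}^{2}$; and since $\En[\nrm*{x_i}_{K}^{2}\mid\cF_{i-1}]\le\rho\nrm*{x_{i-1}}_{K}^{2}+\trace(K)$ unrolls to $\En\nrm*{x_i}_{K}^{2}\le\Rk$, the bound $\tfrac1n\sum_i\nrm*{\mu_i}_K^2\le 2\rho\Rk$ on (E3) yields $\zeta_{\mathrm{eff}}^{2}\ge c\,e^{-2c''\rho\Rk}$. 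Finally, passing from this conditional expectation to $A(\Theta)$ itself uses the martingale-difference structure of $A(\Theta)-\tfrac1n\sum_i\En[\,\cdot\mid\cF_{i-1}]$ and a localized deviation bound uniform over the $d^{2}$-dimensional ball $\cM$ (a covering/chaining argument with per-term fluctuations at most $4W^2\max_i\nrm*{x_i}^2$ on (E3)); requiring the deviation to be dominated by the target error is precisely what forces $n\ge c\,\tfrac{\tau^{4}d^{3}}{1-\rho}\log(\Rk/\delta+1)$.

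Given this estimate, the rest is the \pref{thm:known_link_main} fast-rate argument with $\zeta^{2}$ replaced by $\zeta_{\mathrm{eff}}^{2}$. Nonexpansiveness of $\mathrm{Proj}_{\cM}$ and $\Thetastar\in\cM$ give $\nrm*{\wh{\Theta}\ind{t+1}-\Thetastar}_{F}^{2}\le \nrm*{\wh{\Theta}\ind{t}-\Thetastar}_{F}^{2}-2\eta_t A(\wh{\Theta}\ind{t})+2\eta_t\tri*{\wh{V},\wh{\Theta}\ind{t}-\Thetastar}+\eta_t^{2}\nrm*{\hG(\wh{\Theta}\ind{t})}_{F}^{2}$. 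Bounding $\nrm*{\hG(\wh{\Theta}\ind{t})}_{F}^{2}\lesssim \Rk d\cdot A(\wh{\Theta}\ind{t})+\nrm*{\wh{V}}_{F}^{2}$ (Cauchy--Schwarz, (E1), and $\cE(\cdot)\le A(\cdot)$), the choice of $\eta_t$ of order $\zeta_{\mathrm{eff}}^{2}\Rk^{-2}$---concretely the stated $(16\Rk)^{-2}e^{-4\rho\Rk}$, which is consistent once $\zeta_{\mathrm{eff}}^{2}$ is taken of order $e^{-4\rho\Rk}$---absorbs the $\eta_t^2$ term into $-\eta_t A(\wh{\Theta}\ind{t})$, and AM--GM on the cross term (with $\nrm*{\wh{\Theta}\ind{t}-\Thetastar}_{*}\le\sqrt d\,\nrm*{\wh{\Theta}\ind{t}-\Thetastar}_{F}$) together with $A(\wh{\Theta}\ind{t})\ge\zeta_{\mathrm{eff}}^{2}\nrm*{\wh{\Theta}\ind{t}-\Thetastar}_{F}^{2}$ yields the one-step contraction $\nrm*{\wh{\Theta}\ind{t+1}-\Thetastar}_{F}^{2}\le(1-c\eta_t\zeta_{\mathrm{eff}}^{2})\nrm*{\wh{\Theta}\ind{t}-\Thetastar}_{F}^{2}+O\big(\eta_t\zeta_{\mathrm{eff}}^{-2}d\,\nrm*{\wh{V}}_{\op}^{2}\big)$. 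Starting from $\wh{\Theta}\ind{1}=\zeroes$ and iterating for $m\ge C_0\log n$ steps, where $C_0$ is of order $(\eta_t\zeta_{\mathrm{eff}}^{2})^{-1}$ (the stated $B^2 e^{8\rho\Rk}$), drives the transient below the fixed point $O\big(d\,\nrm*{\wh{V}}_{\op}^{2}/\zeta_{\mathrm{eff}}^{4}\big)$, which (E2) bounds by $C_1\cdot\tfrac{d^{2}}{n}\Rk^{2}\log^{2}(2\Rk n/\delta+1)$ with $C_1\le\tfrac{c\tau W^{2}}{(1-\rho)^{2}}e^{8\rho\Rk}$; this is the asserted bound for the last iterate returned under Option~I.

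I expect the Gaussian-activation step inside the effective-strong-convexity claim to be the main obstacle: one must (i) handle a mean shift $\mu_i$ whose typical size is of order $\sqrt{\rho\Rk}$ \emph{standard deviations}, not a constant; (ii) control the \emph{joint} activation probability of two correlated ReLU coordinates rather than a single one; and, crucially, (iii) organize the bookkeeping so that the exponential penalty enters as $\exp(-\tfrac cn\sum_i\nrm*{\mu_i}_{K}^{2})$---amenable to Jensen and the $O(\rho\Rk)$ bound on the average---rather than as $\max_i\exp(-c\nrm*{\mu_i}^{2})$, which would be exponentially worse and would destroy the dependence on $\Rk$. The accompanying uniform-over-$\cM$ concentration of $A(\Theta)$ under dependent data is otherwise standard, but is what introduces the extra $d^{2}$ factor (hence $d^{3}$ overall) in the sample-complexity requirement.
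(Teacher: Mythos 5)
Your overall architecture is essentially the paper's: an effective lower-Lipschitz constant of size $e^{-O(\rho \Rk)}$ obtained from a conditional Gaussian argument, Jensen applied to $\frac1n\sum_i\nrm{\mu_i}^2\lesssim \rho\Rk$ (so the penalty enters through the average rather than the max), an $\eps$-net plus Freedman argument to make the bound uniform over $\cM$ (the source of the $d^3$), and then the \glmtron{} contraction of \pref{thm:known_link_main} with step size $\propto e^{-4\rho\Rk}\Rk^{-2}$. However, the one step you single out as the heart of the proof is proposed via a mechanism that is genuinely false. You lower bound the coordinatewise product by $(s-t)^2\,\mathbbm{1}\{s\geq 0,\,t\geq 0\}$ and then claim that, for $x\sim\cN(\mu_i,I)$, conditioning on the joint-activation event retains a constant fraction (times $e^{-c\nrm{\mu_i}^2}$) of $\nrm{u-v}^2$. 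Take $\mu_i=0$ and $v=-u$: the joint-activation event $\{\la u,x\ra\geq 0,\ \la v,x\ra\geq 0\}$ is $\{\la u,x\ra=0\}$, which has probability zero, so your proposed lower bound gives $0$, while the target is $c\nrm{u-v}^2=4c\nrm{u}^2>0$. Nothing prevents a row of an iterate (or of an $\eps$-net element, over which the bound must hold uniformly) from being nearly antipodal to the corresponding row of $\Thetas$, so the two-halfspace/anti-concentration route cannot deliver the required uniform bound. The statement you want is nevertheless true, because the mass in the antipodal regime comes from regions where exactly one of the two ReLUs is active—e.g.\ $\relu(s)-\relu(-s)=s$ identically—which your indicator discards. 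The paper's route keeps this contribution: it computes $\En\brk*{(\relu(\la u,\veps\ra)-\relu(\la v,\veps\ra))^2}$ exactly for a centered Gaussian via the arc-cosine kernel identity (\pref{lem:gaussian_product}, \pref{prop:trig}, \pref{lem:gaussian_relu}), and transfers to mean $\mu_i$ by projecting onto the two-dimensional span of $u,v$ and using the pointwise density comparison $e^{-\nrm{y-\mu_i}^2/2}\geq e^{-\nrm{y}^2-\nrm{\mu_i}^2}$ (\pref{lem:expectation_prop}); you would need to replace your joint-activation estimate by an argument of this kind.

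A secondary accounting issue: in the paper the error floor of the recursion is dominated not by the noise cross term $\nrm{\wh{V}}^2$ but by the Freedman deviation term of \pref{lem:empirical_conditional}, which is of order $R^2 d^2\log(\cdot)/(nW^2)$ with $R\asymp \frac{\trace(K)}{(1-\rho)^2}W^2\log(nd/\delta)$; this is where the $\Rk^2\log^2(2\Rk n/\delta+1)$ in $C_1$ comes from. Your sketch folds the uniform concentration of $A(\Theta)$ entirely into the sample-size requirement and sets the fixed point at $O\prn*{d\nrm{\wh{V}}_{\op}^2/\zeta_{\mathrm{eff}}^4}$, which undercounts this term; once the deviation term is carried into the recursion (as an additive term alongside $\err^2$, with its variance proportional to $\nrm{\Theta-\Thetas}_F^2$ so it can be partially absorbed), the stated form of the bound follows.
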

Let us discuss some key features of \pref{thm:known_link_main} and
\pref{thm:relu_case}. First, in the fast rate regime where
\pref{ass:known_link} holds, \pref{thm:known_link_main} attains the
usual parametric rate
$\En\nrm*{\wh{\Theta}-\Thetastar}_{F}\approxleq\sqrt{\frac{d^{2}}{n}}$,
which is optimal for this setting
\citep{tsybakov2008introduction}. The algorithm is also linearly
convergent in this regime, and so the runtime to attain parameter
recovery is nearly linear. On the other hand, the dependence on problem-dependent parameters such as
$K$ and $\rho$ in the results can almost certainly be improved for all
of the results. For
example, while there are certainly systems for which $\nrm*{x_t}$
grows as $\frac{1}{1-\rho}$, it is not clear whether exponential
dependence on this parameter  in \pref{thm:relu_case} is required for parameter recovery with the
ReLU. More generally, the factor $e^{R_{K,\rho}}$ in
\pref{thm:relu_case} can be replaced with
$\max_{i}e^{\nrm*{\mu_i}^{2}}$, where $\mu_i$ is an upper bound on the
(conditional) expected value of $x_i$ at time $i$. Our analysis simply
bounds $\nrm*{\mu_i}^2$ by $R_{K,\rho}$, and any improvements to this
norm bound for systems of interest will immediately lead to improved rates.
\\
\paragraph{Detailed comparison with related work}
Concurrent work of \cite{sattar2020non} also considers the problem of
learning generalized linear systems of the form \pref{eq:glm}, and
provides similar guarantees to \pref{thm:known_link_main} in the fast
rate regime. Let $\theta^{\star}_k$ and $\wh{\theta}_k$
denote the $k^{th}$ rows of $\Thetas$ and $\wh{\Theta}$
respectively. Theorem 6.2 of \citep{sattar2020non} considers a
gradient descent-based estimator and shows that once $n$ is
a sufficiently large problem-dependent constant and the number of
iterations is polylogarithmic in $n$, $\nrm*{\wh{\theta}_k-\theta^{\star}_k}
\leq \bigOtilde\prn*{\frac{c \tau}{\zeta^2} \cdot
  \sqrt{\frac{d}{n}}}$, where $\bigOtilde \prn*{\cdot}$ hides
logarithmic dependence  problem parameters. Under the same conditions,
\pref{thm:known_link_main} attains a comparable guarantee
of $\nrm*{\wh{\Theta}-\Theta^{\star}}_F \leq \bigOtilde\prn*{\frac{c
    \tau}{\zeta^2} \cdot \sqrt{\frac{d^2}{n}}}$. At a more conceptual
level, our techniques and tools are complementary: \cite{sattar2020non} use
mixing time arguments and analyze gradient descent, while we use
martingale arguments and analyze \glmtron. Additional results we
provide include i) explicit Lyapunov conditions under which
stability of the generalized linear system holds ii) prediction
error guarantees for non-strictly increasing link functions, and iii)
parameter recovery guarantees for the ReLU.


	\section{Discussion}
	We have shown that the exponential stability, in conjunction with
	Lyapunov arguments, offers a simple approach to establishing isometry
	guarantees for data generated by nonlinear dynamical
	systems, and we have provided efficient algorithms
	for learning and parameter recovery in generalized linear
        systems. We hope that the analysis techniques introduced here
        will find use beyond the generalized linear setting, as well
        as for end-to-end control.
	
        Going forward, it will be interesting to draw further connections and
	build stronger bridges between Lyapunov theory and empirical
        process theory for dependent data. For example,
	what properties of data generated by dynamical systems can we use Lyapunov
	functions to certify, going beyond lower and upper isometry?

        Lastly, we remark on an extension to the non--autonomous setting. Consider a
        non--autonomous system of the form $x_{i+1} = \sigma(\Theta
        \cdot x_i + B \cdot u_i) + \veps_i$, where $\{u_i\}_{i=1}^n$
        are control inputs. This setting reduces to the autonomous
        case via the expression
	\[
	\begin{bmatrix}
	x_{i+1}\\
	u_{i+1}
	\end{bmatrix} = \sigma \prn*{\begin{bmatrix}\Theta & B \\
		0 & 0
		\end{bmatrix}	\begin{bmatrix}
		x_{i}\\
		u_{i}
		\end{bmatrix}} + 	\begin{bmatrix}
			\veps_{i}\\
			u_{i+1}
		\end{bmatrix},
              \]
              and our techniques can consequently be applied as long
              as the control inputs have persistent excitation.

              \subsection*{Acknowledgements}
              We thank Adam Klivans, Alexandre Megretski, and Karthik Sridharan for
              helpful discussions. We acknowledge the support of ONR award \#N00014-20-1-2336 and
              NSF TRIPODS award \#1740751.

	\newpage
	\bibliography{refs}
	
	\appendix
	
	\section{Basic technical results}
	\label{app:basic_technical_results}
	\begin{lemma}[Freedman's Inequality (e.g., \cite{agarwal2014taming})]
	\label{lem:freedman_lemma}
	Let $\{X_i\}_{i=1}^n$ be a sequence of real--valued random
        variables such that for all $i$, $\abs*{X_i} \leq R$ and $\Ex[X_i \mid X_1, \hdots, X_{i-1}] = 0$. Define $S \ldef \sum_{i=1}^n X_i$ and $V \ldef \sum_{i=1}^n \Ex[X_i^2 \mid X_1, \hdots, X_{i-1}]$. For any $\delta \in (0, 1)$ and $\lambda \in [0, 1/R]$, with probability at least $1-\delta$,
	\[
		|S| \leq (e-2) \lambda V + \frac{\log{(1/\delta)}}{\lambda}.
	\]
      \end{lemma}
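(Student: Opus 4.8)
The plan is to run the standard exponential-supermartingale (Chernoff--Cram\'er) argument for martingale difference sequences. Write $\mathcal{F}_i$ for the $\sigma$-algebra generated by $X_1,\ldots,X_i$, so by hypothesis $\Ex[X_i \mid \mathcal{F}_{i-1}] = 0$ and $|X_i|\le R$. The only ingredient beyond Markov's inequality is the elementary bound
\[
e^{x} \le 1 + x + (e-2)x^2 \qquad \text{for all } |x| \le 1 ,
\]
which follows by writing $e^x = 1 + x + \sum_{k\ge 2} x^k/k!$ and using $\sum_{k\ge 2} x^k/k! \le \sum_{k\ge 2}|x|^k/k! \le x^2 \sum_{k\ge 2} 1/k! = (e-2)x^2$ on $[-1,1]$.

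First I would fix $\lambda \in (0, 1/R]$ (the case $\lambda = 0$ is vacuous) and apply the display above with $x = \lambda X_i$, which is legitimate precisely because $|\lambda X_i| \le 1$ — this is where the range restriction on $\lambda$ is used. Taking conditional expectations, using $\Ex[X_i\mid\mathcal{F}_{i-1}] = 0$ and then $1+u\le e^u$, gives
\[
\Ex\!\left[e^{\lambda X_i} \mid \mathcal{F}_{i-1}\right] \le 1 + (e-2)\lambda^2\,\Ex\!\left[X_i^2 \mid \mathcal{F}_{i-1}\right] \le \exp\!\left((e-2)\lambda^2\,\Ex\!\left[X_i^2 \mid \mathcal{F}_{i-1}\right]\right).
\]
Next, set $M_0 \ldef 1$ and $M_t \ldef \exp\!\big(\lambda\sum_{i=1}^{t} X_i - (e-2)\lambda^2\sum_{i=1}^{t}\Ex[X_i^2\mid\mathcal{F}_{i-1}]\big)$. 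Since $\sum_{i\le t}\Ex[X_i^2\mid\mathcal{F}_{i-1}]$ is $\mathcal{F}_{t-1}$-measurable, the previous bound yields $\Ex[M_t \mid \mathcal{F}_{t-1}] \le M_{t-1}$, so $(M_t)_{t\ge0}$ is a nonnegative supermartingale and $\Ex[M_n] \le M_0 = 1$.

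Then I would apply Markov's inequality to $M_n$ at level $1/\delta$: with probability at least $1-\delta$ we have $M_n \le 1/\delta$, i.e. $\lambda S - (e-2)\lambda^2 V \le \log(1/\delta)$, and dividing by $\lambda>0$ gives $S \le (e-2)\lambda V + \log(1/\delta)/\lambda$. Running the identical argument on the sequence $\{-X_i\}$ — again a martingale difference sequence bounded by $R$ with the same conditional second moments, hence the same $V$ — produces the matching lower tail $-S \le (e-2)\lambda V + \log(1/\delta)/\lambda$, and intersecting the two events (up to adjusting the numerical constant folded into the failure probability, as in \cite{agarwal2014taming}) yields the two-sided bound on $|S|$. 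There is no real obstacle here; the only things needing care are the elementary inequality on $[-1,1]$ and the bookkeeping that ties $\lambda \le 1/R$ to $|X_i|\le R$ so that $|\lambda X_i|\le 1$.
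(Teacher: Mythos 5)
Your argument is correct and is the standard derivation of Freedman's inequality: the truncated-exponential bound $e^{x}\le 1+x+(e-2)x^{2}$ on $[-1,1]$, the induced nonnegative supermartingale $M_t$, Markov's inequality at level $1/\delta$, and reflection to $\{-X_i\}$ for the lower tail. Note that the paper itself gives no proof of this lemma --- it is imported directly from \cite{agarwal2014taming} --- so your write-up is a self-contained substitute rather than an alternative to an existing argument. The one point you already flag is worth stating explicitly: the cited reference gives the one-sided tail $S \le (e-2)\lambda V + \log(1/\delta)/\lambda$ with probability $1-\delta$, and the two-sided statement on $|S|$ as written requires either running the two tails at level $\delta/2$ (so $\log(1/\delta)$ becomes $\log(2/\delta)$) or accepting failure probability $2\delta$; this constant-level slack is harmless everywhere the lemma is invoked (the proof of Lemma~\ref{lem:empirical_conditional}), but a fully rigorous statement should absorb it.
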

\begin{lemma}[\cite{vershynin2010introduction}, Theorem 5.39]
	\label{lem:subgaussian_isometry}
	Let $A$ be a $n \times d$ matrix whose rows,
        $\{A_i\}_{i=1}^n$, are independent and isotropic random
        vectors in $\Rb^{d}$, belonging to $\subg(\tau^2)$. Then for every $t \geq 0$, with probability at least $1- 2\exp{(-c\tau^{-4}t^2)}$, one has 
	\[
	\sqrt{n} - c'\tau^{2}\sqrt{d} - t \leq \sigma_{\min}(A) \leq \sigma_{\max}(A) \leq \sqrt{n} + c'\tau^{2}\sqrt{d} + t,
	\]
	where $c$ and $c'$ are numerical constants.
	
      \end{lemma}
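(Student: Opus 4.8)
The plan is to prove the equivalent statement controlling the deviation of the normalized Gram matrix $\frac{1}{n}A^{\trn}A$ from the identity in operator norm, and then convert this into a two-sided bound on the singular values. First I would record the elementary reduction: if $\nrm*{\tfrac{1}{n}A^{\trn}A - I}_{\op}\leq\varepsilon$ for some $\varepsilon\in(0,1)$, then $\sqrt{n}(1-\varepsilon)\leq\sigma_{\min}(A)\leq\sigma_{\max}(A)\leq\sqrt{n}(1+\varepsilon)$. This holds because $\nrm*{\tfrac{1}{n}A^{\trn}A - I}_{\op}=\sup_{x\in S^{d-1}}\abs*{\tfrac{1}{n}\nrm*{Ax}^{2}-1}$, while $\sigma_{\min}(A)/\sqrt{n}$ and $\sigma_{\max}(A)/\sqrt{n}$ are exactly the extreme values of $\tfrac{1}{\sqrt{n}}\nrm*{Ax}$ over the sphere. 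Setting $\varepsilon=c'\tau^{2}\sqrt{d/n}+t/\sqrt{n}$ gives $\sqrt{n}(1-\varepsilon)=\sqrt{n}-c'\tau^{2}\sqrt{d}-t$ and $\sqrt{n}(1+\varepsilon)=\sqrt{n}+c'\tau^{2}\sqrt{d}+t$, which is precisely the claimed bound (the case $\varepsilon\geq 1$ is trivial since the stated lower bound is then nonpositive). So it suffices to show $\nrm*{\tfrac{1}{n}A^{\trn}A-I}_{\op}\leq\varepsilon$ with the advertised probability.

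Second I would discretize the sphere. Fix a $\tfrac{1}{4}$-net $\cN$ of $S^{d-1}$ with $\abs*{\cN}\leq 9^{d}$, which exists by a standard volumetric argument. A routine approximation lemma for symmetric matrices then yields $\nrm*{\tfrac{1}{n}A^{\trn}A - I}_{\op}\leq 2\max_{x\in\cN}\abs*{\tfrac{1}{n}\nrm*{Ax}^{2}-1}$, so it is enough to control the deviation of $\tfrac{1}{n}\nrm*{Ax}^{2}$ at each fixed net point and then take a union bound.

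Third, for a fixed $x\in S^{d-1}$ I write $\tfrac{1}{n}\nrm*{Ax}^{2}-1=\tfrac{1}{n}\sum_{i=1}^{n}\prn*{\tri*{A_i,x}^{2}-1}$. Isotropy gives $\En\tri*{A_i,x}^{2}=x^{\trn}\En\brk*{A_iA_i^{\trn}}x=1$, and the subgaussian assumption makes each $\tri*{A_i,x}$ a mean-zero subgaussian scalar with variance proxy $\tau^{2}$; hence $\tri*{A_i,x}^{2}-1$ is a centered subexponential variable with subexponential norm $O(\tau^{2})$. Bernstein's inequality for independent subexponential summands then gives, for $s\in(0,1)$, $\Pr\brk*{\abs*{\tfrac{1}{n}\sum_i(\tri*{A_i,x}^{2}-1)}\geq s}\leq 2\exp\prn*{-cn\min\crl*{s^{2}/\tau^{4},\,s/\tau^{2}}}$. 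Taking $s=\varepsilon/2$ and union bounding over $\cN$ produces a failure probability of at most $2\cdot 9^{d}\exp\prn*{-cn\min\crl*{\varepsilon^{2}/\tau^{4},\,\varepsilon/\tau^{2}}}$.

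The main obstacle---indeed the only point demanding care---is balancing the net cardinality $9^{d}=\exp(d\log 9)$ against the Bernstein exponent. In the relevant small-$\varepsilon$ regime the exponent is governed by the quadratic term $cn\varepsilon^{2}/\tau^{4}$; plugging in $\varepsilon=c'\tau^{2}\sqrt{d/n}+t/\sqrt{n}$ gives $n\varepsilon^{2}/\tau^{4}\gtrsim c'^{2}d+t^{2}/\tau^{4}$. Choosing the absolute constant $c'$ large enough lets the $c'^{2}d$ contribution absorb the $d\log 9$ from the union bound while leaving a residual term of order $c\tau^{-4}t^{2}$ in the exponent, reproducing the stated probability $1-2\exp(-c\tau^{-4}t^{2})$ together with the scale $c'\tau^{2}\sqrt{d}$ in the singular-value bounds.
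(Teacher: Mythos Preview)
The paper does not supply a proof of this lemma; it is cited as Theorem~5.39 of \cite{vershynin2010introduction} and used as a black box in the subsequent \pref{lem:subgaussian_invertible}. Your sketch is exactly Vershynin's argument: reduce to controlling $\nrm*{\tfrac{1}{n}A^{\trn}A-I}_{\op}$, pass to a $\tfrac14$-net of the sphere, observe that $\tri*{A_i,x}^{2}-1$ is centered sub-exponential with parameter $O(\tau^{2})$, apply Bernstein, and union bound.

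One step is incomplete. You set the target deviation to $\varepsilon=\delta\ldef c'\tau^{2}\sqrt{d/n}+t/\sqrt{n}$ and then invoke only the quadratic branch of Bernstein. When $t$ is large (roughly $t\gtrsim\tau^{2}\sqrt{n}$, so that $\delta\gtrsim\tau^{2}$) the linear branch dominates and yields exponent $cn\delta/\tau^{2}\asymp c\sqrt{n}\,t/\tau^{2}$, which is weaker than the $c\,t^{2}/\tau^{4}$ the lemma promises. Vershynin's remedy is to aim instead for $\varepsilon=\max(\delta,\delta^{2})$ in the Gram-matrix bound: the reduction $\abs*{s^{2}-1}\leq\max(\delta,\delta^{2})\Rightarrow\abs*{s-1}\leq\delta$ still delivers the singular-value estimate, while (using that isotropy forces $\tau\geq c$, so one may normalize to $\tau\geq 1$) the identity $\min(\varepsilon^{2},\varepsilon)=\delta^{2}$ makes the Bernstein exponent at least $c\,n\delta^{2}/\tau^{4}\geq c\prn*{c'^{2}d+t^{2}/\tau^{4}}$ uniformly in $t\geq 0$. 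After this adjustment your final paragraph goes through unchanged.
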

\begin{lemma}
	\label{lem:subgaussian_invertible}
	Let $A$ be a $n \times d$ matrix whose rows,
	$\{A_i\}_{i=1}^n$, are independent and isotropic random
	vectors in $\Rb^{d}$, belonging to $\subg(\tau^2)$. Then whenever 
	\[
	\sqrt{n} \geq c \tau^2\prn*{\sqrt{d}  + \sqrt{\log{(2/\delta)}}},
	\]
        we have that with probability at least $1- \delta$,
	\[
          \frac{3}{4}\cdot{}  I \preceq \sum_{i=1}^d A_i A_i^{\trn} \preceq \frac{5}{4} \cdot{} I,
	\]
	where $c>0$ is a numerical constant.
\end{lemma}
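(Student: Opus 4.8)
The plan is to derive \pref{lem:subgaussian_invertible} as a quantitative corollary of \pref{lem:subgaussian_isometry}. Write $A^{\trn}A=\sum_{i=1}^{n}A_iA_i^{\trn}$ and observe that the eigenvalues of $\frac1n\sum_{i=1}^{n}A_iA_i^{\trn}$ are exactly $\sigma_j(A)^2/n$ for $j=1,\dots,d$; hence two-sided control of the extreme singular values of $A$ is equivalent to the asserted PSD sandwich for the normalized empirical second-moment matrix (the $1/n$ normalization being implicit in the displayed statement). So it suffices to show that, under the stated sample-size condition, $(1-\gamma)\sqrt n\le\sigma_{\min}(A)\le\sigma_{\max}(A)\le(1+\gamma)\sqrt n$ with probability at least $1-\delta$ for $\gamma=\tfrac1{10}$, since then $\sigma_{\min}(A)^2\ge\tfrac{81}{100}n\ge\tfrac34 n$ and $\sigma_{\max}(A)^2\le\tfrac{121}{100}n\le\tfrac54 n$.

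Next I would instantiate \pref{lem:subgaussian_isometry} with the deviation parameter $t\ldef c_0\tau^{2}\sqrt{\log(2/\delta)}$, choosing the absolute constant $c_0$ large enough (relative to the constant $c$ appearing in that lemma) that $2\exp(-c\tau^{-4}t^{2})\le\delta$. On the resulting event of probability at least $1-\delta$ the lemma gives
\[
\sqrt n-c'\tau^{2}\sqrt d-c_0\tau^{2}\sqrt{\log(2/\delta)}\ \le\ \sigma_{\min}(A)\ \le\ \sigma_{\max}(A)\ \le\ \sqrt n+c'\tau^{2}\sqrt d+c_0\tau^{2}\sqrt{\log(2/\delta)}.
\]
It then remains only to absorb the error terms: taking $c\ldef 10\max\{c',c_0\}$ in the hypothesis $\sqrt n\ge c\tau^{2}\prn*{\sqrt d+\sqrt{\log(2/\delta)}}$ forces $c'\tau^{2}\sqrt d+c_0\tau^{2}\sqrt{\log(2/\delta)}\le\tfrac1{10}\sqrt n$, which is exactly the slack needed to conclude with $\gamma=\tfrac1{10}$, and hence $\tfrac34\cdot I\preceq\frac1n\sum_{i=1}^{n}A_iA_i^{\trn}\preceq\tfrac54\cdot I$.

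I do not anticipate any genuine obstacle: the argument is pure bookkeeping of absolute constants. The only points that deserve a line of care are (i) confirming that the chosen $c_0$ makes the failure probability at most $\delta$ (i.e.\ $c_0^2 c\ge 1$ suffices), and (ii) checking the elementary inequalities $(9/10)^2\ge 3/4$ and $(11/10)^2\le 5/4$ that allow the clean constants $\tfrac34,\tfrac54$ to appear. If the literal un-normalized sum is desired, the same computation yields $\tfrac34 n\cdot I\preceq\sum_{i=1}^{n}A_iA_i^{\trn}\preceq\tfrac54 n\cdot I$.
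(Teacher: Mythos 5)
Your proposal is correct and follows essentially the same route as the paper: both instantiate \pref{lem:subgaussian_isometry} with deviation parameter $t$ proportional to $\tau^{2}\sqrt{\log(2/\delta)}$, use the hypothesis on $\sqrt{n}$ to absorb the $c'\tau^{2}\sqrt{d}+t$ error into a small multiple of $\sqrt{n}$, and square the resulting singular-value bounds to get the constants $\tfrac34$ and $\tfrac54$. Your side remark about the normalization (the displayed statement should read $\frac1n\sum_{i=1}^{n}A_iA_i^{\trn}$, as it is indeed used elsewhere in the paper) is an accurate reading of a typo in the lemma statement, not a gap in your argument.
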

\begin{proof}[\pfref{lem:subgaussian_invertible}]
  Let constants $c$ and $c'$ be as in
  \pref{lem:subgaussian_isometry}. We invoke \pref{lem:subgaussian_isometry} with parameter $t \coloneqq c^{-1} \tau^2 \sqrt{\log{\prn*{2/\delta}}}$,
which implies that
	\[
	\sqrt{n} - c'\tau^{2}\sqrt{d} - c^{-1} \tau^2 \sqrt{\log{\prn*{2/\delta}}} \leq \sigma_{\min}(A) \leq \sigma_{\max}(A) \leq \sqrt{n} + c'\tau^{2}\sqrt{d} + c^{-1} \tau^2 \sqrt{\log{\prn*{2/\delta}}}	
	\]
	with probability at least $1-\delta$. The final bound follows because 
	\[
	\sqrt{n} \geq 16 \cdot{} \prn*{ c'\tau^{2}\sqrt{d}  + c^{-1} \tau^2 \sqrt{\log{(2/\delta)}}}.	
	\]
\end{proof}
      \begin{lemma}[\cite{sarkar2018fast}, Proposition
        7.1]
	\label{lem:psd_result_2}
	Let $P$ and $V$ be arbitrary positive semidefinite and postive
        definite matrices, respectively, and define $\bar{P} = P +
        V$. Let $Q$ be any matrix for which
	\[
	\|\bar{P}^{-1/2} Q\|_{\op} \leq \gamma.
	\]
        Then for any vector $v$, we have
	\[
          \|v^{\trn}Q\| \leq \gamma\sqrt{v^{\trn}Pv + v^{\trn}Vv}.
	\]
\end{lemma}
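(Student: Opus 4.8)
The plan is to reduce the claimed bound to the defining operator-norm inequality by inserting a well-chosen factorization of the identity. First I would note that since $V \psdgt 0$ and $P \psdgeq 0$, the matrix $\bar{P} = P + V$ is positive definite, so $\bar{P}^{1/2}$ and $\bar{P}^{-1/2}$ are well-defined and symmetric. The hypothesis $\nrm*{\bar{P}^{-1/2}Q}_{\op}\leq\gamma$ is equivalent, by taking transposes (the operator norm is invariant under transposition), to $\nrm*{Q^{\trn}\bar{P}^{-1/2}}_{\op}\leq\gamma$.

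Next I would write, for an arbitrary vector $v$,
\[
v^{\trn}Q = \prn*{Q^{\trn}v}^{\trn} = \prn*{\prn*{Q^{\trn}\bar{P}^{-1/2}}\prn*{\bar{P}^{1/2}v}}^{\trn},
\]
so that $\nrm*{v^{\trn}Q} = \nrm*{Q^{\trn}v} = \nrm*{\prn*{Q^{\trn}\bar{P}^{-1/2}}\bar{P}^{1/2}v}$. Applying the operator-norm bound to the vector $\bar{P}^{1/2}v$ gives
\[
\nrm*{v^{\trn}Q} \leq \nrm*{Q^{\trn}\bar{P}^{-1/2}}_{\op}\cdot\nrm*{\bar{P}^{1/2}v} \leq \gamma\cdot\nrm*{\bar{P}^{1/2}v}.
\]

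Finally I would compute $\nrm*{\bar{P}^{1/2}v}^{2} = v^{\trn}\bar{P}v = v^{\trn}Pv + v^{\trn}Vv$, using symmetry of $\bar{P}^{1/2}$ and the definition $\bar{P} = P+V$, which yields $\nrm*{v^{\trn}Q}\leq\gamma\sqrt{v^{\trn}Pv + v^{\trn}Vv}$ as desired. There is essentially no obstacle here beyond spotting the factorization $Q^{\trn} = (Q^{\trn}\bar{P}^{-1/2})\bar{P}^{1/2}$ and being careful that $\bar{P}$ is invertible so that $\bar{P}^{\pm 1/2}$ make sense; the only substantive ingredient is that the $\ell_2$-to-$\ell_2$ operator norm is submultiplicative and transpose-invariant.
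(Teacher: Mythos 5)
Your proof is correct: inserting the factorization $Q^{\trn} = (Q^{\trn}\bar{P}^{-1/2})\bar{P}^{1/2}$, using submultiplicativity and transpose-invariance of the operator norm, and computing $\nrm{\bar{P}^{1/2}v}^{2} = v^{\trn}Pv + v^{\trn}Vv$ is exactly the standard one-line argument for this fact. The paper itself does not reprove the lemma (it is imported from \cite{sarkar2018fast}, Proposition 7.1), and your argument is the expected one; the only cosmetic simplification is that you could skip the transpose step entirely by writing $v^{\trn}Q = (\bar{P}^{1/2}v)^{\trn}(\bar{P}^{-1/2}Q)$ and bounding directly.
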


\begin{lemma}[\cite{sarkar2018fast}, Proposition 3.1]
  	\label{lem:selfnorm_ineq}
	Let $V \succ 0$ be a fixed matrix, and consider the dynamics
	\[
	x_{i+1} = f(x_i)+ \varepsilon_{i},
	\]
where the noise process follows \pref{ass:noise}. Define
$\bar{Y}_{n-1} = \sum_{i=1}^{n-1}f(x_i) f(x_i)^{\trn} + V$. Then for any $0 <  \delta < 1$, with probability at least $1-\delta$,
	\begin{align*}
          \nrm*{ (\bar{Y}_{n-1})^{-1/2} \sum_{i=0}^{n-1}f( x_{i}) \varepsilon_{i}^{\trn}}_{\op} \leq c\tau \sqrt{d \log{\prn*{\frac{5 \text{det}(\bar{Y}_{n-1})^{1/2d} \text{det}(V)^{-1/2d}}{\delta^{1/d}}}}},
	\end{align*}
        where $c$ is an absolute constant.\footnote{This lemma is stated in \cite{sarkar2018fast} for
          linear systems, but one can easily verify that it holds for
          arbitrary systems as stated here.}
      \end{lemma}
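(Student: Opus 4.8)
The plan is to reduce the matrix-valued operator-norm bound to a family of scalar self-normalized martingale inequalities, one for each direction on the sphere, and then glue these together with a covering argument; the method of mixtures supplies each scalar bound. Fix the filtration $\mathcal{F}_i \ldef \sigma(\veps_1,\ldots,\veps_i)$. The key structural point---and the reason the lemma is insensitive to the form of $f$---is that $x_i$, hence $f(x_i)$, is $\mathcal{F}_{i-1}$-measurable, while by \pref{ass:noise} the increment $\veps_i$ is conditionally mean-zero and $\subg(\tau^2)$ given $\mathcal{F}_{i-1}$. Writing $M \ldef (\bar{Y}_{n-1})^{-1/2}\sum_{i}f(x_i)\veps_i^{\trn}$, we have $\nrm*{M}_{\op}=\sup_{\nrm*{v}=1}\nrm*{(\bar{Y}_{n-1})^{-1/2}S_{n-1}(v)}$, where $S_k(v)\ldef\sum_{i=1}^{k}f(x_i)\tri*{\veps_i,v}$ is, for each fixed unit $v$, an $\bbR^{d}$-valued martingale whose predictable variation is dominated by $\sum_{i\le k}f(x_i)f(x_i)^{\trn}$ (here I use $x_0=\zeroes$ and $f(\zeroes)=\zeroes$, so the $i=0$ summand drops). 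So it suffices to control $\nrm*{(\bar{Y}_{n-1})^{-1/2}S_{n-1}(v)}$ uniformly over the sphere.

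For a fixed $v$, I would invoke the pseudo-maximization / method-of-mixtures argument. For $\lambda\in\bbR^{d}$, the process $M_k^{\lambda}\ldef\exp\!\prn*{\tri*{\lambda,S_k(v)}-\tfrac{c\tau^2}{2}\lambda^{\trn}\prn*{\sum_{i\le k}f(x_i)f(x_i)^{\trn}}\lambda}$ is a nonnegative supermartingale with $\En[M_k^{\lambda}]\le1$, obtained by peeling off one increment at a time and applying the conditional subgaussian MGF bound to $\tri*{\veps_i,v}$. Mixing $\lambda$ against a centered Gaussian prior with covariance proportional to $V^{-1}$ produces $\bar{M}_k\ldef\int M_k^{\lambda}\,dh(\lambda)$, again a nonnegative supermartingale with $\bar{M}_0\le1$; completing the square in the Gaussian integral gives the closed form $\bar{M}_k = \prn*{\det V/\det\bar{Y}_k}^{1/2}\exp\!\prn*{\tfrac{1}{2c\tau^2}\nrm*{(\bar{Y}_k)^{-1/2}S_k(v)}^{2}}$, with $\bar{Y}_k\ldef V+\sum_{i\le k}f(x_i)f(x_i)^{\trn}$. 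Applying Ville's maximal inequality to $\bar{M}_k$ and taking logarithms, with probability at least $1-\delta'$ we get $\nrm*{(\bar{Y}_{n-1})^{-1/2}S_{n-1}(v)}^{2}\le 2c\tau^2\log\!\prn*{\det(\bar{Y}_{n-1})^{1/2}\det(V)^{-1/2}/\delta'}$.

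Finally, I would take a $\tfrac12$-net $\mathcal{N}\subseteq S^{d-1}$ with $\abs*{\mathcal{N}}\le 5^{d}$; since $v\mapsto S_{n-1}(v)$ is linear, the standard net inequality gives $\nrm*{M}_{\op}\le 2\max_{v_0\in\mathcal{N}}\nrm*{(\bar{Y}_{n-1})^{-1/2}S_{n-1}(v_0)}$. A union bound over $\mathcal{N}$ with $\delta'=\delta/5^{d}$ then yields $\nrm*{M}_{\op}^2\le 8c\tau^2\log\!\prn*{5^{d}\det(\bar{Y}_{n-1})^{1/2}\det(V)^{-1/2}/\delta}$, and rewriting $\log\!\prn*{5^{d}\det(\bar{Y}_{n-1})^{1/2}\det(V)^{-1/2}/\delta}=d\log\!\prn*{5\,\det(\bar{Y}_{n-1})^{1/(2d)}\det(V)^{-1/(2d)}/\delta^{1/d}}$ produces exactly the stated bound after absorbing constants into $c$. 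The only substantive step is the method-of-mixtures computation in the second paragraph, but it is a now-standard template (cf.\ \cite{abbasi2011regret}); I do not anticipate a genuine obstacle, since---as the footnote observes---the argument of \cite{sarkar2018fast} never uses linearity of $f$ beyond the $\mathcal{F}_{i-1}$-measurability of $f(x_i)$, which holds here verbatim. The mild bookkeeping is tracking the $5^{d}$ factor and the per-net-point failure probability so that the final logarithmic factor takes the form in the statement.
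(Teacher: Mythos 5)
Your proposal is correct and takes essentially the same route as the source the paper relies on: the lemma is imported from \cite{sarkar2018fast} (Proposition 3.1), whose proof is exactly the method-of-mixtures self-normalized martingale bound of \cite{abbasi2011regret} applied in each fixed direction, followed by a $\tfrac12$-net of the sphere of size at most $5^{d}$ and a union bound, which is where the factor $5$ and the $\delta^{1/d}$ in the statement originate. Your treatment of the $i=0$ summand via $f(\zeroes)=\zeroes$ is consistent with how the lemma is invoked in the paper (any $(K,\rho)$-\ges{} map satisfies $f(\zeroes)=\zeroes$), so there is no gap.
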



	\section{Proofs from \pref{sec:intro}}
	\label{app:intro_proofs}

\begin{proof}[\pfref{prop:offset_rad}]
Optimality of $\wh{f}_n$ implies that
$\sum_{i=1}^{n}\nrm*{\wh{f}_n(x_i)-x_{i+1}}^{2}\leq
\sum_{i=1}^{n}\nrm*{f^*(x_i)-x_{i+1}}^{2}$, and consequently
$$\sum_{i=1}^n \nrm*{\wh{f}_n(x_i)-f^*(x_i)}^2 \leq 2\sum_{i=1}^n \tri*{\veps_i, \wh{f}_n(x_i)-f^*(x_i)}.$$
Rearranging, we have
\begin{align*}
	\sum_{i=1}^n \nrm*{\wh{f}_n(x_i)-f^*(x_i)}^2 &\leq 4\sum_{i=1}^n \tri*{\veps_i, \wh{f}_n(x_i)-f^*(x_i)} - \sum_{i=1}^n \nrm*{\wh{f}_n(x_i)-f^*(x_i)}^2, \\
	&\leq \sup_{f\in\Fc } \sum_{i=1}^n 4\tri*{\veps_i, f(x_i)-f^*(x_i)} - \nrm*{f(x_i)-f^*(x_i)}^2.
\end{align*}
Taking expectation on both sides and dividing by $n$ completes the proof.
\end{proof}


	\section{Proofs from \pref{sec:stability}}
	\label{app:section2_proof}

\newcommand{\deltanot}{\delta_0}

\subsection{Proof of \pref{thm:ges_stable}}
\label{app:ges_proofs}
\begin{lemma}
  \label{lem:sample_cov}
  Suppose there exist constants $B>0$ and $\deltanot>0$ such that with
  probability at least $1-\deltanot$, each trajectory generated by the
  system \pref{eq:system} satisfies
	\[
          \sum_{i=1}^n f(x_i) f(x_i)^{\trn} \preceq \frac{nB}{\deltanot}\cdot{}I,
        \]
        Then there exists
        a numerical constant $c>0$ such that whenever
        $n\geq{}c\cdot{}\tau^{4}d\log(B/\delta_0+1)$, 
        it holds that with probability at least $1-2\delta_0$,
	\[
	\sum_{i=1}^n 	x_{i} x_{i}^{\trn} \succeq \frac{n}{4}\cdot{}I.
      \]
\end{lemma}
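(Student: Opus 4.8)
The plan is to unfold the recursion $x_i = f(x_{i-1}) + \veps_{i-1}$ and decompose $\sum_{i=1}^{n} x_i x_i^{\trn} = P + M + M^{\trn} + E$, where $P \ldef \sum_{i=1}^{n} f(x_{i-1})f(x_{i-1})^{\trn}\psdgeq 0$, $M \ldef \sum_{i=1}^{n} f(x_{i-1})\veps_{i-1}^{\trn}$, and $E \ldef \sum_{i=1}^{n}\veps_{i-1}\veps_{i-1}^{\trn}$. The term $P$ is positive semidefinite, but I would \emph{not} discard it yet: it is exactly what absorbs a $v^{\trn}Pv$ contribution coming from the cross term $M$. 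For $E$, the increments $\crl*{\veps_{i-1}}$ are independent, isotropic and $\subg(\tau^2)$, so \pref{lem:subgaussian_isometry} applied to the matrix with these rows yields $E \psdgeq \tfrac{3n}{4}\cdot I$ with probability at least $1-\delta_0/2$, provided $n \geq c\tau^4\prn*{d + \log(1/\delta_0)}$.

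For the cross term I would invoke self-normalization. Apply \pref{lem:selfnorm_ineq} with $V = I$ to the martingale sum $M = \sum_{i=0}^{n-1}f(x_i)\veps_i^{\trn}$: setting $\bar{Y} \ldef \sum_{i=1}^{n-1}f(x_i)f(x_i)^{\trn} + I$, with probability at least $1-\delta_0/2$ we obtain $\nrm*{\bar{Y}^{-1/2}M}_{\op} \leq \Gamma$ with $\Gamma \ldef c\tau\sqrt{d\log\prn*{5\det(\bar{Y})^{1/2d}/\delta_0^{1/d}}}$. On the hypothesis event (probability at least $1-\delta_0$) we have $\sum_{i=1}^{n-1}f(x_i)f(x_i)^{\trn}\psdleq\tfrac{nB}{\delta_0}I$, so $\det(\bar{Y})^{1/2d} \leq \prn*{\tfrac{nB}{\delta_0}+1}^{1/2}$ and hence $\Gamma^{2} \leq c'\tau^{2}d\log\prn*{nB/\delta_0+1} + c'\tau^2\log(1/\delta_0)$. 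Then \pref{lem:psd_result_2} --- applied with $Q = M$, $V = I$, and with $\sum_{i=1}^{n-1}f(x_i)f(x_i)^{\trn}\psdleq P$ in place of the lemma's PSD matrix --- turns the operator bound into $\nrm*{v^{\trn}M}\leq\Gamma\sqrt{v^{\trn}Pv+\nrm*{v}^{2}}$ for every $v$. Completing the square,
\[
\abs*{v^{\trn}\prn*{M+M^{\trn}}v} = 2\abs*{v^{\trn}Mv} \leq 2\nrm*{v^{\trn}M}\nrm*{v} \leq 2\Gamma\nrm*{v}\sqrt{v^{\trn}Pv+\nrm*{v}^{2}} \leq v^{\trn}Pv + \prn*{\Gamma^{2}+1}\nrm*{v}^{2},
\]
so $P + M + M^{\trn} \psdgeq -\prn*{\Gamma^{2}+1}I$.

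Combining on the intersection of the three events --- of probability at least $1-2\delta_0$, allocating $\delta_0$ to the hypothesis and $\delta_0/2$ to each of the others --- gives $\sum_{i=1}^{n}x_ix_i^{\trn} = \prn*{P+M+M^{\trn}} + E \psdgeq \prn*{\tfrac{3n}{4}-\Gamma^{2}-1}I \psdgeq \tfrac{n}{4}I$, valid as soon as $\Gamma^{2}+1 \leq n/2$; by the bound on $\Gamma^2$ this holds once $n \gtrsim \tau^{4}d\log(nB/\delta_0+1)$, and I would finish by resolving this implicit inequality against the stated condition $n \geq c\tau^{4}d\log(B/\delta_0+1)$ via a standard manipulation. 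The step I expect to be the main obstacle is controlling $\Gamma$: since the only handle on $P$ (hence on $\det(\bar{Y})$) is the $1/\delta_0$-inflated upper bound from the hypothesis, the self-normalized inequality carries a $\log n$ inside $\Gamma^{2}$, and one must check carefully that the stated sample size absorbs it. A secondary nuisance is bookkeeping the index shifts so that the sums in \pref{lem:selfnorm_ineq} and \pref{lem:psd_result_2} align with $P$ and $M$ (all running over $i = 0,\dots,n-1$), and splitting the three failure probabilities so the union bound lands exactly at $1-2\delta_0$.
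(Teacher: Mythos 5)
Your overall architecture is the same as the paper's: expand $\sum_i x_ix_i^{\trn}$ into the signal Gram matrix, the cross term, and the noise Gram matrix; control the noise term with \pref{lem:subgaussian_invertible}; and control the cross term by combining the self-normalized inequality (\pref{lem:selfnorm_ineq}) with \pref{lem:psd_result_2}, then union bound. The genuine gap is the choice $V=I$ in the self-normalized step together with the plan to ``resolve the implicit inequality by a standard manipulation.'' With $V=I$ the quantity appearing in \pref{lem:selfnorm_ineq} is essentially $\det(\bar{Y})^{1/2d}$ itself, so on the hypothesis event your bound is $\Gamma^{2}\lesssim \tau^{2}\prn*{d\log(nB/\delta_0+1)+\log(1/\delta_0)}$, and your final requirement is $n\gtrsim \tau^{2}d\log(nB/\delta_0+1)$. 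This is \emph{not} implied by the stated condition $n\geq c\tau^{4}d\log(B/\delta_0+1)$ for any absolute constant $c$: take $\tau$ and $B/\delta_0$ of constant order, so the stated condition reads $n\gtrsim d$ while your requirement reads $n\gtrsim d\log(n+1)\approx d\log d$, and with $n\asymp d$ this fails for all large $d$. Resolving your implicit inequality would force a strengthened hypothesis of the form $n\gtrsim \tau^{2}d\log(\tau d)+\tau^{2}d\log(B/\delta_0+1)$, i.e., your route proves a slightly weaker lemma than the one stated.

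The fix is small and is exactly what the paper does: \pref{lem:selfnorm_ineq} only involves the \emph{ratio} $\det(\bar{Y}_{n-1})^{1/2d}\det(V)^{-1/2d}$, and $V$ may be any fixed positive definite matrix, so take $V=\tfrac{3n}{4}I$ (deterministic, since $n$ is known). On the hypothesis event $\sum_{i} f(x_i)f(x_i)^{\trn}\psdleq \tfrac{nB}{\delta_0}I$ the ratio is at most $\prn*{1+\tfrac{4B}{3\delta_0}}^{1/2}$ --- the factor of $n$ cancels --- giving $\gamma^{2}\lesssim \tau^{2}\prn*{d\log(2B/\delta_0+1)+\log(1/\delta)}$ with no $\log n$. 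Applying \pref{lem:psd_result_2} with this same $V$ gives $\nrm*{v^{\trn}Q}\leq \gamma\sqrt{v^{\trn}Pv+3n/4}$ for unit $v$, and then your completion-of-square step (equivalently the paper's bound $\kappa^{2}-2\gamma\sqrt{\kappa^{2}+3n/4}+3n/4\geq -\gamma^{2}-2\gamma\sqrt{3n/4}+3n/4$) yields $\tfrac{n}{4}$ as soon as $n\geq c\gamma^{2}$, which now matches the stated sample size. Your remaining ingredients --- absorbing $v^{\trn}Pv$ into the positive semidefinite signal term, the index bookkeeping, and the allocation of failure probabilities ($\delta_0$ to the hypothesis, $\delta_0/2$ to each of the other two events, totaling $2\delta_0$) --- are sound and coincide with the paper's argument.
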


\begin{proof}[\pfref{lem:sample_cov}]
From the dynamics \pref{eq:system}, we have deterministic identities
	\begin{align}
		x_{i+1} x_{i+1}^{\trn} &= f(x_i) f(x_i)^{\trn} + f(x_i) \varepsilon_{i}^{\trn} + \varepsilon_{i} f(x_i)^{\trn} + \varepsilon_{i} \varepsilon_{i}^{\trn}, \nonumber \intertext{and}
		\sum_{i=1}^n x_{i+1} x_{i+1}^{\trn} &= \sum_{i=1}^nf(x_i) f(x_i)^{\trn} +  f(x_i) \varepsilon_{i}^{\trn} + \varepsilon_{i} f(x_i)^{\trn} + \varepsilon_{i} \varepsilon_{i}^{\trn}. \label{eq:expansion}
	\end{align}
	We know from~\pref{lem:subgaussian_invertible} that for any $\delta>0$, when $n\geq{}c\tau^{4}(d+\log(2/\delta))$,
        with probability at least $1-\delta$,
	\begin{equation}
	\frac{3}{4}I \preceq \frac{1}{n}\sum_{i=1}^{n} \varepsilon_{i} \varepsilon_{i}^{\trn} \preceq \frac{5 }{4}I. \label{eq:error_event} 
      \end{equation}
Now, define 
\[
P \ldef \sum_{i=1}^{n-1}f(x_i) f(x_i)^{\trn},\quad Q \ldef  \sum_{i=0}^{n-1}f(x_i)
\varepsilon_{i}^{\trn},\quad\text{and}\quad 
V \ldef \frac{3n
	}{4}I.
\]	
To prove the main result, we use \pref{lem:psd_result_2} and
\pref{lem:selfnorm_ineq} to show that the cross terms in
\pref{eq:expansion} have little impact. Specifically, \pref{lem:selfnorm_ineq} states that for any $\delta>0$, with probability at least $1-\delta$, it is ensured that 
\begin{equation}
          \nrm*{ (P+V)^{-1/2} Q}_{\op} \leq c\tau \sqrt{d \log{\prn*{\frac{5 \text{det}(P+V)^{1/2d} \text{det}(V)^{-1/2d}}{\delta^{1/d}}}}}. \label{eq:selfnorm_event}
\end{equation}
Let $\gamma \ldef  c\tau\sqrt{d
          \log{\left(\frac{5 \mathrm{det}(P+V)^{1/2d}
                \mathrm{det}(V)^{-1/2d}}{\delta^{1/d}}\right)}}$. Conditioning
        on the event \pref{eq:selfnorm_event} and using
        \pref{lem:psd_result_2}, we have that for \textit{any} unit vector
        $v \in \Rb^d$, $\lnorm v^{\trn}Q\rnorm \leq \sqrt{\kappa^2 +
          \frac{3n}{4}} \gamma$, where $\kappa^2 = v^{\trn} P
        v$. Substituting this bound into \pref{eq:expansion} and
        conditioning on \pref{eq:error_event} we are guaranteed that
        for any unit vector $v$,
	\begin{align}
	\label{eq:empcov_lb}
		v^{\trn} \sum_{i=1}^n x_{i+1} x_{i+1}^{\trn} v \geq \kappa^2 -  2\sqrt{\kappa^2 + \frac{3n }{4}} \gamma + \frac{3n}{4}.
	\end{align}
	Selecting $\delta=\delta_0/2$ and conditioning on the event $\sum_{i=1}^n f(x_i) f(x_i)^{\trn} \preceq
        \frac{nB}{\delta_0}I$, which happens with probability at least
        $1-\delta_0$, we can upper bound $\gamma$ as 
	$$\gamma \leq c\tau\sqrt{
		\log{\left(\frac{5 \mathrm{det}\left(\frac{n B}{\delta_0}I + \frac{3n}{4}I\right)^{1/2}
				\mathrm{det}\left( \frac{3n}{4}I\right)^{-1/2}}{\delta}\right)}} \leq c\tau\sqrt{
		d\log{\left(\frac{2 B}{\delta_0} + 1\right)} + \log{(1/\delta)}}.$$
Simplifying \pref{eq:empcov_lb} further, we have
	\begin{align*}
		v^{\trn} \sum_{i=1}^n x_{i+1} x_{i+1}^{\trn} v \geq  \kappa^2 - 2\kappa\gamma -  2\sqrt{\frac{3n }{4}} \gamma + \frac{3n }{4} \geq -\gamma^{2} -  2\sqrt{\frac{3n }{4}} \gamma + \frac{3n }{4}.
	\end{align*}
	Thus, whenever $n\geq{}c\gamma^{2}$, i.e.
	\begin{equation}
	\label{eq:requirement}
c\tau\sqrt{
	d\log{\left(\frac{2 B}{\delta_0} + 1\right)} + \log{(1/\delta)}} \leq \sqrt{\frac{n}{12}},
	\end{equation}
	we have
	\[
	v^{\trn} \sum_{i=1}^n x_{i+1} x_{i+1}^{\trn} v \geq \frac{n }{4},
	\]
	for any vector unit $v$ after conditioning on the events \pref{eq:error_event}, \pref{eq:selfnorm_event}, and $\sum_{i=1}^n f(x_i) f(x_i)^{\trn} \preceq
	\frac{nB}{\delta_0}I$. 	The condition on $n$ in the lemma
        statement follows from the requirement in
        \pref{eq:requirement}. Since $v^{\trn} \sum_{i=1}^n x_{i+1}
        x_{i+1}^{\trn} v \geq \frac{n }{4}$ holds for any unit vector -- fixed or depending on $\{x_i\}_{i=1}^n$, we have
	\[
	\sum_{i=1}^n 	x_{i+1} x_{i+1}^{\trn} \succeq \frac{n }{4}I.
      \]
      Note that our development so far requires that events \pref{eq:error_event}, \pref{eq:selfnorm_event} and $\sum_{i=1}^n f(x_i) f(x_i)^{\trn} \preceq
      \frac{nB}{\delta_0}I$ occur simultaneously, which happens with probability at least $1-\delta_0-2\delta$. To
        deduce the theorem statement we set $\delta=\delta_0/2$.
      \end{proof}

\pref{lem:sample_cov} requires that there is some $B$ such that $  \sum_{i=1}^n f(x_i) f(x_i)^{\trn} \preceq \frac{nB}{\delta}I$ with probability at least $1-\delta$. We first show that in $(K,\rho)$-\ges systems, this condition is satisfied with $B=R_{K,\rho}$ (\pref{lem:weak_ub}). This immediately gives the lower isometry bound in \pref{thm:ges_stable}. The upper bound in \pref{thm:ges_stable} is attained through \pref{lem:strong_ub}, which sharpens the upper bound in \pref{lem:weak_ub} by removing the $1/\delta$ factor.
\begin{lemma}[Weak Upper Bound]
	\label{lem:weak_ub}
	Let the noise process satisfy \pref{ass:noise}. For any
        $(K,\rho)$-\ges map $f^{\star}$, with probability
        $1-\delta$, 
	\[
	\sum_{i=1}^n x_i^{\trn}K x_i \leq \frac{n \trace(K)}{(1-\rho)\delta}, \quad\text{and}\quad	\sum_{i=1}^n x_i^{\trn} x_i \leq \frac{n \trace(K)}{(1-\rho)\delta}.
	\]
	Furthermore, with probability at least $1-\delta$,
	\[
	\sum_{i=1}^n \fstar(x_i)^{\trn}K \fstar(x_i) \leq \frac{n \rho \trace(K)}{(1-\rho)\delta}, \quad\text{and}\quad	\sum_{i=1}^n \fstar(x_i)^{\trn} \fstar(x_i) \leq \frac{n \rho \trace(K)}{(1-\rho)\delta}.
	\]
\end{lemma}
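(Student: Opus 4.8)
The plan is to control $\En\brk*{\sum_{i=1}^n \nrm*{x_i}_K^2}$ via a one-step Lyapunov recursion and then pass to a high-probability statement through Markov's inequality. First I would establish the recursion. Since (as noted in the footnote after \pref{prop:offset_rad}) $x_i$ is measurable with respect to $\sigma(\veps_1,\dots,\veps_{i-1})$ and therefore independent of $\veps_i$, I can expand
\[
\En\brk*{\nrm*{x_{i+1}}_K^2 \midsem x_i} = \nrm*{\fstar(x_i)}_K^2 + 2\tri*{\fstar(x_i),\, K\,\En\brk{\veps_i}} + \En\brk*{\veps_i^\trn K \veps_i}.
\]
By \pref{ass:noise} the noise is mean-zero with $\En\brk*{\veps_i\veps_i^\trn}=I$, so the cross term vanishes and $\En\brk*{\veps_i^\trn K \veps_i} = \trace(K)$. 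The $(K,\rho)$-\ges property of \pref{def:krho_ges} gives $\nrm*{\fstar(x_i)}_K^2 \leq \rho\,\nrm*{x_i}_K^2$ pointwise, so taking a further expectation yields $\En\brk*{\nrm*{x_{i+1}}_K^2} \leq \rho\,\En\brk*{\nrm*{x_i}_K^2} + \trace(K)$.

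Next, since $x_0 = \zeroes$ (and $(K,\rho)$-\ges forces $\fstar(\zeroes)=\zeroes$), unrolling the recursion gives $\En\brk*{\nrm*{x_i}_K^2} \leq \trace(K)\sum_{j=0}^{i-1}\rho^j \leq \tfrac{\trace(K)}{1-\rho}$ for every $i$, and hence $\En\brk*{\sum_{i=1}^n \nrm*{x_i}_K^2} \leq \tfrac{n\trace(K)}{1-\rho}$. Markov's inequality then yields that with probability at least $1-\delta$, $\sum_{i=1}^n \nrm*{x_i}_K^2 \leq \tfrac{n\trace(K)}{(1-\rho)\delta}$. Because we have assumed $K \psdgeq I$ without loss of generality, $\nrm*{x_i}^2 \leq \nrm*{x_i}_K^2$, so on the \emph{same} event $\sum_{i=1}^n \nrm*{x_i}^2$ obeys the same bound; this gives the first display without needing a union bound over the two inequalities.

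For the second display I would reuse the $(K,\rho)$-\ges inequality: $\sum_{i=1}^n \nrm*{\fstar(x_i)}_K^2 \leq \rho\sum_{i=1}^n \nrm*{x_i}_K^2$ deterministically, so $\En\brk*{\sum_{i=1}^n \nrm*{\fstar(x_i)}_K^2} \leq \tfrac{n\rho\,\trace(K)}{1-\rho}$, and Markov again produces the claimed bound with the extra $\rho$ factor; the unweighted version follows from $K\psdgeq I$ on the same event. There is no genuine obstacle here — the proof is essentially the Lyapunov recursion in expectation followed by Markov. The only points that require care are (i) invoking the filtration correctly so that $\veps_i$ is independent of $x_i$ when computing the conditional expectation, and (ii) observing that within each display the $K$-weighted and unweighted bounds hold on a common event, so the $\delta$ is not split.
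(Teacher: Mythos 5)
Your proposal is correct and follows essentially the same route as the paper: a Lyapunov recursion in expectation (the $(K,\rho)$-\ges{} inequality plus mean-zero, isotropic noise giving $\En\nrm{x_{i+1}}_K^2 \leq \rho\,\En\nrm{x_i}_K^2 + \trace(K)$), followed by Markov's inequality, with the unweighted bounds obtained from $K \psdgeq I$ on the same event. The paper sums the identity over $i$ before taking expectations rather than unrolling a per-step recursion, but this is only a bookkeeping difference, and your handling of the cross term via measurability of $x_i$ with respect to the past noise matches the paper's argument.
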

\begin{proof}[\pfref{lem:weak_ub}]
	Let $\{x_i\}_{t=1}^n$ be the state observations from the nonlinear dynamical system. Then 
	\begin{align*}
		\sum_{i=1}^n x_i^{\trn} K x_i = \sum_{i=0}^{n-1} \wh{x}_i^{\trn}K\wh{x}_i + 2\sum_{i=0}^{n-1} \varepsilon_{i}^{\trn}K \wh{x}_i + \sum_{i=0}^{n-1} \varepsilon^{\trn}_i K \varepsilon_i
	\end{align*}
	where $\wh{x}_i \ldef \fstar(x_i)$. Using the $(K,\rho)$-\ges{} condition, we can upper bound the
        first term on the right-hand side to get 
	\[
	\sum_{i=1}^n x_i^{\trn} K x_i \leq \rho\sum_{i=0}^{n-1} x_i^{\trn}K x_i + 2\sum_{i=0}^{n-1} \varepsilon_{i}^{\trn}K \wh{x}_i +  \sum_{i=0}^{n-1} \varepsilon^{\trn}_i K \varepsilon_i.
	\]
	Define $Y_n \ldef \sum_{i=0}^n x_i^{\trn}K x_i $. Then this is
        equivalent to
	\begin{align}
		Y_n &\leq \rho Y_n + 2\sum_{i=0}^{n-1} \varepsilon_{i}^{\trn}K \wh{x}_i + \sum_{i=0}^{n-1} \varepsilon^{\trn}_i K \varepsilon_i. \label{eq:recursion}
	\end{align}	
	By assumption we have $x_0 = \zeroes$, and taking an expectation gives us
	$\Ex[Y_n] \leq \rho \Ex[Y_n] + \trace(K)$, which implies that $\Ex[Y_n] \leq \frac{\trace(K)}{(1-\rho)}$.
	Then, using Markov's inequality and the fact that $\sigma_{\min}(K) \geq 1$, we have that with probability at least $1-\delta$,
	\[
	\sum_{i=0}^n x_i^{\trn} x_i \leq \frac{n \trace(K)}{(1-\rho)\delta}.
	\] 
	Furthermore, since $\rho\sum_{i=1}^n x_i^{\trn} K x_i \geq \sum_{i=0}^{n-1} \wh{x}_i^{\trn}K\wh{x}_i$, we have with probability at least $1-\delta$
	\[
	\sum_{i=1}^n \fstar(x_i)^{\trn}K \fstar(x_i) \leq \frac{n \rho \trace(K)}{(1-\rho)\delta}.
	\]
\end{proof}
\begin{lemma}[Strong Upper Bound]
  \label{lem:strong_ub}
  Consider any system \pref{eq:system} for which $\fstar$ is
  $(K,\rho)$-\ges, and let the noise process satisfy
  \pref{ass:noise}. For any $\delta>0$, as soon as $n$ is large enough
  such that $c\tau^2 \sqrt{\frac{d}{n} \log{\prn*{\frac{4\trace(K)}{\delta(1-\rho)}+ 1}}} \leq {\frac{1-\rho}{2}}$
	for some absolute constant $c$, we have that with probability at least
        $1-\delta$,
	\[
	\trace\left(\sum_{i=1}^n x_i K x_i^{\trn}\right) \leq 4n \frac{ \trace(K)}{(1-\rho)}, \quad{}  \quad{} \trace\left(\sum_{i=1}^n x_i x_i^{\trn}\right) \leq 4n \frac{ \trace(K)}{(1-\rho)},
      \]
     and 
	\[
	\lnorm\frac{1}{n}\sum_{i=1}^{n-1} x_i \varepsilon_{i}^{\trn} \rnorm_F \leq 	 c\tau \sqrt{\frac{d}{n}\cdot\frac{\trace(K)}{(1-\rho)}\log{\prn*{\frac{4\trace(K)}{(1-\rho)}\cdot\frac{1}{\delta}+1}}}.
      \]
\end{lemma}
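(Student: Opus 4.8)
The plan is to re-run the argument behind the weak upper bound (\pref{lem:weak_ub}), but to replace its Markov step with the self-normalization machinery from the proof of \pref{lem:sample_cov}, so that the $1/\delta$ blow-up is removed. Write $\wh{x}_i := \fstar(x_i)$ and $Y_n := \sum_{i=0}^n x_i^\trn K x_i$. Expanding $x_{i+1}^\trn K x_{i+1} = (\wh{x}_i+\veps_i)^\trn K(\wh{x}_i+\veps_i)$, summing, and using $(K,\rho)$-stability in the form $\wh{x}_i^\trn K\wh{x}_i \le \rho\, x_i^\trn K x_i$ exactly as in the proof of \pref{lem:weak_ub} gives the deterministic inequality $(1-\rho)Y_n \le 2\sum_{i=0}^{n-1}\veps_i^\trn K\wh{x}_i + \sum_{i=0}^{n-1}\veps_i^\trn K\veps_i$. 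I would bound the two random terms on the right separately and then solve the resulting inequality for $Y_n$.

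For the quadratic-in-noise term, $\sum_{i=0}^{n-1}\veps_i^\trn K\veps_i = \tr\bigl(K\sum_i \veps_i\veps_i^\trn\bigr)$, and \pref{lem:subgaussian_invertible} — whose hypothesis is implied by the assumed lower bound on $n$ — gives $\sum_i\veps_i\veps_i^\trn \preceq \tfrac{5n}{4}I$ with high probability, so this term is at most $\tfrac{5n}{4}\tr(K)$. For the cross term, set $Q := \sum_{i=0}^{n-1}\wh{x}_i\veps_i^\trn$, so that $\sum_i\veps_i^\trn K\wh{x}_i = \tr(KQ)$; diagonalizing $K=\sum_j\lambda_j u_ju_j^\trn$ and applying Cauchy--Schwarz, $|\tr(KQ)| \le \sum_j\lambda_j\|u_j^\trn Q\| \le \sqrt{\tr(K)}\,\bigl(\sum_j\lambda_j\|u_j^\trn Q\|^2\bigr)^{1/2}$. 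Now \pref{lem:selfnorm_ineq}, applied with $V = \tfrac{3n}{4}I$ — a choice that makes the determinant ratio inside the logarithm free of $n$ — gives $\|(P+V)^{-1/2}Q\|_{\op} \le \gamma$ with $P := \sum_i\wh{x}_i\wh{x}_i^\trn$, and after conditioning on the weak upper bound $\tr(P)\le \tfrac{n\rho\,\tr(K)}{(1-\rho)\delta'}$ from \pref{lem:weak_ub} to control $\det(P+V)$, one reads off $\gamma \le c\tau\sqrt{d\log\bigl(\tfrac{4\tr(K)}{(1-\rho)\delta}+1\bigr)}$. \pref{lem:psd_result_2} then yields $\|u_j^\trn Q\|^2 \le \gamma^2(u_j^\trn P u_j + u_j^\trn V u_j)$, hence $\sum_j\lambda_j\|u_j^\trn Q\|^2 \le \gamma^2\bigl(\tr(KP) + \tfrac{3n}{4}\tr(K)\bigr)$ with $\tr(KP) = \sum_i\wh{x}_i^\trn K\wh{x}_i \le \rho\, Y_n$; altogether the cross term is at most $2\gamma\sqrt{\tr(K)}\sqrt{\rho Y_n + \tfrac{3n}{4}\tr(K)}$.

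Substituting both estimates into the Lyapunov inequality produces a quadratic inequality in $\sqrt{Y_n}$. Solving it, and using the hypothesis $c\tau^2\sqrt{(d/n)\log(\cdots)}\le(1-\rho)/2$ to verify $\gamma^2 \lesssim (1-\rho)n$ (so that the cross-term and $V$-terms are dominated by $\tfrac{5n}{4}\tr(K)$ with a bit of room to spare), I get $Y_n \le \tfrac{4n\tr(K)}{1-\rho}$. Since $K\succeq I$ this is precisely the first displayed bound, and the second follows from $\tr(\sum_i x_ix_i^\trn) = \sum_i\|x_i\|^2 \le Y_n$. For the Frobenius-norm estimate, observe that $\{x_i\veps_i^\trn\}$ is a matrix-martingale difference sequence ($x_i$ is $\sigma(\veps_0,\ldots,\veps_{i-1})$-measurable and $\veps_i$ is conditionally mean-zero and subgaussian), so the vector-valued self-normalized martingale bound underlying \pref{lem:selfnorm_ineq} applies with predictable sequence $x_i$ (cf. the footnote to that lemma): with $P' := \sum_i x_ix_i^\trn$ and $V = \tfrac{3n}{4}I$ we get $\|(P'+V)^{-1/2}\sum_i x_i\veps_i^\trn\|_{\op}\le\gamma'$, where $\gamma'$ has the same form as $\gamma$ once the trace bound just proved is used to control $\det(P'+V)$. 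Applying \pref{lem:psd_result_2} to each basis vector $e_k$ and summing over $k$, $\|\sum_i x_i\veps_i^\trn\|_F^2 = \sum_k\|e_k^\trn\sum_i x_i\veps_i^\trn\|^2 \le (\gamma')^2\bigl(\tr(P') + \tr(V)\bigr) \lesssim (\gamma')^2\,\tfrac{n\tr(K)}{1-\rho}$, and dividing by $n^2$ and taking square roots gives the claimed bound.

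I expect the main obstacle to be the bookkeeping around the quadratic inequality: one must choose the absolute constant $c$ in the $n$-condition concretely so that the target constant $4$ really emerges, tracking how $\rho$, $\tr(K)$, $\gamma$, and $n$ combine. A more delicate point, already flagged above, is that the logarithmic factor inside $\gamma$ a priori depends on $\det(\sum_i\wh{x}_i\wh{x}_i^\trn + V)$ — hence on the very trajectory being bounded — so one must first invoke the weaker, $1/\delta$-lossy \pref{lem:weak_ub} to pin it down; there is no circularity, since the strong bound is never used to produce this $\gamma$. Finally, the various high-probability events (the noise isometry of \pref{lem:subgaussian_invertible}, the weak bound of \pref{lem:weak_ub}, and the two self-normalized events) are intersected, each at level $\Theta(\delta)$, which the constant $c$ also absorbs.
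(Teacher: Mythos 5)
Your proposal is correct and follows essentially the same route as the paper's proof: the same Lyapunov expansion under $(K,\rho)$-stability, the weak bound of \pref{lem:weak_ub} used (non-circularly) to fix the log-determinant factor in \pref{lem:selfnorm_ineq}, \pref{lem:psd_result_2} for the cross term, \pref{lem:subgaussian_invertible} for the noise quadratic, and finally an absorbed quadratic inequality in the weighted trace, with the Frobenius bound obtained from the same self-normalized event applied with the predictable sequence $x_i$. The only differences are cosmetic—you diagonalize $K$ and apply the self-normalized/projection lemmas eigenvector-by-eigenvector with $V=\tfrac{3n}{4}I$, whereas the paper converts traces to expectations over a random test vector $v$ with $\Ex[vv^{\trn}]=I$ and works with $K^{1/2}$-weighted quantities—and the constant bookkeeping you flag is exactly what the paper's absolute constant $c$ absorbs.
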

\begin{proof}[\pfref{lem:strong_ub}]
First, observe that
	\begin{align*}
		\sum_{i=1}^n x_i^{\trn} K x_i = \sum_{i=0}^{n-1} \wh{x}_i^{\trn}K\wh{x}_i + 2\sum_{i=0}^{n-1} \varepsilon_{i}^{\trn}K \wh{x}_i + \sum_{i=0}^{n-1} \varepsilon^{\trn}_i K \varepsilon_i,
	\end{align*}
	where $\wh{x}_i \ldef \fstar(x_i)$. As in proof of \pref{lem:weak_ub}, we may use the $(K,\rho)$-\ges property to upper bound by
	\begin{equation}
	\sum_{i=1}^n x_i^{\trn} K x_i \leq \rho\sum_{i=0}^{n-1} x_i^{\trn}K x_i + 2\sum_{i=0}^{n-1} \varepsilon_{i}^{\trn}K \wh{x}_i + \sum_{i=1}^n \varepsilon^{\trn}_i K \varepsilon_i. \label{eq:first_expand}
      \end{equation}
For the remainder of the proof, we define $v$ be an arbitrary random vector with
$\En\brk*{vv^{\trn}}=I$; this random variable is only used for
analysis and is independent of the underlying data generating
process. Observe that for any vector $u$ and matrix $A \succ 0$, we have \[u^{\trn}Au = \trace(A^{1/2}uu^{\trn}A^{1/2}) = \trace(\Ex_v[v^{\trn}
A^{1/2}uu^{\trn}A^{1/2}v]).\]
Since $x_0 = \zeroes$ by
assumption, Eq. \pref{eq:first_expand} becomes
	\begin{align*}
		(1-\rho)\sum_{i=0}^n\Ex_{v}[ v^{\trn}( K^{1/2} x_i x_i^{\trn}K^{1/2})v] &\leq \sum_{i=0}^{n-1} \Ex_{v}[v^{\trn} K^{1/2} \varepsilon_{i} \varepsilon_{i}^{\trn}K^{1/2}v] + 2 \sum_{i=0}^{n-1} \Ex_{v}[v^{\trn} K^{1/2} x_i \varepsilon_{i}^{\trn}K^{1/2}v].
	\end{align*}
	Define random variables $\alpha_{v} \ldef (1/n)\sum_{i=0}^n
        v^{\trn}( K^{1/2} x_i x_i^{\trn}K^{1/2})v$, $ \beta_{v} \ldef
        (1/n) \sum_{i=0}^{n-1} v^{\trn}( K^{1/2} \varepsilon_{i}
        \varepsilon_{i}^{\trn}K^{1/2})v$. Then, we have
	\begin{align}
		(1-\rho) \Ex_v[n \alpha_{v}] &\leq 2 \sum_{i=0}^{n-1} \Ex_{v}[v^{\trn} K^{1/2} \wh{x}_i \varepsilon_{i}^{\trn}K^{1/2}v] + \Ex_v[ n\beta_{v}]. \label{eq:mid_eq}
	\end{align}
        For any $v \in \Rb^d$,  we have $\sum_{i=1}^n v^{\trn} K^{1/2} \wh{x}_i
        \varepsilon_{i}^{\trn}K^{1/2}v \leq \sqrt{n \alpha_{v} + n \beta_{v}}
        \gamma$ by \pref{lem:psd_result_2}, where $\gamma$ is an upper
        bound on
        \[
        \lnorm \left(\sum_{i=0}^n K^{1/2} \wh{x}_i \wh{x}_i^{\trn}K^{1/2} + K^{1/2}\varepsilon_{i} \varepsilon_{i}^{\trn} K^{1/2}\right)^{-1/2}\left(K^{1/2} \wh{x}_i \varepsilon_{i}^{\trn}K^{1/2}\right) \rnorm_{\op} \leq \gamma.
      \]
      We proceed to bound $\gamma$. Since the dynamics satisfy the $(K, \rho)$-\ges property we have
		\[
		\sum_{i=0}^n \wh{x}_i^{\trn}K \wh{x}_i \leq \rho \sum_{i=0}^n x_i^{\trn}K x_i,
		\]
		and by \pref{lem:weak_ub} and \pref{lem:subgaussian_invertible}
        it follows with probability at least $1-\delta$ that
	\begin{equation}
	\sum_{i=0}^n x_i^{\trn}K x_i \leq \frac{n \trace(K)}{(1-\rho)\delta} \rdef \frac{n\wh{B}}{\delta} \quad \text{and} \quad \frac{nI}{2} \preceq \sum_{i=0}^{n-1} \varepsilon_{i} \varepsilon_{i}^{\trn} \preceq \frac{3nI}{2}. \label{eq:upper_eps_x}
	\end{equation}
	By conditioning on \pref{eq:upper_eps_x}, $\sum_{i=0}^n K^{1/2} \wh{x}_i \wh{x}_i^{\trn} K^{1/2} \preceq \frac{n \wh{B}}{\delta}I$ and choosing $V = \frac{3n}{2}K$, we can ensure with probability at least $1-\delta$ that 
	\begin{equation}
	\lnorm \left(\sum_{i=0}^n K^{1/2} (\wh{x}_i \wh{x}_i^{\trn} + \varepsilon_{i} \varepsilon_{i}^{\trn}) K^{1/2}\right)^{-1/2}\left(K^{1/2} \wh{x}_i \varepsilon_{i}^{\trn}K^{1/2}\right) \rnorm_{\op} \leq c\tau \sqrt{d \log{\Big(\frac{\wh{B}}{\delta} + 1\Big)}}. \label{eq:cross_err}
	\end{equation}
	from \pref{lem:selfnorm_ineq}.
	Thus, by setting $\gamma = c\tau \sqrt{d \log{\Big(\frac{\wh{B}}{\delta} + 1\Big)}}$ in \pref{eq:mid_eq} we have with probability at least $1-\delta$
	\begin{align*}
		(1-\rho)\Ex_v[n \alpha_{v}] \leq \Ex_v[\sqrt{n \alpha_{v} + n \beta_{v}}] \gamma + \Ex_v[n \beta_{v}]  \leq \Ex_v[\sqrt{n \alpha_{v}} + \sqrt{n \beta_{v}}] \gamma + \Ex_v[n \beta_{v}].
	\end{align*}
	Whenever $n$ is large enough to satisfy
        $\frac{\gamma}{\sqrt{n}} \leq \frac{1-\rho}{2 \tau}$, this implies
	\[
	\Ex_v[\alpha_{v}] \leq \frac{4\Ex_v[\beta_{v}]}{(1-\rho)}.
	\]
	From \pref{lem:subgaussian_invertible}, we have 
	 \begin{equation}
	 \frac{1}{n}\sum_{i=0}^{n-1} \varepsilon_{i} \varepsilon_{i}^{\trn} \preceq \frac{3I}{2} \label{eq:err_isometry}
	 \end{equation}
	 with probability $1-\delta$ whenever $n\geq{}c\tau^{4}(d+\log(2/\delta))$, which in particular is satisfied when $\frac{\gamma}{\sqrt{n}} \leq \frac{1-\rho}{2 \tau}$. Then, conditioning on both \pref{eq:cross_err} and \pref{eq:err_isometry}, we have $\Ex_v[\beta_{v}] \leq \frac{3}{2}\Ex_v[v^{\trn} K v]$ and 
	\[
	\trace\prn*{\sum_{i=1}^n x_i K x_i^{\trn}} \leq \frac{4n\trace(K)}{(1-\rho)}.
	\]
	The main claim now follows because \pref{eq:cross_err} and \pref{eq:err_isometry} occur simultaneously with probability at least $1-2\delta$. Furthermore, we also have 
	\begin{align*}
		2  \lnorm \frac{1}{n}\sum_{i=1}^{n-1} K^{1/2}x_i \varepsilon_{i}^{\trn} K^{1/2}\rnorm_F &= \sqrt{\Ex_{v}\lbr \lnorm\frac{1}{n}\sum_{i=1}^{n-1}K^{1/2} x_i \varepsilon_{i}^{\trn}K^{1/2}v \rnorm_2^2\rbr }\leq 
		2\frac{\gamma}{\sqrt{n}} \sqrt{\Ex_{v}[(\alpha_{v} + \beta_{v})] }   
	\end{align*}
with probability atleast $1 -2\delta$. Simplifying, this implies
\[
  \lnorm \frac{1}{n}\sum_{i=1}^{n-1} K^{1/2}x_i \varepsilon_{i}^{\trn} K^{1/2}\rnorm_F \leq  c\tau \sqrt{\frac{d\trace{(K)}}{n(1-\rho)}\log{\prn*{\frac{4\trace(K)}{\delta(1-\rho)}+1}}},
\]
for some absolute constant $c>0$. Finally, the condition on $n$ in the
lemma statement follows by simplifying and expanding the condition $\frac{\gamma}{\sqrt{n}} \leq \frac{1-\rho}{2 \tau}$.
\end{proof}
\begin{proof}[\pfref{thm:ges_stable}]
The lower isometry bound is attained by applying \pref{lem:sample_cov} using the upper bound from \pref{lem:weak_ub}, and the upper isometry bound follows immediately from \pref{lem:strong_ub}.
\end{proof}


	\section{Proofs from \pref{sec:learning}}
	For the remainder of the appendix we make use of the 
	filtration
	\begin{equation}
	\cG_i \ldef{} \sigma(\veps_0,x_1,\veps_1,\ldots,x_{i-1},\veps_{i-1},x_i).\label{eq:filtration}
	\end{equation}

	\subsection{Proof of \pref{thm:known_link_main}}
	\label{app:known_link}

We first define two parameters:
	\begin{equation}
	\err = c\tau \sqrt{\frac{d\trace{(K)}}{n(1-\rho)}\log{\left(\frac{4\trace(K)}{\delta(1-\rho)}+1\right)}}, \quad{}\text{and}\quad{} B = \frac{ 4\trace(K)}{(1-\rho)}.\label{eq:mu_def}
	\end{equation}
Let us proceed with the proof. To begin, recall from \pref{lem:strong_ub} that whenever
	\[
	c \tau \sqrt{\frac{d}{n} \log{\left(\frac{B}{\delta}+ 1\right)}} \leq {\frac{1-\rho}{2 \tau}},
	\]
	we have with probability at least $1-\delta$,
	\[
	\trace\left(\frac{1}{n}\sum_{i=1}^n x_i x_i^{\trn}\right) \leq \frac{ 4\trace(K)}{(1-\rho)}, \quad{}\text{and}\quad{} \lnorm \frac{1}{n} \sum_{i=1}^n \varepsilon_{i} x_i^{\trn} \rnorm_F \leq c\tau \sqrt{\frac{d\trace{K}}{n(1-\rho)}\log{\left(\frac{4\trace(K)}{\delta(1-\rho)}+1\right)}}.
      \]
      From here, we condition on this good event above and split the
      proof into two cases. First, we handle only the prediction error/denoising
      guarantee \pref{eq:thm2_nondecreasing}, which requires only that
      the link function $\sigma$ is non-decreasing and does not
      require a lower bound on the link derivative. Then, in the
      second part of the proof, we use the assumption that the
      $\sigma$ is strictly increasing to strengthen this bound and provide a
      parameter recovery guarantee. The first part of the proof
      extends the arguments of \cite{kakade2011efficient} to the dependent setting
      where data is generated from a generalized linear system, while
      the second part uses the refined isometry guarantees developed
      in \pref{sec:stability}.

\subsubsection{\pfref{thm:known_link_main}, Part I: Slow rate for
  prediction error}
\label{app:strict_non_monotone}
Throughout this proof we use that the projection operation in
\pref{alg:parameter_estimates} ensures that for all $t$,
$\lnorm\Theta^{(t)} - \Thetas\rnorm_F \leq 2W$. From \pref{alg:parameter_estimates} we have that
	\begin{align}
	\lnorm\Thetatt - \Thetas \rnorm_F &= \lnorm \text{Proj}_{\Mc}(\Thetat - \lr \hG(\Thetat, X)) -\Thetas\rnorm_F, \nonumber 
	\end{align}
which implies that
	\begin{align}
	\lnorm\Thetatt - \Thetas\rnorm_F &\leq \lnorm\Thetat - \lr \hG(\Thetat, X) -\Thetas\rnorm_F. \nonumber 
	\end{align}
	 Furthermore, recall that $\hG(\Thetat, X)$ is defined as
         $\hG(\Theta^{(t)}, X) = \frac{1}{n} \sum_{i=1}^n
         (\sigma(\Theta^{(t)} x_{i}) - x_{i+1})x_i^{\trn}$. Thus, by
         expanding the right-hand side above, we have
	\begin{align}
	\lnorm \Thetat - \Thetas\rnorm^2_F - \lnorm\Thetatt - \Thetas\rnorm^2_F &\geq \frac{2 \lr}{n} \sum_{i=1}^n \la x_{i+1} - \sigma(\Thetat x_i), (\Thetas - \Thetat)x_i \ra \label{eq:first_eq1} \\
	&~~~~- \lr^2 \lnorm \frac{1}{n}\sum_{i=1}^n (x_{i+1} - \sigma(\Thetat x_i))x_i^{\trn} \rnorm_F^2. \label{eq:first_eq2}
	\end{align}
	To obtain a lower bound on $	\lnorm\Thetat -
        \Thetas\rnorm^2_F - \lnorm\Thetatt - \Thetas\rnorm^2_F$ we
        need a lower bound on the right-hand side in
        \pref{eq:first_eq1} and upper bound on the norm in \pref{eq:first_eq2}. Analyzing \pref{eq:first_eq1} first, we get 
	\begin{align*}
	\frac{2}{n} \sum_{i=1}^n \la x_{i+1} - \sigma(\Thetat x_i), (\Thetas - \Thetat)x_i \ra &= \frac{2}{n} \sum_{i=1}^n \la \sigma(\Thetas x_i) - \sigma(\Thetat x_i), (\Thetas - \Thetat)x_i \ra \\
	&~~~~+ \frac{2}{n} \sum_{i=1}^n \la \varepsilon_{i}, (\Thetas - \Thetat)x_i \ra.
	\end{align*}
	Note that since $\lnorm \frac{2}{n} \sum_{i=1}^n
        \varepsilon_{i} x_i^{\trn} \rnorm_F \leq 2\err$, and
        $\lnorm\Thetas - \Thetat \rnorm_F \leq 2W$, we have
	\begin{equation}
	\frac{2}{n} \sum_{i=1}^n \la \varepsilon_{i}, (\Thetas - \Thetat)x_i \ra \geq - \lnorm \frac{2}{n} \sum_{i=1}^n \varepsilon_{i} x_i^{\trn} \rnorm_F \lnorm\Thetas - \Thetat\rnorm_F \geq -4W \err. \label{eq:self_norm_cross}
	\end{equation}
Since each $\sigma_i$ is non-decreasing and $1$-Lipschitz, we also have
	\begin{equation}
	\frac{2}{n} \sum_{i=1}^n \la \sigma(\Thetas x_i) - \sigma(\Thetat x_i), (\Thetas - \Thetat)x_i \ra \geq \frac{2}{n} \sum_{i=1}^n \lnorm \sigma(\Thetas x_i) - \sigma(\Thetat x_i)\rnorm_2^2 . \label{eq:inner_lower_bnd}
	\end{equation}
	Together, these lead to the following lower bound on \pref{eq:first_eq1}:
	\[
	\frac{2 \lr}{n} \sum_{i=1}^n \la x_{i+1} - \sigma(\Thetat x_i), (\Thetas - \Thetat)x_i \ra \geq 2 \lr( \cE({\Theta}^{(t)}) - 2 W \err).
	\]
	For \pref{eq:first_eq2} we have 
	\begin{align*}
	\lnorm\frac{1}{n}\sum_{i=1}^n (x_{i+1} - \sigma(\Thetat x_i))x_i^{\trn}\rnorm_F^2 &= 	\lnorm\frac{1}{n}\sum_{i=1}^n (x_{i+1} - \sigma(\Thetas x_i) + \sigma(\Thetas x_i) - \sigma(\Thetat x_i))x_i^{\trn}\rnorm_F^2 \\
	&\leq 2\lnorm\frac{1}{n}\sum_{i=1}^n (x_{i+1} - \sigma(\Thetas x_i))x_i^{\trn}\rnorm_F^2 + 2\lnorm\frac{1}{n}\sum_{i=1}^n (\sigma(\Thetat x_i) - \sigma(\Thetas x_i))x_i^{\trn}\rnorm_F^2
	\end{align*}
	Since $\veps_i = x_{i+1} - \sigma(\Thetas x_i)$ we have
	\[
	\lnorm \frac{1}{n}\sum_{i=1}^n (x_{i+1} - \sigma(\Thetas x_i))x_i^{\trn}\rnorm_F^2 \leq \err^2, 
	\]
	and by the Cauchy Schwarz inequality
	\[
	\lnorm \frac{1}{n}\sum_{i=1}^n (\sigma(\Thetat x_i) - \sigma(\Thetas x_i))x_i^{\trn}\rnorm_F^2 \leq\left(\frac{1}{n}\sum_{i=1}^n \lnorm  (\sigma(\Thetat x_i) - \sigma(\Thetas x_i)) \rnorm^2 \right) \cdot\trace\left(\frac{1}{n}\sum_{i=1}^n x_i x_i^{\trn}\right).
	\]
        Combining these upper bounds, we have
	\[
		\lnorm \frac{1}{n}\sum_{i=1}^n (x_{i+1} - \sigma(\Thetat x_i))x_i^{\trn}\rnorm_F^2 \leq 2\err^2 + 2\left(\frac{1}{n}\sum_{i=1}^n \lnorm  (\sigma(\Thetat x_i) - \sigma(\Thetas x_i)) \rnorm^2 \right) \cdot\trace\prn*{\frac{1}{n}\sum_{i=1}^n x_i x_i^{\trn}}.
	\]
	Since $\trace(\frac{1}{n}\sum_{i=1}^n x_i x_i^{\trn}) \leq B$,
        we can further upper bound the right-hand side using
	\[
	\lnorm\frac{1}{n}\sum_{i=1}^n (\sigma(\Thetat x_i) - \sigma(\Thetas x_i))x_i^{\trn}\rnorm_F^2 \leq \cE({\Theta}^{(t)})\cdot{}B.
	\]
	Combining the bounds for \pref{eq:first_eq1} and \pref{eq:first_eq2} we have
	\[
	\lnorm\Thetat - \Thetas\rnorm^2_F - \lnorm\Thetatt - \Thetas\rnorm^2_F \geq 2 \lr( \cE({\Theta}^{(t)}) - 2 W \err) - 2 \lr^2 (\err^2 + \cE({\Theta}^{(t)}) B ).
	\]
	By choosing $\lr = \frac{1}{4B}$ we get 
	\begin{equation}
	\label{eq:thet_diff}
	\lnorm\Thetat - \Thetas\rnorm^2_F - \lnorm\Thetatt - \Thetas\rnorm^2_F \geq c\left(\frac{\cE({\Theta}^{(t)})}{B} -\frac{5 W \err}{B} -  \frac{W \err^2}{B^2} \right),
	\end{equation}
	Summing Eq.~\pref{eq:thet_diff} we have
	\begin{align}
          \frac{1}{m}\cdot{}\sum_{t=0}^{m-1}\prn*{	\lnorm\Thetat - \Thetas\rnorm^2_F - \lnorm\Thetatt - \Thetas\rnorm^2_F} &\geq{} c\left(\frac{\sum_{t=0}^{m-1}\cE({\Theta}^{(t)})}{mB} -\frac{5 W \err}{B} -  \frac{W \err^2}{B^2} \right). \nonumber
	\end{align}
        Observing that the left-hand side telescopes, after
        rearranging we have
	\begin{align}
	\Ex_{\cA}[\cE(\wh{\Theta})] &\leq 5 W \err +  \frac{W \err^2}{B} + \frac{BW^2}{cm} \label{eq:inter}
	\end{align}
	where $\Ex_{\cA}[\cE(\wh{\Theta})] = \frac{\sum_{t=0}^{m-1}\cE({\Theta}^{(t)})}{mB}$. We choose the number of iterations, $m$, such that $m \geq \frac{WB}{5 \mu}$ and then \pref{eq:inter} becomes
	\[
          \Ex_{\cA}[\cE(\wh{\Theta})] \leq 10 W \err +  \frac{W \err^2}{B} \leq 20 (W/B) \cdot{} \prn*{B \err \vee \err^2}.
	\]
        \qed
	
\subsubsection{\pfref{thm:known_link_main}, Part II: Fast rate for
  prediction and parameter recovery}
\label{app:strict_monotone}
Compared to the slow rate setting, to attain fast rates for strictly
increasing link functions, we provide a tighter bound on
the term $\frac{2}{n} \sum_{i=1}^n \la \varepsilon_{i}, (\Thetas -
\Thetat)x_i \ra$ in \pref{eq:self_norm_cross} so that only terms
proportional to $\err^2$ remain. This is made possible by the fact
that we can lower bound the prediction error in terms of $\lnorm\Thetat - \Thetas\rnorm$.
		
To begin, since $|\sigma(a) - \sigma(b)| \geq \zeta |a -b| > 0$, we have
	\[
	 \frac{1}{n}\sum_{i=1}^n \lnorm (\sigma(\Thetat x_i) - \sigma(\Thetas x_i)) \rnorm^2 \geq  \frac{\zeta^2}{n}\sum_{i=1}^n \lnorm \Thetat x_i - \Thetas x_i \rnorm^2.
	\] 
	Recall from \pref{eq:first_eq1} and \pref{eq:first_eq2} that
	\begin{align}
	\lnorm\Thetat - \Thetas\rnorm^2_F - \lnorm\Thetatt - \Thetas\rnorm^2_F &\geq \frac{2 \lr}{n} \sum_{i=1}^n \la x_{i+1} - \sigma(\Thetat x_i), (\Thetas - \Thetat)x_i \ra \label{eq:d1}\\
	&~~~~- \lr^2 \lnorm \frac{1}{n}\sum_{i=1}^n (x_{i+1} - \sigma(\Thetat x_i))x_i^{\trn} \rnorm_F^2. \label{eq:d2}
	\end{align}
Using the same analysis as in \pref{app:strict_non_monotone}, we have	
	\begin{equation}
	\lr^2 \lnorm \frac{1}{n}\sum_{i=1}^n (x_{i+1} - \sigma(\Thetat x_i))x_i^{\trn} \rnorm_F^2 \leq 2 \lr^2 (\err^2 + \cE({\Theta}^{(t)}) B ) \leq 2 \lr^2 \left(\err^2 +  \lnorm\Thetat - \Thetas\rnorm^2_F B^2  \right), \label{eq:inter_1}
	\end{equation}
	since $\cE(\Thetat) \leq  \lnorm\Thetat - \Thetas\rnorm^2_F
        B$. Next, recall that
	\begin{align}
		\frac{2}{n} \sum_{i=1}^n \la x_{i+1} - \sigma(\Thetat x_i), (\Thetas - \Thetat)x_i \ra &= \frac{2}{n} \sum_{i=1}^n \la \sigma(\Thetas x_i) - \sigma(\Thetat x_i), (\Thetas - \Thetat)x_i \ra \notag\\
		&~~~~+ \frac{2}{n} \sum_{i=1}^n \la \varepsilon_{i}, (\Thetas - \Thetat)x_i \ra, \notag\\
		&\geq \frac{2}{n} \sum_{i=1}^n \lnorm\sigma(\Thetas x_i) - \sigma(\Thetat x_i)\rnorm^2 + \frac{2}{n} \sum_{i=1}^n \la \varepsilon_{i}, (\Thetas - \Thetat)x_i \ra,\notag
	\end{align}
	where the inequality follows because $\sigma(\cdot)$ is
        coordinate wise $1$--Lipschitz and
        non--decreasing. Furthermore, by \pref{lem:sample_cov} we have
        that once $n$ is sufficiently large (in particular, when $n$
        satisfies the assumptions of the theorem statement), with high
        probability, $\sum_{i=1}^n x_i x_i^{\trn} \succeq
        \frac{nI}{4}$. This in turn implies that
	\[
	 \frac{1}{n}\sum_{i=1}^n \lnorm (\sigma(\Thetat x_i) - \sigma(\Thetas x_i)) \rnorm^2 \geq  \frac{\zeta^2}{n}\sum_{i=1}^n \lnorm \Thetat x_i - \Thetas x_i \rnorm^2 \geq \frac{\zeta^2}{4} \lnorm \Thetat  - \Thetas  \rnorm^2_F.
	\]
	Next, introducing a free parameter $\gamma>0$, we lower bound
        $\frac{2}{n} \sum_{i=1}^n \la \varepsilon_{i}, (\Thetas -
        \Thetat)x_i \ra$ via Cauchy-Schwarz and the AM-GM inequality
	\begin{align}
	\frac{2}{n} \sum_{i=1}^n \la \varepsilon_{i}, (\Thetas - \Thetat)x_i \ra &\geq - \lnorm\frac{2}{n} \sum_{i=1}^n \varepsilon_{i} x_i^{\trn} \rnorm_F \lnorm\Thetas - \Thetat\rnorm_F \geq - \gamma \lnorm\frac{2}{n} \sum_{i=1}^n \varepsilon_{i} x_i^{\trn} \rnorm^2_F - \frac{1}{\gamma}\lnorm\Thetas - \Thetat\rnorm_F^2. \label{eq:am_gm_fast}
	\end{align}
	Since $\lnorm\frac{2}{n} \sum_{i=1}^n \varepsilon_{i}
        x_i^{\trn} \rnorm^2_F \leq 4 \err^2$, we have
	\[
	\frac{2}{n} \sum_{i=1}^n \la \varepsilon_{i}, (\Thetas - \Thetat)x_i \ra \geq - 4\gamma \err^2  -\frac{1}{\gamma}\lnorm\Thetas - \Thetat\rnorm_F^2.
	\]
	With these developments, we deduce the following lower bound
        on the right-hand side of \pref{eq:d1}:
	\begin{equation}
\frac{2}{n} \sum_{i=1}^n \la x_{i+1} - \sigma(\Thetat x_i), (\Thetas - \Thetat)x_i \ra \geq \frac{\zeta^2}{4} \lnorm \Thetat  - \Thetas  \rnorm^2_F - \frac{1}{\gamma}\lnorm \Thetat  - \Thetas  \rnorm^2_F - 4\gamma \err^2.
	\end{equation}
	By setting $\gamma = \frac{8}{\zeta^2}$, this becomes
	\begin{equation}
          \label{eq:first_eq_12}
	\frac{2}{n} \sum_{i=1}^n \la x_{i+1} - \sigma(\Thetat x_i), (\Thetas - \Thetat)x_i \ra \geq \frac{\zeta^2}{8} \lnorm \Thetat  - \Thetas  \rnorm^2_F  - \frac{32}{\zeta^2} \err^2.
	\end{equation}
      Combining \pref{eq:d1}, \pref{eq:first_eq_12}, and
      \pref{eq:inter_1}, we have
	\[
\lnorm\Thetat - \Thetas\rnorm^2_F - \lnorm\Thetatt - \Thetas\rnorm^2_F \geq 2 \lr \left( \frac{\zeta^2}{8} \lnorm \Thetat  - \Thetas  \rnorm^2_F  - \frac{32}{\zeta^2} \err^2 \right) - 2 \lr^2 (\err^2 + \lnorm \Thetat  - \Thetas  \rnorm^2_F B^2 ).
\]	
By selecting $\lr = \frac{\zeta^2}{16B^2}$, we have that for absolute constants $c_1, c_2$,
\[
\lnorm\Thetat - \Thetas\rnorm^2_F - \lnorm\Thetatt - \Thetas\rnorm^2_F \geq \frac{c_1 \zeta^4 \lnorm\Thetat - \Thetas\rnorm^2_F }{B^2} -  \frac{c_2 \err^2  }{B^2}.
\]
Rearranging, we have
\begin{align*}
	\nrm{\Thetatt - \Thetas}_F^2 &\leq \left(1 - \frac{c_1 \zeta^4}{B^2}\right) \nrm{\Thetat - \Thetas}_F^2  + \frac{c_2 \err^2}{B^2},
\end{align*}
and by applying this bound recursively, we get
\begin{align*}
	\nrm{\Thetatt - \Thetas}_F^2 &\leq \left(1 - \frac{c_1 \zeta^4}{B^2}\right)^t W^2 + \frac{c_2 \err^2}{c_1\zeta^4} .
\end{align*}
Consequently, there exists an absolute constant $c$ such that for $t \geq \frac{c B^2}{\zeta^4} \log{\left(\frac{W^2\zeta^4}{\err^2}\right)} \vee 1$, we have 
\[
\nrm{\Thetatt - \Thetas}_F^2 \leq  \frac{2c_2\err^2}{c_1\zeta^4}.
\]

We can now use the bound on $\nrm{\Thetatt - \Thetas}_F^2$, to obtain an upper bound on
\[
\cE(\Theta^{(t)}) = \frac{1}{n}\sum_{i=1}^n \lnorm (\sigma(\Thetat x_i) - \sigma(\Thetas x_i)) \rnorm^2.
\] 
We continue from the following inequality:
\begin{align*}
\lnorm\Thetat - \Thetas\rnorm^2_F - \lnorm\Thetatt - \Thetas\rnorm^2_F &\geq{} 2 \lr \left( \cE(\Theta^{(t)})  - 4\gamma \err^2  -\frac{1}{\gamma}\lnorm\Thetas - \Thetat\rnorm_F^2 \right)\\ 
&-{} 2 \lr^2 (\err^2 + \lnorm \Thetat  - \Thetas  \rnorm^2_F B^2 ).
\end{align*}
By choosing $t = m$, we get 
\[
 2 \eta_m \cE(\Theta^{(m)}) \leq{} \prn*{1 + \frac{2 \eta_m}{\gamma} + 2 \eta_m^2B^2} \cdot{} \lnorm\Theta^{(m)} - \Thetas\rnorm^2_F + \prn*{8\eta_m \gamma + 2 \eta_m^2} \cdot{} \err^2, 
\]
from which we conclude that 
\[
 \cE(\Theta^{(m)})  \leq{} c\prn*{\frac{1}{\eta_m} + \gamma} \cdot{} \mu^2 .
\]
\qed


        \subsection{\pfref{thm:relu_case}}
	\label{app:learn_relu}
	
Throughout this section of the appendix we overload notation and use
$\sigma(x)=\relu(x)$ for $x\in\bbR$ and $\sigma(x) =
(\relu(x_1),\ldots,\relu(x_d))$ for $x\in\bbR^{d}$.

\subsubsection{Structural results for ReLU prediction error}
\begin{lemma}[\cite{cho2009kernel}]
	\label{lem:gaussian_product}
	Let $\veps \sim \Nc(0, I)$. For vectors $u$ and $v$ denote by $\theta_{u, v}$ the angle between $u$ and $v$. Assume $\theta_{u, v} \in [0, \pi]$, then:
	\[
	\Ex[\sigma(\la u, \veps \ra) \cdot \sigma(\la v, \veps \ra)] = \frac{1}{2\pi}\nrm{u}\nrm{v} \left(\sin{\theta_{u, v}} + (\pi - \theta_{u, v}) \cos{\theta_{u, v}}\right).
	\]	
      \end{lemma}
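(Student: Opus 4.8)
The plan is to reduce the claimed identity to a two-dimensional Gaussian integral and evaluate it in polar coordinates, which is the standard route to the arc-cosine kernel. \emph{First}, I would use that $\sigma=\relu$ is positively homogeneous of degree one: writing $\sigma(\la u,\veps\ra)=\nrm{u}\,\sigma(\la u/\nrm{u},\veps\ra)$ (and dispatching the trivial cases $u=0$ or $v=0$), it suffices to prove the identity for unit vectors $u,v$ and then multiply through by $\nrm{u}\nrm{v}$. \emph{Second}, I would observe that $(\la u,\veps\ra,\la v,\veps\ra)$ is a centered bivariate Gaussian whose law depends only on the Gram matrix $\prn*{\begin{smallmatrix}1 & \cos\theta\\ \cos\theta & 1\end{smallmatrix}}$ with $\theta\ldef\theta_{u,v}$; hence, taking $(g_1,g_2)\sim\Nc(0,I_2)$ and passing to polar coordinates $g_1=r\cos\phi$, $g_2=r\sin\phi$, the angle-addition identity $\cos\theta\cos\phi+\sin\theta\sin\phi=\cos(\phi-\theta)$ gives the distributional equality $(\la u,\veps\ra,\la v,\veps\ra)\overset{d}{=}(r\cos\phi,\ r\cos(\phi-\theta))$.

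With this reduction in hand, the expectation factors as
\[
\Ex[\sigma(\la u,\veps\ra)\sigma(\la v,\veps\ra)] = \frac{1}{2\pi}\prn*{\int_0^\infty r^3 e^{-r^2/2}\,dr}\prn*{\int_{0}^{2\pi}\cos\phi\,\cos(\phi-\theta)\,\mathbbm{1}\{\cos\phi>0,\ \cos(\phi-\theta)>0\}\,d\phi},
\]
and I would evaluate the two factors separately. The radial integral is elementary and equals $2$. For the angular integral, the key observation is that for $\theta\in[0,\pi]$ the set $\{\cos\phi>0\}\cap\{\cos(\phi-\theta)>0\}$ is exactly the wedge $\phi\in(\theta-\tfrac{\pi}{2},\tfrac{\pi}{2})$ of angular width $\pi-\theta$ --- this is the only place the hypothesis $\theta_{u,v}\in[0,\pi]$ enters. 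On this wedge I would apply the product-to-sum identity $\cos\phi\cos(\phi-\theta)=\tfrac12\cos\theta+\tfrac12\cos(2\phi-\theta)$; the constant term contributes $\tfrac12(\pi-\theta)\cos\theta$ and the oscillatory term contributes $\tfrac14[\sin(2\phi-\theta)]_{\theta-\pi/2}^{\pi/2}=\tfrac12\sin\theta$. Multiplying the pieces together yields $\frac{1}{2\pi}\cdot 2\cdot\tfrac12(\sin\theta+(\pi-\theta)\cos\theta)$, and restoring the factor $\nrm{u}\nrm{v}$ gives the claim.

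The steps above are all routine; the only genuine place for error is the geometric bookkeeping in the angular integral --- correctly identifying the wedge over which both ReLU arguments are active and evaluating the oscillatory primitive at its (non-symmetric) endpoints. I would double-check these by testing the extreme cases $\theta=0$ (giving $\tfrac12\nrm{u}\nrm{v}$, i.e.\ $\Ex[\sigma(g)^2]$ for a standard normal $g$) and $\theta=\pi$ (giving $0$, since the two arguments then have disjoint supports), both of which match the stated formula.
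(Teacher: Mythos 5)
Your proposal is correct: the reduction by positive homogeneity to unit vectors, the two-dimensional polar-coordinate representation $(\la u,\veps\ra,\la v,\veps\ra)\overset{d}{=}(r\cos\phi,\,r\cos(\phi-\theta))$, the radial integral equal to $2$, the identification of the active wedge $(\theta-\tfrac{\pi}{2},\tfrac{\pi}{2})$ of width $\pi-\theta$, and the product-to-sum evaluation all check out, and your endpoint sanity checks at $\theta=0$ and $\theta=\pi$ match the formula. The paper, by contrast, does not prove this identity at all: its ``proof'' simply cites Equations (1), (3), and (6) of \cite{cho2009kernel}, where this is the degree-one arc-cosine kernel $k_1(u,v)=\frac{1}{\pi}\nrm{u}\nrm{v}\,J_1(\theta_{u,v})$ with $J_1(\theta)=\sin\theta+(\pi-\theta)\cos\theta$. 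So your route is genuinely different in presentation, though it is essentially the standard derivation underlying the cited result. What the citation buys is brevity; what your self-contained computation buys is a verification of the constant $\frac{1}{2\pi}$ and an explicit demonstration of where the hypothesis $\theta_{u,v}\in[0,\pi]$ is used (it is exactly what makes the intersection of the two half-planes a single wedge of width $\pi-\theta$), which the paper leaves implicit. Either version would be acceptable here; if you wrote yours up, the only steps worth spelling out carefully are the distributional reduction to the plane spanned by $u,v$ and the non-symmetric endpoints of the oscillatory primitive, both of which you already flag.
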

  \begin{proof}
  	From Eq. (1) in~\cite{cho2009kernel}, we have
  	\[
  		\Ex[\sigma(\la u, \veps \ra) \cdot \sigma(\la v, \veps \ra)] = k_1(u, v),
  	\] 
  	where $k_1(u,v)$ is defined in Eq. (3) in~\cite{cho2009kernel} as
  	\[
  	k_1(u, v) = \frac{1}{\pi} \cdot{}\nrm{u} \nrm{v} J_1(\theta_{u,v}),
  	\]
  	with $J_1(\theta) \ldef{}\sin{\theta} + (\pi -
        \theta)\cos{\theta}$ (see \cite{cho2009kernel}, Eq. (6)).
  \end{proof}
\begin{proposition}
	\label{prop:trig}
	Let $u, v \in \Rb^d$, and let $\theta_{u, v}$ be the angle
        between $u$ and $v$. Suppose that $\theta_{u, v} \in [0,
        \pi]$. Then
	\[
\frac{2\theta_{u, v}}{\pi}\nrm{u}\nrm{v}\sin^2{\prn*{\theta_{u, v}/2}} \leq \frac{1}{4}\cdot{}\nrm{u-v}^2.
	\]
\end{proposition}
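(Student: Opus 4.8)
The plan is to reduce the claim to elementary one-variable trigonometry, exploiting that both sides are homogeneous of degree two in $\nrm{u}$ and $\nrm{v}$. Write $a=\nrm{u}$, $b=\nrm{v}$, $\theta=\theta_{u,v}$, and recall $\la u,v\ra=ab\cos\theta$, so that by the law of cosines together with the half-angle identity $1-\cos\theta=2\sin^2(\theta/2)$ one has
\[
  \nrm{u-v}^2 \;=\; a^2+b^2-2ab\cos\theta \;=\; (a-b)^2 + 4ab\,\sin^2(\theta/2).
\]
Discarding the nonnegative square $(a-b)^2$ (equivalently, this is just $a^2+b^2\ge 2ab$) gives the ``$\theta$-free'' inequality $\nrm{u}\nrm{v}\sin^2(\theta/2)\le \tfrac14\nrm{u-v}^2$, which is the crux of the argument.

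It then remains to absorb the scalar prefactor $\tfrac{2\theta_{u,v}}{\pi}$: this is where the hypothesis $\theta_{u,v}\in[0,\pi]$ is used, and is its only role. Over that range the prefactor is bounded, so multiplying the $\theta$-free inequality by it and using the elementary bound on $\theta_{u,v}/\pi$ over $[0,\pi]$ closes the proof. (An equivalent packaging that does not split off the prefactor: set $t=a/b$, clear denominators, and observe that the asserted inequality is precisely the nonnegativity for $t>0$ of an upward parabola in $t$ with positive constant term; since such a parabola is nonnegative iff its discriminant is nonpositive, the whole claim collapses to a single trigonometric inequality in $\theta$ on $[0,\pi]$, verified by a routine monotonicity computation.)

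I do not anticipate a genuine obstacle: the statement is deterministic and the computation is routine. The one thing to watch is not being wasteful in the reduction to scalars — the perfect-square term $(\nrm{u}-\nrm{v})^2$ must be kept (i.e.\ AM--GM applied before anything is dropped), because when $\nrm{u}=\nrm{v}$ the only remaining slack is the factor $\tfrac{2\theta_{u,v}}{\pi}$, which is exactly why the restriction $\theta_{u,v}\le\pi$ is needed and cannot be weakened. Finally, note that no probabilistic input enters here; \pref{lem:gaussian_product} and the Gaussianity of the noise are used only afterwards, when \pref{prop:trig} is combined with the arc-cosine kernel identity to lower bound $\En_{\veps}\bigl[(\relu(\la u,\veps\ra)-\relu(\la v,\veps\ra))^2\bigr]$ by a constant multiple of $\nrm{u-v}^2$.
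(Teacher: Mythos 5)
Your reduction is correct up to the $\theta$-free inequality $\nrm{u}\nrm{v}\sin^2\prn*{\theta_{u,v}/2}\le\tfrac14\nrm{u-v}^2$ (law of cosines, half-angle identity, drop $(\nrm{u}-\nrm{v})^2$), but the final ``absorb the prefactor'' step does not go through: on $[0,\pi]$ the factor $\tfrac{2\theta_{u,v}}{\pi}$ is only bounded by $2$, not by $1$, so multiplying the $\theta$-free inequality by it yields the claim with $\tfrac12\nrm{u-v}^2$ on the right, not $\tfrac14\nrm{u-v}^2$; obtaining $\tfrac14$ this way would require $\tfrac{2\theta_{u,v}}{\pi}\le 1$, i.e.\ $\theta_{u,v}\le\pi/2$. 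This is not a presentational gap that can be patched: the proposition as stated is false. Take $u=-v\ne\zeroes$, so $\theta_{u,v}=\pi$: the left side equals $2\nrm{u}^2$ while the right side equals $\tfrac14\nrm{2u}^2=\nrm{u}^2$. Your own parenthetical discriminant packaging would have exposed this---after clearing denominators the required one-variable condition is $\cos\theta+\tfrac{4\theta}{\pi}\sin^2(\theta/2)\le 1$, which is equivalent to $\theta\le\pi/2$---and your remark that at $\nrm{u}=\nrm{v}$ the only remaining slack is the factor $\tfrac{2\theta_{u,v}}{\pi}$ is precisely why the claim breaks on $(\pi/2,\pi]$ (the needed restriction is $\theta_{u,v}\le\pi/2$, not $\theta_{u,v}\le\pi$).

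For context, the paper's own proof of \pref{prop:trig} has the same defect, introduced by an algebra slip: $\tfrac12\nrm{u}\nrm{v}(1-\cos\theta_{u,v})$ equals $\nrm{u}\nrm{v}\sin^2\prn*{\theta_{u,v}/2}$, not $2\nrm{u}\nrm{v}\sin^2\prn*{\theta_{u,v}/2}$, so it likewise only establishes the $\tfrac12$-version. The downstream use in \pref{lem:gaussian_relu} can still be salvaged, but not through \pref{prop:trig} as stated: what that proof actually needs is $\tfrac{1}{\pi}\nrm{u}\nrm{v}\prn*{\sin\theta_{u,v}-\theta_{u,v}\cos\theta_{u,v}}\le\tfrac14\nrm{u-v}^2$, i.e.\ one should not first apply the lossy bound $\sin\theta/\theta\le 1$. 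By AM--GM this reduces to the scalar inequality $2\sin\theta+(\pi-2\theta)\cos\theta\le\pi$ on $[0,\pi]$, which holds because the left-hand side has derivative $-(\pi-2\theta)\sin\theta$ and is therefore maximized at the endpoints, where it equals $\pi$. With this substitute, the conclusion $\En\brk*{(\relu(\tri{u,\veps})-\relu(\tri{v,\veps}))^2}\ge\tfrac14\nrm{u-v}^2$ of \pref{lem:gaussian_relu} survives (and is tight at $u=-v$), whereas combining the corrected $\tfrac12$-version of the present proposition with the $\sin\theta/\theta\le1$ step would only give a vacuous lower bound of zero.
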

\begin{proof}[\pfref{prop:trig}]
We first expand the square as
\begin{align*}
\frac{1}{4}\cdot{}\nrm{u-v}^2 &= \frac{1}{4} \cdot{} \prn*{\nrm{u}^2 +
                                \nrm{v}^2 - 2\nrm{u} \nrm{v}
                                \cos{\theta_{u, v}}}.
                                \intertext{Next, using the AM-GM
                                inequality, we lower bound by}
 &\geq \frac{1}{2}\cdot{} \prn*{\nrm{u} \nrm{v} - \nrm{u} \nrm{v}
   \cos{\theta_{u, v}}}.
   \intertext{Using the half-angle identity, this is equal to}
  &=2 \nrm{u} \nrm{v} \sin^2{\prn*{\theta_{u, v}/2}}.\\
  \intertext{Finally, we use that $\theta_{u,v}\in\brk*{0,\pi}$:}
  &\geq \frac{2\theta_{u, v}}{\pi}\nrm{u}\nrm{v}\sin^2{\prn*{\theta_{u, v}/2}}.
\end{align*}
\end{proof}
      
\begin{lemma}
  \label{lem:gaussian_relu}
  Suppose that $\veps \sim \Nc(0, \gamma I)$ for some $\gamma >
  0$. Then for any two vectors $u$ and $v$,
	\[
	\Ex[(\sigma(\la u, \veps\ra) - \sigma(\la v, \veps\ra))^2] \geq \frac{\gamma}{4} \|u-v\|^2.
	\]
\end{lemma}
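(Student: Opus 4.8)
The plan is to reduce to the standard-normal case and then combine the two trigonometric facts already established (\pref{lem:gaussian_product} and \pref{prop:trig}). First I would use the positive homogeneity of the ReLU: writing $\veps = \sqrt{\gamma}\,\veps'$ with $\veps'\sim\Nc(0,I)$, we have $\sigma(\la u,\veps\ra) = \sqrt{\gamma}\,\sigma(\la u,\veps'\ra)$, so $\Ex[(\sigma(\la u,\veps\ra)-\sigma(\la v,\veps\ra))^2] = \gamma\,\Ex_{\veps'}[(\sigma(\la u,\veps'\ra)-\sigma(\la v,\veps'\ra))^2]$, and it suffices to prove the claim with $\gamma=1$ and constant $\tfrac14$. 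One may also assume $u,v\neq 0$: if, say, $u=0$, the left-hand side is $\Ex[\sigma(\la v,\veps'\ra)^2]=\tfrac12\|v\|^2\geq\tfrac14\|v\|^2$.

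Next I would expand the square and apply \pref{lem:gaussian_product} three times. Taking the angle to be $0$ gives $\Ex[\sigma(\la u,\veps'\ra)^2]=\tfrac12\|u\|^2$ and likewise for $v$; the cross term gives $\Ex[\sigma(\la u,\veps'\ra)\sigma(\la v,\veps'\ra)] = \tfrac{1}{2\pi}\|u\|\|v\|(\sin\theta + (\pi-\theta)\cos\theta)$ with $\theta=\theta_{u,v}\in[0,\pi]$. Collecting terms and using $\|u-v\|^2 = \|u\|^2+\|v\|^2-2\|u\|\|v\|\cos\theta$, the left-hand side simplifies exactly to
\[
\Ex[(\sigma(\la u,\veps'\ra)-\sigma(\la v,\veps'\ra))^2] = \tfrac12\|u-v\|^2 - \tfrac{1}{\pi}\|u\|\|v\|\,(\sin\theta-\theta\cos\theta).
\]
So it remains to show the subtracted term is at most $\tfrac14\|u-v\|^2$. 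I would bound it using $\sin\theta\leq\theta$ (valid for $\theta\geq 0$), which gives $\sin\theta-\theta\cos\theta \leq \theta(1-\cos\theta) = 2\theta\sin^2(\theta/2)$, and then invoke \pref{prop:trig}, which states precisely that $\tfrac{2\theta}{\pi}\|u\|\|v\|\sin^2(\theta/2)\leq\tfrac14\|u-v\|^2$. Combining, the left-hand side is at least $\tfrac12\|u-v\|^2-\tfrac14\|u-v\|^2 = \tfrac14\|u-v\|^2$, and undoing the scaling reintroduces the factor $\gamma$.

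The computation is short and I do not anticipate a real obstacle; the only nonroutine step is recognizing the clean rewriting above, after which the two supporting results plug in directly and the elementary bound $\sin\theta\leq\theta$ closes the remaining gap. (It is also worth noting in passing that $\sin\theta-\theta\cos\theta\geq 0$ on $[0,\pi]$, since its derivative $\theta\sin\theta$ is nonnegative there and it vanishes at $0$, so the term being bounded is genuinely a nonnegative quantity.)
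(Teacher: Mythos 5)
Your proposal is correct and follows essentially the same route as the paper's proof: rescale to the standard Gaussian, expand the square and apply \pref{lem:gaussian_product}, rewrite the result as $\tfrac12\|u-v\|^2 - \tfrac{1}{\pi}\|u\|\|v\|(\sin\theta-\theta\cos\theta)$, bound via $\sin\theta\leq\theta$ and the half-angle identity, and close with \pref{prop:trig}. The only (harmless) additions are your explicit treatment of the degenerate case $u=0$ or $v=0$ and the remark that the subtracted term is nonnegative.
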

\begin{proof}[\pfref{lem:gaussian_relu}]
  Observe that if $\veps'\sim \Nc(0, I)$ we have
	\[
	\Ex[(\sigma(\la u, \veps\ra) - \sigma(\la v, \veps\ra))^2] = \gamma \cdot \Ex[(\sigma(\la u, \veps'\ra) - \sigma(\la v, \veps'\ra))^2].
      \]
      Thus, going forward we assume $\veps\sim{}\cN(0,I)$ without loss
      of generality. First, observe that
	\[
	\Ex[(\sigma(\la u, \veps\ra) - \sigma(\la v, \veps\ra))^2] = \Ex[( \sigma(\la v, \veps\ra))^2] + \Ex[(\sigma(\la u, \veps\ra))^2] - 2\Ex[\sigma(\la u, \veps\ra) \cdot \sigma(\la v, \veps\ra)].
	\]
	Without loss of generality, we will assume that $\theta_{u, v}
        \in [0, \pi]$, where $\theta_{u,v}$ is the angle between $u$
        and $v$. Using \pref{lem:gaussian_product}, we have the identity
	\[
	\Ex[\sigma(\la u, \veps \ra) \cdot \sigma(\la v, \veps \ra)] = \frac{1}{2\pi}\nrm{u}\nrm{v} \left(\sin{\theta_{u, v}} + (\pi - \theta_{u, v}) \cos{\theta_{u, v}}\right).
	\]
We now lower bound the right-hand side as
	\begin{align*}
          \Ex[(\sigma(\la u, \veps\ra) - \sigma(\la v, \veps\ra))^2] &= \frac{1}{2}\nrm{u}^2 + \frac{1}{2} \nrm{v}^2 -  \frac{1}{\pi}\nrm{u}\nrm{v} \left(\sin{\theta_{u, v}} + (\pi - \theta_{u, v}) \cos{\theta_{u, v}}\right), \\
                                                                     &\overset{(i)}{=} \frac{1}{2}\nrm{u-v}^2 - \frac{1}{\pi}\nrm{u}\nrm{v}(\sin{\theta_{u,v}} - \theta_{u,v}\cos{\theta_{u,v}}),\\
                                                                     &= \frac{1}{2}\nrm{u-v}^2 - \frac{\theta_{u, v}}{\pi}\nrm{u}\nrm{v}\prn*{\frac{\sin{\theta_{u,v}}}{\theta_{u,v}} - \cos{\theta_{u,v}}},\\
                                                                     &\geq	\frac{1}{2}\nrm{u-v}^2 - \frac{\theta_{u,
           v}}{\pi}\nrm{u}\nrm{v}\prn*{1 - \cos{\theta_{u,v}}} \\
                                                                     &\overset{(ii)}{=}	\frac{1}{2}\nrm{u-v}^2 - \frac{2\theta_{u, v}}{\pi}\nrm{u}\nrm{v}\sin^2{\prn*{\theta_{u, v}/2}},
	\end{align*}
        where $(i)$ uses that
        $\tri*{u,v}=\nrm*{u}\nrm*{v}\cos\theta_{u,v}$ and $(ii)$ uses
        the half-angle identity
        $\sin^{2}(\theta/2)=\frac{1-\cos{}\theta}{2}$. Now from
        \pref{prop:trig}, we have
\begin{equation}
\label{eq:lb}
\frac{2\theta_{u, v}}{\pi}\nrm{u}\nrm{v}\sin^2{\prn*{\theta_{u, v}/2}} \leq \frac{1}{4}\nrm{u-v}^2.
\end{equation}
It follows that
	\begin{align*}
\Ex[(\sigma(\la u, \veps\ra) - \sigma(\la v, \veps\ra))^2] &\geq \frac{1}{4}\nrm{u-v}^2.
\end{align*}
\end{proof}
Recall that in the statement of \pref{thm:relu_case} we assume that $\{\veps_i\}_{i=1}^n$ are i.i.d. Gaussian with $\veps_i \sim \Nc(0, I)$. Since
\[
x_{i+1} = \sigma(\Thetas x_i) + \veps_{i},
\]
this implies that $x_{i+1}$ is distributed as $\Nc\left(\sigma(\Thetas
  x_i), I \right)$ conditioned on $\cG_i$. This leads to the following result.
\begin{lemma}
	\label{lem:expectation_prop}
	Define $\mu_i = \sigma(\Thetas x_i)$ and let $c$ be an absolute constant. For any two vectors $u, v$, we have 
	\[
	\Ex[(\sigma(\la u, x_{i+1} \ra) - \sigma(\la v, x_{i+1} \ra))^2\mid\cG_{i}] \geq \frac{1}{4}\cdot{} e^{-\|\mu_i\|^2}\|u-v\|^2.
	\]
      \end{lemma}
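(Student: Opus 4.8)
Since both sides vanish when $u=v$, assume $u\neq v$. The plan is to condition on $\cG_i$, pass to the at-most-two-dimensional subspace $W:=\mathrm{span}\{u,v\}$ on which the integrand is really supported, and there compare the shifted conditional law of $x_{i+1}$ with a rescaled \emph{centered} Gaussian so that \pref{lem:gaussian_relu} applies directly.

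First, conditionally on $\cG_i$ we have $x_{i+1}\sim\Nc(\mu_i,I_d)$ with $\mu_i=\sigma(\Thetas x_i)$, and the map $y\mapsto(\sigma(\la u,y\ra)-\sigma(\la v,y\ra))^2$ depends on $y$ only through its orthogonal projection $Py$ onto $W$, since $\la u,y\ra=\la u,Py\ra$ and $\la v,y\ra=\la v,Py\ra$. Letting $k=\dim W\in\{1,2\}$ and identifying $W$ with $\Rb^{k}$, the law of $Px_{i+1}$ is $\Nc(m,I_k)$ with $m:=P\mu_i$ and $\nrm{m}\leq\nrm{\mu_i}$, so it suffices to lower bound $\En_{z\sim\Nc(m,I_k)}\brk*{(\sigma(\la u,z\ra)-\sigma(\la v,z\ra))^2}$.

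Second, I would bound the density of $\Nc(m,I_k)$ from below pointwise: from $\nrm{z-m}^2\leq 2\nrm{z}^2+2\nrm{m}^2$ one gets, for every $z\in\Rb^k$, that the density of $\Nc(m,I_k)$ at $z$ is at least $2^{-k/2}e^{-\nrm{m}^2}$ times the density of $\Nc(0,\tfrac12 I_k)$ at $z$ (the $2^{-k/2}$ is the ratio of the two normalizing constants). Since the integrand is nonnegative, this gives
\[
\En_{z\sim\Nc(m,I_k)}\brk*{(\sigma(\la u,z\ra)-\sigma(\la v,z\ra))^2}\ \geq\ 2^{-k/2}e^{-\nrm{m}^2}\,\En_{z\sim\Nc(0,\frac12 I_k)}\brk*{(\sigma(\la u,z\ra)-\sigma(\la v,z\ra))^2}.
\]
Applying \pref{lem:gaussian_relu} with $\gamma=\tfrac12$ bounds the last expectation below by $\tfrac18\nrm{u-v}^2$, and since $k\leq 2$ and $\nrm{m}\leq\nrm{\mu_i}$ this yields $\En\brk*{(\sigma(\la u,x_{i+1}\ra)-\sigma(\la v,x_{i+1}\ra))^2\mid\cG_i}\geq \tfrac{1}{16}e^{-\nrm{\mu_i}^2}\nrm{u-v}^2$, which establishes the claim for an absolute constant.

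The one point that has to be handled with care is the order of operations: performing the density comparison \emph{before} restricting to $W$ would cost a factor $2^{-\Theta(d)}$ coming from the dimension-$d$ mismatch of Gaussian normalizing constants, which is fatal; because the integrand only sees the $\leq 2$ coordinates spanning $W$, projecting first makes that factor the harmless $2^{-k/2}$. The remaining slack is only in the absolute constant: sharpening it (e.g.\ toward $\tfrac14$) can be done by instead comparing densities on the half-space $\{z:\la z-m,\,m\ra\geq0\}$, where $\nrm{z-m}^2\leq\nrm{z}^2$ so there is no density loss at all, at the cost of tracking the probability mass that is discarded.
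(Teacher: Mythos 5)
Your proof is correct and follows essentially the same route as the paper's: project onto $\mathrm{span}\{u,v\}$ first, lower bound the shifted Gaussian density by a multiple of the $\Nc\prn*{0,\tfrac12 I}$ density at the cost of a factor $e^{-\|\mu_i\|^2}$, and then invoke \pref{lem:gaussian_relu} with $\gamma=\tfrac12$. Your constant $\tfrac{1}{16}$ is in fact what the paper's own chain of inequalities yields when the constants are tracked faithfully (the stated $\tfrac14$ reflects a harmless bookkeeping slip that only affects absolute constants downstream), so the two arguments coincide up to constants.
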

\begin{proof}[\pfref{lem:expectation_prop}]
	 Let $P_{u, v}(x)$ denote the orthogonal projection of $x$ onto the plane spanned by $u$ and $v$, and let $\wh{u}=P_{u,v}(u)$ and $\wh{v}=P_{u,v}(v)$. Then
	\[
	(\sigma(\la \wh{u}, x_{i+1} \ra) - \sigma(\la \wh{v}, x_{i+1} \ra))^2 = (\sigma(\la \wh{u}, P_{u,v}(x_{i+1}) \ra) - \sigma(\la \wh{v}, P_{u,v}(x_{i+1}) \ra))^2.
	\]
Observe that the distribution of $P_{u,v}(x)$ under $x\sim{}\cN(\mu_i,I_{d\times{}d})$, is equivalent in law to $\Nc(P_{u,v}(\mu_i), I_{2\times{}2})$.
Consequently, we have
	\begin{align*}
		\Ex[(\sigma(\la u, x_{i+1} \ra) - \sigma(\la v, x_{i+1} \ra))^2\mid\cG_{i}]	&= \frac{1}{2 \pi} \cdot{} \int(\sigma(\la \wh{u}, y \ra) - \sigma(\la \wh{v}, y \ra))^2 e^{\frac{-\|y - P_{u,v}(\mu_i)\|^2}{2}}dy, \\
		&\geq \frac{1}{2 \pi} \cdot{} \int(\sigma(\la \wh{u}, y \ra) - \sigma(\la \wh{v}, y \ra))^2 e^{-(\|y\|^2 + \|\mu_i\|^2)}dy,\\
		&= \frac{e^{-\|\mu_i\|^2}}{2} \cdot{}\frac{1}{\pi} \int(\sigma(\la \wh{u}, y \ra) - \sigma(\la \wh{v}, y \ra))^2 e^{-\|y\|^2}dy.
	\end{align*}
        Define a gaussian vector $\veps  \sim \Nc\prn*{0,  \frac{1}{2}I_{2\times{}2}}$.
	Observe that $\Ex[(\sigma(\la \wh{u}, \veps \ra) -
        \sigma(\la \wh{v}, \veps
        \ra))^2]=(1/\pi) \cdot{} \int(\sigma(\la
        \wh{u}, y \ra) - \sigma(\la \wh{v}, y \ra))^2 e^{-\|y\|^2 }
        dy$. Furthermore, from \pref{lem:gaussian_relu} we have:
	\[
          \Ex[(\sigma(\la \wh{u}, \veps \ra) - \sigma(\la \wh{v}, \veps \ra))^2] \geq \frac{1}{4} \cdot{} \|\wh{u}-\wh{v}\|^2=\frac{1}{4}\|u-v\|^2.
	\]
	Together, these inequalities imply that
	\[
	\Ex[(\sigma(\la u, x_{i+1} \ra) - \sigma(\la v, x_{i+1} \ra))^2\mid\cG_{i}] \geq \frac{1}{4} \cdot{} e^{- \|\mu_i\|^2} \|u-v\|^2.
	\]
\end{proof}

\subsubsection{Relating prediction error to parameter error}
To prove parameter convergence for the ReLU we first establish the
following key lemma, which states that with high probability a certain
variant of the prediction error for $\Theta$ is lower bounded by the
parameter recovery error.
\begin{lemma}
  \label{lem:relu_lem}
  Let the assumptions of \pref{thm:relu_case} hold. Then whenever
	\[
	n \geq \frac{c\tau^4d^3}{1-\rho }\log{\left(1 + \frac{\trace(K)}{\delta(1-\rho )}\right)} ,
	\]
	we have that with probability at least $1-\delta$, for all $\Theta^{(1)}, \Theta^{(2)} \in \Mc$ simultaneously,
	\[
	\frac{1}{n}\sum_{i=1}^n \Ex\left[\lnorm \sigma(\Theta\ind{1} x_i) - \sigma(\Theta\ind{2} x_i)\rnorm^2\mid\cG_{i-1}\right] \geq (1/4) \cdot{} e^{\frac{-4\rho \trace(K)}{1-\rho }}\|\Theta\ind{1}- \Theta\ind{2} \|_F^2 - \frac{1}{n}.
	\]
\end{lemma}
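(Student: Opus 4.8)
The plan is to prove the inequality from three ingredients: (i) the pointwise ReLU anti-concentration bound of \pref{lem:expectation_prop}; (ii) the strong upper isometry bound of \pref{lem:strong_ub}, which controls the aggregate ``drift'' $\sum_i\nrm*{\sigma(\Thetas x_{i-1})}^2$; and (iii) convexity of the map $t\mapsto e^{-t}$, which is what turns an \emph{average} of exponentials into the exponential of an \emph{average} in the favorable direction. No discretization of $\weightset$ will be needed, because the only random quantity that actually enters the bound is the drift, which does not depend on $\Theta\ind{1},\Theta\ind{2}$.

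First I would establish the per-step bound. Since the ReLU acts coordinatewise, writing $\theta\ind{1}_k,\theta\ind{2}_k$ for the $k$-th rows of $\Theta\ind{1},\Theta\ind{2}$ we have $\nrm*{\sigma(\Theta\ind{1}x_i)-\sigma(\Theta\ind{2}x_i)}^2=\sum_{k=1}^d(\relu(\tri*{\theta\ind{1}_k,x_i})-\relu(\tri*{\theta\ind{2}_k,x_i}))^2$ and $\nrm*{\Theta\ind{1}-\Theta\ind{2}}_F^2=\sum_k\nrm*{\theta\ind{1}_k-\theta\ind{2}_k}^2$. Conditioned on $\cG_{i-1}$, the noise $\veps_{i-1}$ is $\cN(0,I)$ and independent, so $x_i=\sigma(\Thetas x_{i-1})+\veps_{i-1}$ is $\cN(\mu_{i-1},I)$ with $\mu_{i-1}\ldef\sigma(\Thetas x_{i-1})$ (for $i=1$ this reads $x_1=\veps_0$, $\mu_0=\zeroes$, using $x_0=\zeroes$). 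Applying \pref{lem:expectation_prop} to each row with $u=\theta\ind{1}_k$, $v=\theta\ind{2}_k$ and summing gives, deterministically in $\{x_i\}$ and simultaneously for all $\Theta\ind{1},\Theta\ind{2}\in\weightset$,
\[
\Ex\!\left[\nrm*{\sigma(\Theta\ind{1}x_i)-\sigma(\Theta\ind{2}x_i)}^2\mid\cG_{i-1}\right]\ \geq\ \tfrac14\,e^{-\nrm*{\mu_{i-1}}^2}\,\nrm*{\Theta\ind{1}-\Theta\ind{2}}_F^2 .
\]

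Next I would average over $i=1,\dots,n$ and apply Jensen's inequality to the convex function $t\mapsto e^{-t}$, obtaining $\tfrac1n\sum_{i=1}^n e^{-\nrm*{\mu_{i-1}}^2}\geq\exp\!\big(-\tfrac1n\sum_{i=1}^n\nrm*{\mu_{i-1}}^2\big)$, and then bound the drift. Using $K\psdgeq I$, the $(K,\rho)$-\ges contraction $\nrm*{\sigma(\Thetas x)}_K^2\leq\rho\nrm*{x}_K^2$ from \pref{prop:stability_theta}, and $x_0=\zeroes$, we get $\sum_{i=1}^n\nrm*{\mu_{i-1}}^2=\sum_{i=0}^{n-1}\nrm*{\sigma(\Thetas x_i)}^2\leq\rho\sum_{i=0}^{n-1}x_i^\trn K x_i\leq\rho\sum_{i=1}^n x_i^\trn K x_i$. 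Here \pref{lem:strong_ub} enters: for $n$ as in the lemma statement it yields, with probability at least $1-\delta$, $\sum_{i=1}^n x_i^\trn K x_i\leq \tfrac{4n\trace(K)}{1-\rho}$, hence $\tfrac1n\sum_{i=1}^n\nrm*{\mu_{i-1}}^2\leq\tfrac{4\rho\trace(K)}{1-\rho}$. Chaining the three displays on this event gives
\[
\tfrac1n\sum_{i=1}^n\Ex\!\left[\nrm*{\sigma(\Theta\ind{1}x_i)-\sigma(\Theta\ind{2}x_i)}^2\mid\cG_{i-1}\right]\ \geq\ \tfrac14\,e^{-\frac{4\rho\trace(K)}{1-\rho}}\,\nrm*{\Theta\ind{1}-\Theta\ind{2}}_F^2 ,
\]
which already implies the claimed bound (the additive $-1/n$ is unused slack, convenient for the downstream martingale argument in \pref{app:learn_relu}).

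I do not anticipate a genuine obstacle; the proof is mostly bookkeeping once \pref{lem:expectation_prop} and \pref{lem:strong_ub} are invoked. The one step that must be done carefully is bounding $\nrm*{\mu_{i-1}}^2$ \emph{through the $K$-weighted norm}: the naive Euclidean estimate $\nrm*{\sigma(\Thetas x_{i-1})}^2\leq\nrm*{\Thetas}_{\op}^2\nrm*{x_{i-1}}^2$ would inject a spurious $W^2$ into the exponent, whereas contraction in $\nrm*{\cdot}_K$ (together with $K\psdgeq I$ and \pref{lem:strong_ub}) is exactly what produces the clean exponent $4\rho\Rk$ matching \pref{thm:relu_case}. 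The displayed sample-size requirement is, if anything, more than sufficient for the only probabilistic input, namely \pref{lem:strong_ub}.
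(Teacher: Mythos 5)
Your proof is correct, and its core chain is the same as the paper's single-pair argument: a row-wise application of \pref{lem:expectation_prop}, Jensen's inequality for $t\mapsto e^{-t}$, the Lyapunov bound $\sum_i\|\mu_{i-1}\|^2\le\rho\sum_i x_i^{\trn}Kx_i$ (through the $K$-weighted norm, exactly as you emphasize), and the high-probability bound on $\sum_i x_i^{\trn}Kx_i$ from \pref{lem:strong_ub}. Where you genuinely depart from the paper is uniformity over $\Mc$: the paper proves the bound for a fixed pair and then passes to an $\eps$-net of $\Mc$ with a union bound, which is precisely where the $d^3$ in the sample-size requirement, the Lipschitz discretization errors, and the additive $-1/n$ in the statement come from; you instead observe that the per-step inequality $\Ex[\|\sigma(\Theta\ind{1}x_i)-\sigma(\Theta\ind{2}x_i)\|^2\mid\cG_{i-1}]\ge\tfrac14 e^{-\|\mu_{i-1}\|^2}\|\Theta\ind{1}-\Theta\ind{2}\|_F^2$ is, for the canonical version of the conditional expectation (the conditional law of $x_i$ given $\cG_{i-1}$ is exactly $\cN(\mu_{i-1},I)$ because $\veps_{i-1}$ is independent of $\cG_{i-1}$), a deterministic statement holding simultaneously for all pairs, so the only probabilistic input is the drift bound, which does not depend on $(\Theta\ind{1},\Theta\ind{2})$. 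This is a legitimate and cleaner route: it delivers the lemma without the $-1/n$ and with only an $n\gtrsim d$ requirement, exposing the $d^3$ and the additive slack in the statement as artifacts of the covering argument rather than necessities, and it still suffices for the downstream use in \pref{thm:relu_case}. Two small caveats: the ``for all pairs simultaneously'' step does require fixing the regular-conditional-distribution version of the conditional expectation, which you should say explicitly (this choice is also implicitly needed when the lemma is later applied to the data-dependent iterate $\Theta\ind{t}$); and your closing remark that the displayed sample-size condition is ``more than sufficient'' for \pref{lem:strong_ub} quietly trades the $(1-\rho)^{-2}$ appearing in that lemma's hypothesis for $d^2(1-\rho)^{-1}$, which can fail when $1-\rho\ll d^{-2}$ --- but the paper's own proof makes the identical elision, so this is shared looseness rather than a gap specific to your argument.
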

\begin{proof}
	Fix $\Theta^{(1)}, \Theta^{(2)} \in \Mc$. Let $u_j$ and $v_j$
        denote the $j^{\text{th}}$ rows of $\Theta^{(1)}$ and $\Theta^{(2)}$ respectively. Then 
	\[
	\frac{1}{n} \sum_{i=1}^n \Ex\left[\lnorm \sigma(\Theta^{(1)} x_i) - \sigma(\Theta^{(2)} x_i)\rnorm^2\mid\cG_{i-1}\right] = \frac{1}{n}\sum_{j=1}^d\sum_{i=1}^n \Ex[(\sigma(\la u_j, x_{i} \ra) - \sigma(\la v_j, x_{i} \ra))^2\mid\cG_{i-1}].
	\]
	Note that $\sum_{i=1}^n \|\mu_i\|^2 = \sum_{i=1}^n \|\sigma(\Thetas x_i)\|^2 \leq \sum_{i=1}^n \|\Thetas x_i\|^2$. Since $(\Thetas)^{\trn}K \Thetas \preceq \rho  K$ we have 
	\[
	x_i^{\trn} (\Thetas)^{\trn}K\Thetas x_i \leq \rho  x_i^{\trn} K x_i .
	\]
	Since $K \succeq I$, this implies that $\sum_{i=1}^n \|\Thetas
        x_i\|^2 \leq \rho  \sum_{i=1}^n
        \trace\left(x_i^{\trn}Kx_i\right)$, and consequently
        \pref{lem:strong_ub} guarantees that once  $n \geq \frac{c\tau^4d}{1-\rho }\log{\left(1 +
		\frac{\trace(K)}{\delta(1-\rho )}\right)}$, with probability at least $1-\delta$
	\[
	\sum_{i=1}^n \|\mu_i\|^2 \leq \frac{4n\rho \trace\left(K\right)}{1-\rho }.
	\]
	Now, we know from \pref{lem:expectation_prop} that
        \begin{align*}
	\frac{1}{n}\sum_{i=1}^n \Ex[(\sigma(\la u_j, x_{i} \ra) - \sigma(\la v_j, x_{i} \ra))^2\mid\cG_{i-1}] &\geq \frac{1}{n}\sum_{i=1}^n (1/4)\cdot{} e^{- \|\mu_i\|^2} \|u_j-v_j\|^2 \\&\geq (1/4) \cdot{}\|u_j-v_j\|^2 e^{-\frac{1}{n}\sum_{i=1}^n \|\mu_i\|^2},
        \end{align*}
	where the second inequality is simply Jensen's Inequality
        applied to $x\mapsto{}e^{-x}$. This establishes the result for a single pair $\Theta\ind{1}$,
        $\Theta\ind{2}$. To get a statement which holds simultaneously
        for all $\Theta^{(1)}, \Theta^{(2)} \in \Mc$, we apply a union
        bound over a $\eps$--covering set of $\Mc$. In particular, 
        \begin{align*}
        \frac{1}{n}\sum_{i=1}^n \Ex\brk*{\lnorm \sigma(\Theta^{(1)} x_i) - \sigma(\Theta^{(2)} x_i)\rnorm^2\mid\cG_{i-1}} &\geq{} \frac{1}{n}\sum_{i=1}^n \Ex\brk*{\lnorm \sigma(\Theta_{\epsilon}^{(1)} x_i) - \sigma(\Theta_{\epsilon}^{(2)} x_i)\rnorm^2\mid\cG_{i-1}} \\
        &~~~~-\frac{2}{n}\sum_{i=1}^n \Ex\brk*{\lnorm \sigma(\Theta_{\epsilon}^{(1)} x_i) - \sigma(\Theta^{(1)} x_i)\rnorm^2\mid\cG_{i-1}} \\
        &~~~~ -\frac{2}{n}\sum_{i=1}^n \Ex\brk*{\lnorm \sigma(\Theta_{\epsilon}^{(2)} x_i) - \sigma(\Theta^{(2)} x_i)\rnorm^2\mid\cG_{i-1}},
        \end{align*}
        where $\Theta_{\epsilon}^{(i)}$ denotes an arbitrary element
        of the $\epsilon$-net such that $\nrm*{\Theta_{\epsilon}^{(i)}
          - \Theta^{(i)}} \leq \epsilon$. We may take the
        $\eps$-net to have size at most $\left(1+
          \frac{2W}{\eps}\right)^{d^2}$, and so taking a union bound over
        this covering ensures that for any $\Theta^{(1)}, \Theta^{(2)}
        \in \Mc$, whenever
	\begin{equation}
	n \geq \frac{c\tau^4d^3}{1-\rho }\log{\left(1 + \frac{W\trace(K)}{\eps \delta(1-\rho )}\right)}, \label{eq:n_condition} 
	\end{equation}
	we have with probability at least $1-\delta$,
	\begin{align*}
          \frac{1}{n}\sum_{i=1}^n \Ex\left[\lnorm \sigma(\Theta^{(1)} x_i) - \sigma(\Theta^{(2)} x_i)\rnorm^2\mid\cG_{i-1}\right] &\geq{}  (1/4) \cdot{} e^{\frac{-4\rho \trace(K)}{1-\rho }}\|\Theta^{(1)}_{\epsilon}- \Theta^{(2)}_{\epsilon} \|_F^2 -\frac{16\rho \trace\left(K\right)}{1-\rho } \epsilon^2,
	\end{align*}
        where we have used that
        \begin{align}
        \frac{1}{n}\sum_{i=1}^n \Ex\brk*{\lnorm
          \sigma(\Theta_{\epsilon}^{(i)} x_i) - \sigma(\Theta^{(i)}
          x_i)\rnorm^2\mid\cG_{i-1}} \leq \frac{1}{n}\sum_{i=1}^n
          \Ex\brk*{ \nrm*{\Theta_{\epsilon}^{(i)} - \Theta^{(i)}}^2
          \nrm*{x_i}^2 \mid\cG_{i-1}} \leq
          \frac{4\rho \trace\left(K\right)}{1-\rho } \epsilon^2. \label{eq:lipschitz_const}
        \end{align}
Putting everything together, we have
	\begin{align*}
          \frac{1}{n}\sum_{i=1}^n \Ex\left[\lnorm \sigma(\Theta^{(1)} x_i) - \sigma(\Theta^{(2)} x_i)\rnorm^2\mid\cG_{i-1}\right] &\geq{} (1/4) \cdot{} e^{\frac{-4\rho \trace(K)}{1-\rho }}\|\Theta^{(1)}- \Theta^{(2)} \|_F^2 \\
	&~~~~-{} (1/4) \cdot{} e^{\frac{-4\rho \trace(K)}{1-\rho }} \epsilon^2 -\frac{16\rho \trace\left(K\right)}{1-\rho } \epsilon^2.
\end{align*}
	Choosing $\epsilon =
        (1/n)\left(\frac{16\rho \trace\left(K\right)}{1-\rho }\right)^{-1}$
        gives the desired result. The condition on $n$ in
        \pref{eq:n_condition} comes from invoking the requirement that $n \geq \frac{c\tau^4d}{1-\rho }\log{\left(1 +
		\frac{\trace(K)}{\delta(1-\rho )}\right)}$ for a single
            pair with $\delta' \ldef \delta \left(1+ \frac{2W}{\eps}\right)^{-d^2}$.
\end{proof}

\begin{lemma}
	\label{lem:empirical_conditional}
        Define 
	\[
	S \ldef \frac{1}{n}\left|\sum_{i=1}^{n} \lnorm \sigma(\Thetat x_i) - \sigma(\Thetas x_i)\rnorm^2- \Ex\left[\lnorm \sigma(\Thetat x_i) - \sigma(\Thetas x_i)\rnorm^2\mid\cG_{i-1}\right] \right|,
      \]
      and let $R =  c
      \left(\frac{\trace(K)}{(1-\rho )^2}\right)\log{(nd/\delta)} W^2$,
      where $c$ is a sufficiently large numerical constant. Then for
      any $\lambda\in[0,1/R]$, with probability at least $1-\delta$,
	\[
	S \leq  \frac{\lambda R^2}{W^2}  \nrm{\Thetat- \Thetas}_F^2 + \frac{d^2\left(\log{(1/\delta)} + \log{(1+2n\sqrt{R})}\right)}{n\lambda} + \frac{1}{n}.
	\]
\end{lemma}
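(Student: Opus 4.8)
The plan is to recognize the bracketed quantity in $S$ as a normalized sum of a martingale difference sequence and apply Freedman's inequality (\pref{lem:freedman_lemma}). Fix a matrix $\Theta\in\Mc$ (the extension to the data-dependent iterate $\Thetat$ is postponed to the end), abbreviate $W_i(\Theta)\ldef\lnorm\sigma(\Theta x_i)-\sigma(\Thetas x_i)\rnorm^2$, and set $Z_i\ldef W_i(\Theta)-\Ex\brk*{W_i(\Theta)\mid\cG_{i-1}}$. Under \pref{ass:relu} the variable $x_i$ is $\cG_i$-measurable and $x_i\mid\cG_{i-1}\sim\cN(\sigma(\Thetas x_{i-1}),I)$, so $\crl*{Z_i}$ is a martingale difference sequence with respect to $\crl*{\cG_i}$ and $S=\tfrac1n\abs*{\sum_{i=1}^n Z_i}$. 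Freedman's inequality then controls $S$ by $(e-2)\lambda V/n+\log(1/\delta)/(n\lambda)$ for $\lambda\in[0,1/R]$, where $R$ is an almost-sure bound on $\abs*{Z_i}$ and $V=\sum_i\Ex\brk*{Z_i^2\mid\cG_{i-1}}$; thus the two quantities to pin down are $R$ and a bound on $V$ that is proportional to $\nrm*{\Theta-\Thetas}_F^2$.

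For $R$, I would first establish that with probability at least $1-\delta$ the entire trajectory is bounded, $\max_{i\le n}\nrm*{x_i}^2\le c\tfrac{\trace(K)}{(1-\rho)^2}\log(nd/\delta)$. This follows by unrolling the recursion $\nrm*{x_{i+1}}_K^2\le\tfrac{1+\rho}{2}\nrm*{x_i}_K^2+\tfrac{2}{1-\rho}\veps_i^\trn K\veps_i$ (a consequence of the $(K,\rho)$-\ges property of $\fstar$ and the elementary inequality $\nrm*{a+b}^2\le(1+\alpha)\nrm*{a}^2+(1+\alpha^{-1})\nrm*{b}^2$), which gives $\nrm*{x_i}_K^2\le\tfrac{4}{(1-\rho)^2}\max_{j<i}\veps_j^\trn K\veps_j$, combined with a tail bound for the Gaussian quadratic forms $\veps_j^\trn K\veps_j$ and a union bound over $j\le n$ (using $\nrm*{K}_F,\nrm*{K}_\op\le\trace(K)$); since $K\psdgeq I$ the bound also applies to $\nrm*{x_i}^2$. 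On this event, coordinatewise $1$-Lipschitzness of $\sigma$ and $\nrm*{\Theta-\Thetas}_F\le 2W$ give $0\le W_i(\Theta)\le 4W^2\nrm*{x_i}^2\le R$, and (using $K\psdgeq I$ and $\trace(K)\ge d$) the same bound survives the conditional expectation, so $\abs*{Z_i}\le R$. Freedman demands this deterministically, which is handled in the usual way by truncating the trajectory at the first time it exits the good event, or equivalently by peeling off the sub-exponential contribution of $\nrm*{\veps_{i-1}}^2$.

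For $V$, since $0\le W_i(\Theta)\le R$ we have $\Ex\brk*{Z_i^2\mid\cG_{i-1}}\le\Ex\brk*{W_i(\Theta)^2\mid\cG_{i-1}}\le R\,\Ex\brk*{W_i(\Theta)\mid\cG_{i-1}}$, hence $V\le R\sum_i\Ex\brk*{W_i(\Theta)\mid\cG_{i-1}}$. Using $x_i\mid\cG_{i-1}\sim\cN(\mu_{i-1},I)$ with $\mu_{i-1}\ldef\sigma(\Thetas x_{i-1})$, one bounds $\Ex\brk*{W_i(\Theta)\mid\cG_{i-1}}\le\Ex\brk*{\nrm*{(\Theta-\Thetas)x_i}^2\mid\cG_{i-1}}=\nrm*{(\Theta-\Thetas)\mu_{i-1}}^2+\nrm*{\Theta-\Thetas}_F^2\le\nrm*{\Theta-\Thetas}_F^2(\nrm*{\mu_{i-1}}^2+1)$, and summing while invoking $\sum_i\nrm*{\mu_i}^2\le\tfrac{4n\rho\,\trace(K)}{1-\rho}$ from \pref{lem:strong_ub} (exactly as in \pref{lem:relu_lem}) yields $V\le R\cdot\tfrac{cn\,\trace(K)}{1-\rho}\nrm*{\Theta-\Thetas}_F^2$, which is at most $(R^2/W^2)\,n\,\nrm*{\Theta-\Thetas}_F^2$ once the constant in the definition of $R$ is large enough that $c\,\trace(K)W^2/(1-\rho)\le R$. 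Plugging into Freedman gives, for each fixed $\Theta$, $S\le\tfrac{\lambda R^2}{W^2}\nrm*{\Theta-\Thetas}_F^2+\tfrac{\log(1/\delta)}{n\lambda}$. To pass to the data-dependent iterate $\Thetat$, I would union bound over an $\eps$-net $\cN_\eps$ of $\Mc$ of cardinality at most $(1+2W/\eps)^{d^2}$, replacing $\log(1/\delta)$ by $d^2\log\!\big((1+2W/\eps)/\delta\big)$, and for the nearest net point $\Theta_\eps$ control the discretization error via $\abs*{\nrm*{a}^2-\nrm*{b}^2}\le\nrm*{a-b}(\nrm*{a}+\nrm*{b})$ with $a=\sigma(\Thetat x_i)-\sigma(\Thetas x_i)$ and $b=\sigma(\Theta_\eps x_i)-\sigma(\Thetas x_i)$: this gives $\abs*{W_i(\Thetat)-W_i(\Theta_\eps)}\le 4W\eps\nrm*{x_i}^2\le\eps R/W$ on the good event (similarly for the conditional term and for $\nrm*{\Theta_\eps-\Thetas}_F^2\le 2\nrm*{\Thetat-\Thetas}_F^2+2\eps^2$). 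Choosing $\eps$ an inverse polynomial in $n$ (so that $2W/\eps\le 2n\sqrt{R}$) makes the net cardinality contribute the $d^2\log(1+2n\sqrt{R})$ factor and absorbs all discretization errors into the final $+\,1/n$.

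I expect the main obstacle to be making the Freedman step fully rigorous in spite of its boundedness hypothesis: $W_i(\Theta)$ depends on $x_i\notin\cG_{i-1}$, whose squared norm has only a sub-exponential tail through $\nrm*{\veps_{i-1}}^2$, so one genuinely needs a stopping-time truncation of the trajectory (or a hybrid Freedman/Bernstein argument for sub-exponential increments) rather than an off-the-shelf bounded-difference bound. The other point requiring care is the constant-chasing that collapses the conditional-variance estimate into the clean $\lambda R^2/W^2$ coefficient, which is precisely why $R$ must be inflated by the extra $\log(nd/\delta)/(1-\rho)$ factor relative to the naive $W^2\max_i\nrm*{x_i}^2$ scaling.
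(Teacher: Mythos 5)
Your proposal is correct and follows essentially the same route as the paper's proof: a high-probability almost-sure bound on the trajectory obtained by unrolling the $(K,\rho)$-\ges{} recursion with Gaussian tail bounds, Freedman's inequality applied with $\abs*{Z_i}\leq R$ and a conditional-variance bound of order $nR^{2}\nrm*{\Theta-\Thetas}_F^{2}/W^{2}$, and then an $\eps$-net union bound over $\cM$ with the discretization error absorbed into the $1/n$ term. The only cosmetic differences are that the paper bounds the conditional variance by reusing the almost-sure bound $\nrm*{x_i}^{2}\leq R/W^{2}$ on the good event (rather than your exact conditional-Gaussian computation combined with \pref{lem:strong_ub}), and it handles Freedman's boundedness hypothesis by conditioning on the event $\sup_{i,j}\abs*{\veps_{i,j}}\leq\sqrt{2\log(nd/\delta)}$ rather than a stopping-time truncation.
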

\begin{proof}
  We first prove that the inequality in the lemma holds with
  $\Theta\ind{t}$ and $\Thetastar$ replaced by any pair $\Theta^{(1)},
  \Theta^{(2)}\in\cM$ fixed a-priori. We then establish the lemma by
  extending this result to a uniform bound for all
  $\Theta\ind{1},\Theta\ind{2}\in\cM$ simultaneously, which in
  particular includes $\Theta\ind{t}$ and $\Thetastar$.

  Let $u_j$ and $v_j$ be the
        $j^{\text{th}}$ rows of $\Theta^{(1)}, \Theta^{(2)}$
        respectively, so that we can write
	\[
	S = \frac{1}{n}\left|\sum_{i=0}^{n-1} \sum_{j=1}^d \left((\sigma(\la u_j, x_{i+1} \ra) - \sigma(\la v_j, x_{i+1} \ra))^2- \Ex[(\sigma(\la u_j, x_{i+1} \ra) - \sigma(\la v_j, x_{i+1} \ra))^2\mid\cG_{i}]\right)\right|.
	\]
	To keep notation compact, for each $i$ define $f_{i} \ldef
        \sum_{j=1}^d (\sigma(\la u_j, x_{i} \ra) - \sigma(\la v_j,
        x_{i} \ra))^2$, so that $S = \frac{1}{n}\left|\sum_{i=1}^n f_i
          - \Ex[f_i \mid \cG_{i-1}]\right|$. Furthermore, define $V
        \ldef \sum_{i=1}^n \Ex[f_i^2 \mid \cG_{i-1}] - (\Ex[f_i \mid
        \cG_{i-1}])^2$. Since $\veps_{i, j}$ is Gaussian, we have
        $\sup_{i, j}|\veps_{i, j}| \leq \sqrt{2 \log{(nd/\delta)}}$
        with probability at least $1-\delta$. Conditioning on the
        event $A\ldef\crl*{\sup_{i, j}|\veps_{i, j}| \leq \sqrt{2
          \log{(nd/\delta)}}}$, we still have 
      \[
      	\Ex\brk*{\veps_{i, j} \mid \cG_{i-1}, A} = 0
      \] 
      so we can apply \pref{lem:freedman_lemma} to $S$. To apply the lemma, we first prove an upper bound on the magnitude of
the iterates $\crl*{x_i}_{i=1}^{n}$. Condition on the event
$A$. Using that $(\Thetas)^{\trn} K \Thetas \preceq \rho  K$, we have 
	\begin{align*}
		\nrm{K^{1/2}x_i} &\leq \nrm{K^{1/2}\sigma(\Thetas x_{i-1})} + \nrm{K^{1/2}\veps_{i-1}} \leq 	 \sqrt{\rho}  \nrm{K^{1/2}x_{i-1}}  + \nrm{K^{1/2} \veps_{i-1}}.
	\end{align*}
	It follows that $\nrm{K^{1/2}x_i} \leq \sum_{j=0}^{i-1} \rho^{\frac{i-1-j}{2}} \cdot{} \nrm{K^{1/2} \veps_{j}}$ and since $\nrm{\veps_{i-1}}^2_K \leq 2\trace(K) \log{(nd/\delta)}$ we have for every $i$, 
	\[
	\nrm{K^{1/2}x_i} \leq \frac{\sqrt{2\trace(K) \log{(nd/\delta)}}}{1 - \sqrt{\rho}} \leq 2 \sqrt{\frac{\trace(K) \log{(nd/\delta)}}{(1 - \rho)^2}},
	\]
	where the second inequality follows from the fact that $\frac{1}{1 -\sqrt{\rho}} = \frac{1 + \sqrt{\rho}}{1 -\rho} \leq \frac{2}{1 -\rho}$. Since $K \succeq I$, this implies
	\[
	\nrm{x_i}^2 \leq c\left(\frac{\trace(K)}{(1-\rho)^2 }\right)\log{(nd/\delta)},
      \]
      where $c$ is a numerical constant. Now, using that $\sigma$ is
      the ReLU and that $\sum_{i=1}^d \nrm{u_j}^2
      \leq W^2$, we have $f_i \leq c
      \left(\frac{\trace(K)}{(1-\rho)^2}\right)\log{(nd/\delta)} W^2 \rdef
      R$. We also have that $V \leq \sum_{i=1}^n \Ex[f_i^2 \mid \cG_{i-1}]
      \leq R \sum_{i=1}^n \Ex[f_i \mid \cG_{i-1}]$. Since $\sigma$ is
      the ReLU, we can upper bound this by
	\begin{align*}
		\sum_{i=1}^n \Ex[f_i \mid \cG_{i-1}] \leq \sum_{i=1}^n \sum_{j=1}^d \Ex[ (\la u_j - v_j, x_{i+1} \ra)^2 \mid \cG_{i-1}] \leq c \frac{\trace(K)}{(1-\rho)^2 }\cdot{}n\log{(nd/\delta)} \sum_{j=1}^d \nrm{u_j - v_j}^2. 
	\end{align*}
	We conclude that $V \leq \frac{nR^2}{W^2} \sum_{j=1}^d
        \nrm{u_j - v_j}^2$. Using \pref{lem:freedman_lemma}, it
        follows that for any $\lambda \in [0, 1/R]$ and any $\delta_0>0$, with probability at least $1-\delta_0$,
	\begin{equation}
	S \leq \lambda \frac{R^2}{W^2} \sum_{j=1}^d \nrm{u_j - v_j}^2 + \frac{\log{(1/\delta_0)}}{n\lambda}. \label{eq:freedman_ub}
	\end{equation}
        Since we conditioned on the event $\sup_{i, j}|\veps_{i, j}|
        \leq \sqrt{2 \log{(nd/\delta)}}$, the final bound
        \pref{eq:freedman_ub} holds with probability at least
        $1-\delta_0-\delta$. We set $\delta_0 = \delta$ to obtain the
        final result.
	
	To get a statement which holds simultaneously for all
        $\Theta^{(1)}, \Theta^{(2)} \in \Mc$, we apply a union bound
        over a $\eps$--covering set for $\Mc$. In particular, there
        exists an $\eps$--covering of $\Mc$ of size at most $\left(1+
          \frac{2W}{\eps}\right)^{d^2}$. Taking a union bound, we have for any $\Theta\ind{1}, \Theta\ind{2} \in \Mc$,
	\begin{equation}
	S \leq \lambda \frac{R^2}{W^2}  \nrm{\Theta^{(1)}- \Theta^{(2)}}_F^2 + \frac{d^2\left(\log{(1/\delta)} + \log{(1+2W/\eps)}\right)}{n\lambda} + \sup_{i} \nrm{x_i}\eps \label{eq:eps_net_2}
      \end{equation}
      The coefficient $\nrm{x_i}$ in front of $\eps$ in
      \pref{eq:eps_net_2} is an upper bound on the Lipschitz constant,
      similar to \pref{eq:lipschitz_const}. Since $\nrm{x_i} \leq \sqrt{R}/W$, setting $\epsilon =
        \frac{W}{n\sqrt{R}}$ leads to the desired result:
	\[
	S \leq \lambda \frac{R^2}{W^2}  \nrm{\Theta^{(1)}- \Theta^{(2)}}_F^2 + \frac{d^2\left(\log{(1/\delta)} + \log{(1+2n\sqrt{R})}\right)}{n\lambda} + \frac{1}{n}.
	\]
\end{proof}

\begin{proof}[\pfref{thm:relu_case}]
	Define $\err = c\tau \sqrt{\frac{d\trace{K}}{n(1-\rho )}\log{\Big(\frac{4\trace(K)}{\delta(1-\rho )}+1\Big)}}$ and $B = \frac{ 4\trace(K)}{(1-\rho )}$. Recall from \pref{eq:first_eq1} and \pref{eq:first_eq2} that  
	\begin{align*}
		\lnorm\Thetat - \Thetas\rnorm^2_F - \lnorm\Thetatt - \Thetas\rnorm^2_F &\geq \frac{2 \lr}{n} \sum_{i=1}^n \la x_{i+1} - \sigma(\Thetat x_i), (\Thetas - \Thetat)x_i \ra \\
		&~~~~- \lr^2 \lnorm \frac{1}{n}\sum_{i=1}^n (x_{i+1} - \sigma(\Thetat x_i))x_i^{\trn} \rnorm_F^2. 
	\end{align*}
Recall from the proof of \pref{thm:known_link_main} that
	\begin{equation*}
		\lr^2 \lnorm \frac{1}{n}\sum_{i=1}^n (x_{i+1} - \sigma(\Thetat x_i))x_i^{\trn} \rnorm_F^2 \leq 2 \lr^2 (\err^2 + \cE({\Theta}^{(t)}) B ) \leq 2 \lr^2 (\err^2 +  \nrm{\Thetat - \Thetas}^2_F\cdot{} B^2  ), 
	\end{equation*}
	since $\cE(\Thetat) \leq  \lnorm\Thetat - \Thetas\rnorm^2_F
        B$, and that
	\begin{align*}
		\frac{2}{n} \sum_{i=1}^n \la x_{i+1} - \sigma(\Thetat x_i), (\Thetas - \Thetat)x_i \ra &= \frac{2}{n} \sum_{i=1}^n \la \sigma(\Thetas x_i) - \sigma(\Thetat x_i), (\Thetas - \Thetat)x_i \ra \\
		&~~~~+ \frac{2}{n} \sum_{i=1}^n \la \varepsilon_{i}, (\Thetas - \Thetat)x_i \ra, \\
		&\geq \frac{2}{n} \sum_{i=1}^n \lnorm\sigma(\Thetas x_i) - \sigma(\Thetat x_i)\rnorm^2 + \frac{2}{n} \sum_{i=1}^n \la \varepsilon_{i}, (\Thetas - \Thetat)x_i \ra.
	\end{align*}
        Let $\delta>0$, be given. Define $R =  \frac{c\trace(K)}{(1-\rho)^2}\log{(nd/\delta)} W^2$
        and let $\lambda \in [0, 1/R]$ be a free parameter.
	Using \pref{lem:empirical_conditional}, we are guaranteed that
        with probability at least $1-\delta$,
	\begin{align*}
		\frac{2}{n} \sum_{i=1}^n \lnorm\sigma(\Thetas x_i) - \sigma(\Thetat x_i)\rnorm^2 &\geq \frac{2}{n} \sum_{i=1}^n \Ex\left[\lnorm \sigma(\Thetat x_i) - \sigma(\Thetas x_i)\rnorm^2\mid\cG_{i-1}\right] \\
		&~~~~- \frac{\lambda R^2}{W^2}  \nrm{\Thetat- \Thetas}_F^2 - \frac{d^2\left(\log{(1/\delta)} + \log{(1+2n\sqrt{R})}\right)}{n\lambda} - \frac{1}{n},
	\end{align*}
For any $\gamma>0$, using the AM-GM inequality, we have
	\begin{align*}
		\frac{2}{n} \sum_{i=1}^n \la \varepsilon_{i}, (\Thetas - \Thetat)x_i \ra &\geq - \lnorm\frac{2}{n} \sum_{i=1}^n \varepsilon_{i} x_i^{\trn} \rnorm_F \lnorm\Thetas - \Thetat\rnorm_F \geq - \gamma \lnorm\frac{2}{n} \sum_{i=1}^n \varepsilon_{i} x_i^{\trn} \rnorm^2_F - \frac{1}{\gamma}\lnorm\Thetas - \Thetat\rnorm_F^2. 
	\end{align*}
It follows that
	\begin{align*}
		\frac{2}{n} \sum_{i=1}^n \la x_{i+1} - \sigma(\Thetat x_i), (\Thetas - \Thetat)x_i \ra &\geq \frac{2}{n} \sum_{i=1}^n \Ex\left[\lnorm \sigma(\Thetat x_i) - \sigma(\Thetas x_i)\rnorm^2\mid\cG_{i-1}\right] \\
		&~~~~- \frac{\lambda R^2}{W^2}  \nrm{\Thetat- \Thetas}_F^2 - \frac{d^2\left(\log{(1/\delta)} + \log{(1+2n\sqrt{R})}\right)}{n\lambda}  \\
		&~~~~- \gamma \lnorm\frac{2}{n} \sum_{i=1}^n \varepsilon_{i} x_i^{\trn} \rnorm^2_F - \frac{1}{\gamma}\lnorm\Thetas - \Thetat\rnorm_F^2 -\frac{1}{n}.
	\end{align*}
	From \pref{lem:relu_lem}, we have that once $n$ is
        sufficiently large (in particular, when the conditions of the
        theorem statement hold), with probability at least $1-\delta$,
	\[
	\frac{1}{n}\sum_{i=1}^n \Ex[(\sigma\Thetat x_i) - \sigma(\Thetas x_i))^2\mid\cG_{i-1}] \geq (1/4) \cdot{} e^{\frac{-4\rho \trace(K)}{1-\rho } }\|u-v\|^2 - 1/n.
	\]
	Choosing $\gamma = e^{\frac{4\rho  \trace(K)}{1-\rho } }$ and $\lambda = \frac{W^2 e^{\frac{-4\rho \trace(K)}{1-\rho }}}{16R^2}$ we then have,  
	\begin{align*}
		\frac{2}{n} \sum_{i=1}^n \la x_{i+1} - \sigma(\Thetat x_i), (\Thetas - \Thetat)x_i \ra &\geq c_1 e^{\frac{-4\rho \trace(K)}{1-\rho } }\|\Thetat-\Thetas\|_F^2 - 4c_2 e^{\frac{4\rho  \trace(K)}{1-\rho }}\lnorm\frac{2}{n} \sum_{i=1}^n \varepsilon_{i} x_i^{\trn} \rnorm^2_F \\
		&~~~~- 4c_3 e^{\frac{4\rho  \trace(K)}{1-\rho }}\frac{R^2 d^2\left(\log{(1/\delta)} + \log{(1+2n\sqrt{R})}\right)}{nW^2} - \frac{2}{n}.
	\end{align*}
	for absolute constants $c_1 \geq \frac{1}{\sqrt{2}}, c_2 > 0,
        c_3 > 0$. Next, using \pref{lem:strong_ub}, we are guaranteed
        that with probability at least $1-\delta$, $\lnorm\frac{2}{n}
        \sum_{i=1}^n \varepsilon_{i} x_i^{\trn} \rnorm^2_F \leq
        \err^2$. Using this along with the choice $\lr =
        \frac{e^{\frac{-4\rho \trace(K)}{1-\rho }}}{16B^2}$, we have
	\begin{align*}
		\lnorm\Thetat - \Thetas\rnorm^2_F - \lnorm\Thetatt - \Thetas\rnorm^2_F &\geq  c_1\cdot{} \lr \cdot{} e^{\frac{-4\rho \trace(K)}{1-\rho }}\|\Thetat-\Thetas\|_F^2 - 4c_2\cdot{} \lr \cdot{} e^{\frac{4\rho  \trace(K)}{1-\rho } }\err^2 \\
		&~~~~- 4c_3 \cdot{} \lr \cdot{} e^{\frac{4\rho  \trace(K)}{1-\rho }}\frac{R^2 d^2\left(\log{(1/\delta)} + \log{(1+2n\sqrt{R})}\right)}{nW^2}\\
		&~~~~-2 \lr^2\cdot{} (\err^2 +  \lnorm\Thetat - \Thetas\rnorm^2_F B^2  ),
	\end{align*}
	which simplifies to
	\begin{align*}
		\lnorm\Thetat - \Thetas\rnorm^2_F - \lnorm\Thetatt - \Thetas\rnorm^2_F &\geq  \frac{c_1^2}{8B^2} \cdot{}  e^{\frac{-8\rho \trace(K)}{1-\rho }}\|\Thetat-\Thetas\|_F^2 \\
		&~~~~- \frac{cR^2 d^2\left(\log{(1/\delta)} + \log{(1+2n\sqrt{R})}\right)}{nB^2W^2}.
	\end{align*}
	Let $\alpha \ldef \frac{c_1^2}{8B^2} \cdot{}
        e^{\frac{-8\rho \trace(K)}{1-\rho }}$. Then the preceding equation simplifies to
	\begin{align*}
		\lnorm\Thetatt - \Thetas\rnorm^2_F  &\leq (1 -\alpha) \cdot \lnorm\Thetat - \Thetas\rnorm^2_F   + \frac{cR^2 d^2\left(\log{(1/\delta)} + \log{(1+2n\sqrt{R})}\right)}{nB^2W^2}.
	\end{align*}
Applying this inequality recursively yields
	\[
	\lnorm\Thetat - \Thetas\rnorm^2_F \leq W^2 (1 - \alpha)^{t} + c \cdot{} e^{\frac{8\rho \trace(K)}{1-\rho } } \cdot{} \frac{R^2 d^2\left(\log{(1/\delta)} + \log{(1+2n\sqrt{R})}\right)}{nW^2}.
	\]
	We conclude that whenever 
	\[
	t \geq 8cB^2 e^{\frac{8\rho \trace(K)}{1-\rho }} \log{\left(\frac{nW^4}{R^2 d^2\left(\log{(1/\delta)} + \log{(1+2n\sqrt{R})}\right)}\right)} \vee 1,\]
	we have
	\[
	\lnorm\Thetat - \Thetas\rnorm^2_F \leq 2c \cdot{} e^{\frac{8\rho \trace(K)}{1-\rho }} \cdot{} \frac{R^2 d^2\left(\log{(1/\delta)} + \log{(1+2n\sqrt{R})}\right)}{nW^2} .
	\]
\end{proof}


\end{document}